\documentclass[10pt,twocolumn,letterpaper]{article}

\usepackage{cvpr}              %

\usepackage{float}
\newtheorem{theorem}{Theorem}

\newtheorem{lemma}[theorem]{Lemma}

\usepackage{url}
\usepackage{footnote}
\usepackage[bottom]{footmisc}
\makeatletter
\g@addto@macro{\UrlBreaks}{\UrlOrds}
\makeatother
\usepackage{makecell}

\usepackage{epsfig}
\usepackage{graphicx}
\usepackage{amsmath}
\usepackage{amssymb}
\usepackage{bbm}
\usepackage{soul}

\usepackage{float}
\usepackage{pifont}%

\usepackage{xspace}

\newcommand{\method}{$\phi$-DPO\xspace}

\usepackage{multirow}
\usepackage{color, colortbl}
\usepackage{booktabs}
\usepackage{algorithm, algpseudocode}
\usepackage{tcolorbox}
\usepackage{tabularx}
\usepackage{arydshln}
\usepackage{caption}

\usepackage{dblfloatfix}

\usepackage{tikz}
\usetikzlibrary{bayesnet}
\usetikzlibrary{arrows}
\usepackage{float}

\usepackage{wrapfig}

\definecolor{main}{HTML}{5989cf}    %
\definecolor{sub}{HTML}{cde4ff}     %
\definecolor{greyborder}{HTML}{808080}
\definecolor{lightgrey}{HTML}{D3D3D3}

\tcbset{
    sharp corners,
    colback = white,
    before skip = 0.2cm,    %
    after skip = 0.5cm      %
}                           %

\newtcolorbox{boxA}{
    fontupper = \bf,
    boxrule = 1.5pt,
    colframe = black %
}

\newtcolorbox{boxB}{
    fontupper = \bf\color{main}, %
    boxrule = 1.5pt,
    colframe = main,
    rounded corners,
    arc = 5pt   %
}

\newtcolorbox{boxC}{
    colback = sub, %
    boxrule = 0pt  %
}

\newtcolorbox{boxD}{
    colback = lightgrey, 
    colframe = greyborder, 
    boxrule = 0pt, 
    toprule = 3pt, %
    bottomrule = 3pt %
}

\newtcolorbox{boxE}{
    enhanced, %
    boxrule = 0pt, %
    borderline = {0.75pt}{0pt}{main}, %
    borderline = {0.75pt}{2pt}{sub} %
}

\newtcolorbox{boxF}{
    colback = sub,
    enhanced,
    boxrule = 1.5pt, 
    colframe = white, %
    borderline = {1.5pt}{0pt}{main, dashed} %
}

\newtcolorbox{boxG}{
    enhanced,
    boxrule = 0pt,
    colback = sub,
    borderline west = {1pt}{0pt}{main}, 
    borderline west = {0.75pt}{2pt}{main}, 
    borderline east = {1pt}{0pt}{main}, 
    borderline east = {0.75pt}{2pt}{main}
}

\newtcolorbox{boxH}{
    colback = sub, 
    colframe = main, 
    boxrule = 0pt, 
    leftrule = 6pt %
}

\newtcolorbox{boxI}{
    colback = sub, 
    colframe = main, 
    boxrule = 0pt, 
    toprule = 6pt %
}

\newtcolorbox{boxJ}{
    sharpish corners, %
    colback = sub, 
    colframe = main, 
    boxrule = 0pt, 
    toprule = 4.5pt, %
    enhanced,
    fuzzy shadow = {0pt}{-2pt}{-0.5pt}{0.5pt}{black!35} %
}

\newtcolorbox{boxK}{
    sharpish corners, %
    boxrule = 0pt,
    toprule = 4.5pt, %
    enhanced,
    fuzzy shadow = {0pt}{-2pt}{-0.5pt}{0.5pt}{black!35} %
}

\newtcolorbox{boxL}{
    fontupper = \color{main},
    rounded corners,
    arc = 6pt,
    colback = sub, 
    colframe = main!50, 
    boxrule = 0pt, 
    bottomrule = 4.5pt 
}

\newtcolorbox{boxM}{
    fontupper = \color{white},
    rounded corners,
    arc = 6pt,
    colback = main!80, 
    colframe = main, 
    boxrule = 0pt, 
    bottomrule = 4.5pt,
    enhanced,
    fuzzy shadow = {0pt}{-3pt}{-0.5pt}{0.5pt}{black!35}
}

\definecolor{cvprblue}{rgb}{0.21,0.49,0.74}
\usepackage[pagebackref,breaklinks,colorlinks,allcolors=cvprblue]{hyperref}

\def\paperID{} %
\def\confName{CVPR}
\def\confYear{2026}

\title{$\phi$-DPO: Fairness Direct Preference Optimization Approach to \\ Continual Learning in Large Multimodal Models
\vspace{-5mm}
}

\author{
Thanh-Dat Truong$^{1}$, Huu Thien Tran$^{1}$, Jackson Cothren$^{2}$, Bhiksha Raj$^{3}$, Khoa Luu$^{1}$\\
$^{1}$CVIU Lab, University of Arkansas, USA \quad
$^{2}$Dep. of  Geosciences, University of Arkansas, USA \\
$^{3}$Carnegie Mellon University, USA  \\
\tt\small \{tt032, ht035, jcothre, khoaluu\}@uark.edu, bhiksha@cs.cmu.edu\\
\small{\url{http://uark-cviu.github.io/projects/Fai-DPO}}
\vspace{-4mm}
}

\begin{document}

\maketitle

\begin{abstract}

Fairness in Continual Learning for Large Multimodal Models (LMMs) is an emerging yet underexplored challenge, particularly in the presence of imbalanced data distributions that can lead to biased model updates and suboptimal performance across tasks. While recent continual learning studies have made progress in addressing catastrophic forgetting, the problem of fairness caused by imbalanced data remains largely underexplored. This paper presents a novel Fairness Direct Preference Optimization (FaiDPO or $\phi$-DPO) framework for continual learning in LMMs. In particular, we first propose a new continual learning paradigm based on Direct Preference Optimization (DPO) to mitigate catastrophic forgetting by aligning learning with pairwise preference signals. Then, we identify the limitations of conventional DPO in imbalanced data and present a new $\phi$-DPO loss that explicitly addresses distributional biases. We provide a comprehensive theoretical analysis demonstrating that our approach addresses both forgetting and data imbalance. Additionally, to enable $\phi$-DPO-based continual learning, we construct pairwise preference annotations for existing benchmarks in the context of continual learning. Extensive experiments and ablation studies show the proposed $\phi$-DPO achieves State-of-the-Art performance across multiple benchmarks, outperforming prior continual learning methods of LMMs.
\end{abstract}

\section{Introduction}
\label{sec:intro}

Large multimodal models (LMMs) have shown their strong performance as general-purpose assistants in various visual learning tasks \cite{liu2024visual, liu2024improved, liu2024llavanext, bai2023qwen, li2022blip, li2023blip, deitke2025molmo, nguyen2024insect}.
The success of LMMs typically relies on supervised finetuning on carefully curated, large-scale multi-task datasets.
In practical deployment, LMMs often suffer from performance degradation when encountering novel knowledge, tasks, or shifts in data distribution. However, fully retraining large models to incorporate new knowledge and capabilities is computationally expensive and time-consuming. Meanwhile, direct fine-tuning on the new datasets may result in a performance drop for previously learned tasks \cite{zhao2025mllm}. This phenomenon is called the catastrophic forgetting problem.
In addition, although recent Retrieval-Augmented Generation (RAG) improves the contextual understanding of LMMs for more accurate responses \cite{jaiswal2025multimodal, yu2024visrag, huybrechts2025document, tanaka2025vdocrag, serna2026nico}, RAG-LMM systems struggle with distribution shifts and novel tasks. Since the retrieval only augments the input without updating model parameters, the internal knowledge representations remain unchanged.
Therefore, to ensure the reliable performance of the LMMs in a dynamic and adaptive environment, it is crucial to develop a \textbf{Continual Learning paradigm for LMMs} (Figure \ref{fig:highlight}) so that the models can incrementally acquire new information and skills while preserving the knowledge learned previously.

\begin{figure}
    \centering
    \includegraphics[width=1.0\linewidth]{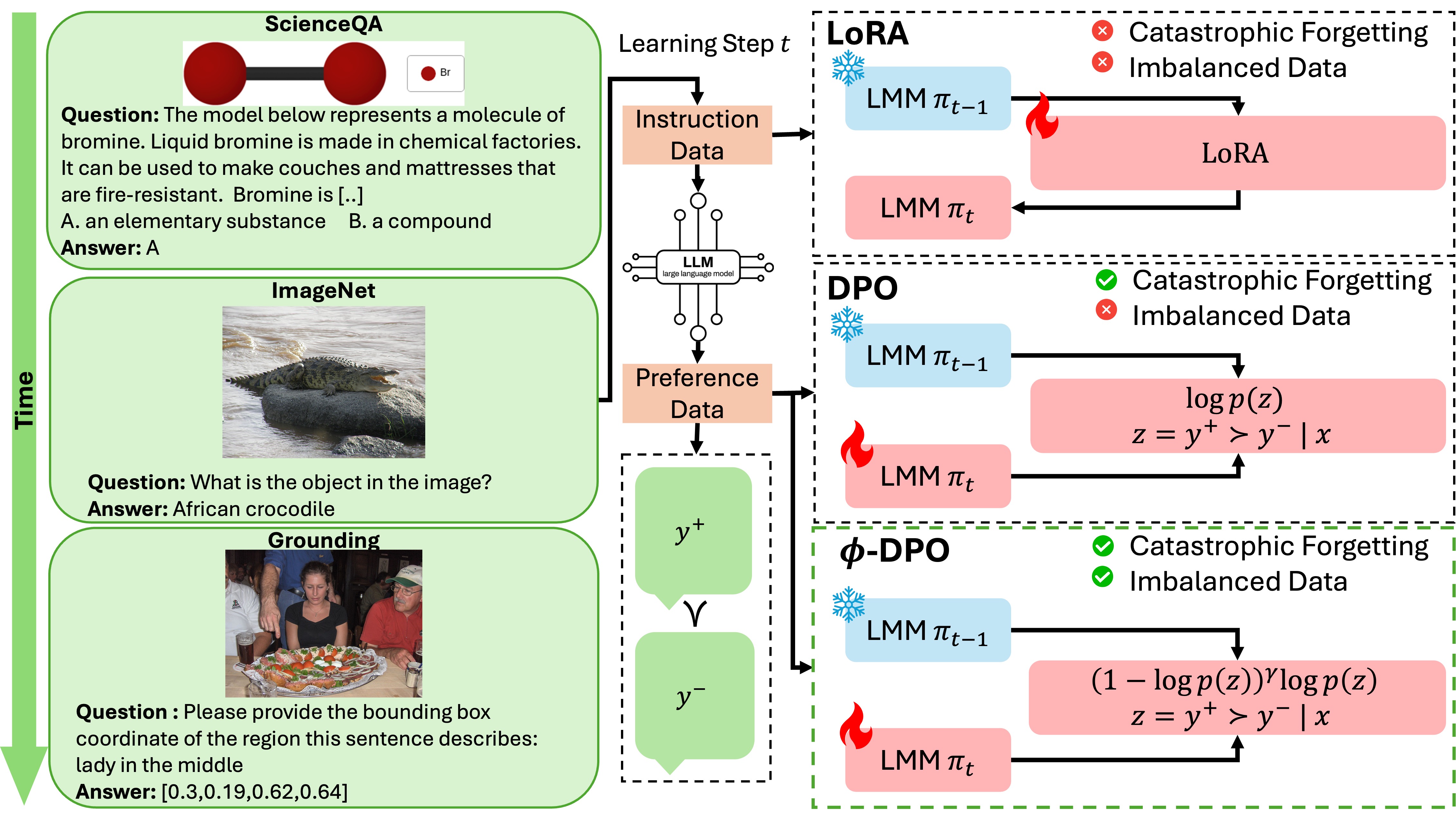}
    \vspace{-6mm}
    \caption{
    Our \textbf{Fairness DPO ($\phi$-DPO) approach to Continual Learning in LMMs}.
    Prior continual learning methods, \eg, \textbf{LoRA}, struggle under \textbf{imbalanced multimodal data} and suffer from \textbf{catastrophic forgetting}. The vanilla DPO is still influenced by the \textbf{imbalanced data distributions}. Our $\phi$-DPO approach can (1) mitigate forgetting, (2) adapt continuously to new learning tasks, and (3) maintain robustness under data imbalance. 
    }
    \label{fig:highlight}
    \vspace{-6mm}
\end{figure}

In continual learning, two primary challenges are identified: (1) Catastrophic Forgetting and (2) Fairness. 
Fairness in LMMs is particularly crucial for real-world deployment, since the biased or inconsistent behaviors can result in unequal outcomes and reduce trustworthiness, especially in human-centric applications.
Recent studies of continual learning in LMMs \cite{chen2024coin, zhao2025mllm} have introduced several methods to address the catastrophic forgetting problem. 
However, \textbf{fairness remains a fundamental issue in the continual learning of LMMs that has been unexplored}. 

\begin{figure}[!t]
    \centering
    \includegraphics[width=1.0\linewidth]{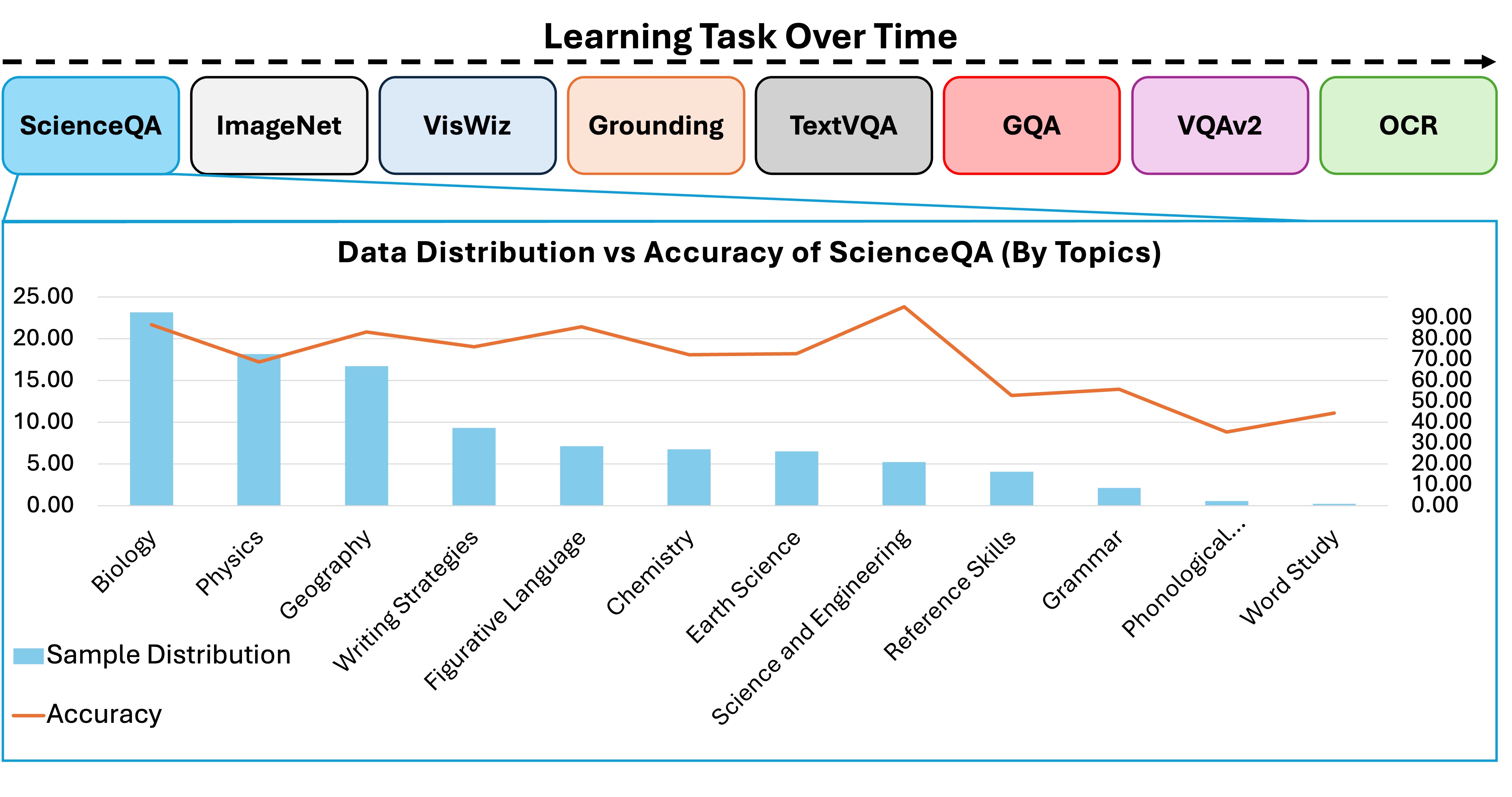}
    \vspace{-8mm}
    \caption{
    \textbf{The Imbalanced Distribution of Multimodal Continual Learning Benchmarks.} 
    The distribution of samples across ScienceQA topics is highly skewed, \ie categories with fewer training examples (\eg Grammar, Phonological Awareness, Word Study) exhibit significantly lower accuracy, while topics with richer data (\eg Biology, Physics) achieve stronger performance.
    }
    \label{fig:data-distribution}
    \vspace{-6mm}
\end{figure}

As shown in Figure~\ref{fig:data-distribution}, multimodal datasets often exhibit significant topic imbalance, introducing bias during each incremental task and resulting in skewed performance. This imbalance poses two key challenges: (1) Representation Alignment and (2) Adaptability and Forgetting. Unlike traditional LMMs trained on diverse modality pairs in a single stage, continual multimodal learning proceeds sequentially, requiring incremental alignment. Under imbalanced conditions, this leads to biased gradient updates across tasks, groups, or domains (see Section~\ref{sec:method}).
For example, Figure~\ref{fig:data-shift} illustrates the progression of training across tasks, starting with ScienceQA, which focuses on structured visual reasoning through diagram-text alignment. The subsequent Grounding task redirects attention toward object-level localization, whereas OCR-VQA centers on fine-grained text extraction. These tasks exhibit distinct visual distributions, language prompts, and alignment objectives, ultimately leading to modality imbalance and a degradation of prior representation alignment.
As each task may be dominated by a particular modality or semantic class, gradient updates often favor current majority signals, undermining prior alignment and degrading performance on earlier tasks. These imbalances undermine representation alignment and may exacerbate catastrophic forgetting.
Moreover, prior continual learning studies have focused on unimodal settings \cite{cermelli2020modelingthebackground, douillard2021plop, sats_prj_2023, truong2025falcon, truong2023fairness}, whereas LMMs introduce new complexity due to their multimodal settings.
In this context, biased data becomes a critical issue, as the imbalance in multi-modalities or semantic classes not only exaggerates catastrophic forgetting but also limits the adaptability of LMMs to new tasks or knowledge.

\begin{figure}[!t]
    \centering
    \includegraphics[width=1.0\linewidth]{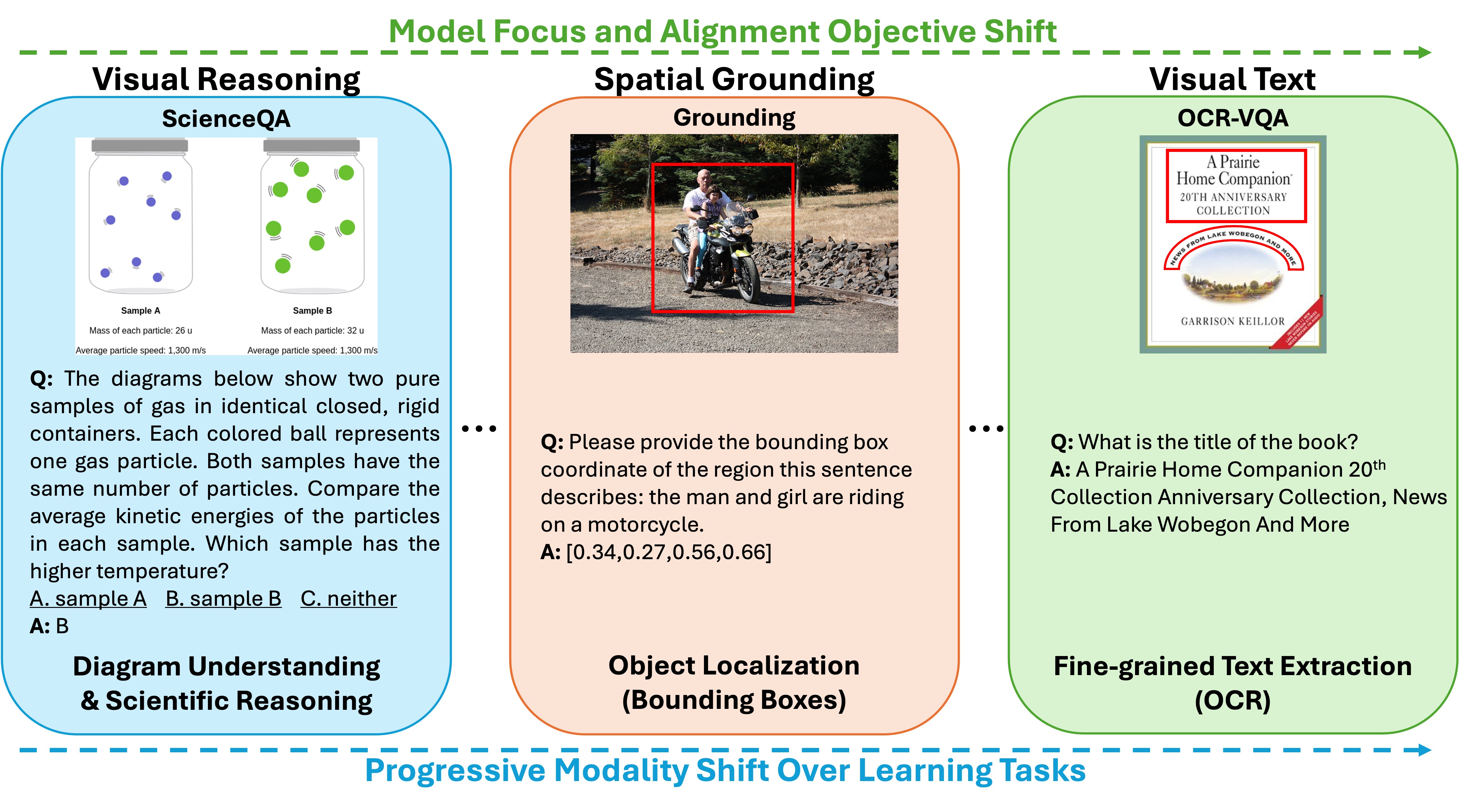}
    \vspace{-8mm}
    \caption{ScienceQA, Grounding, and OCR-VQA introduce progressively shifting visual distributions and alignment objectives, creating modality imbalance across tasks.}
    \label{fig:data-shift}
    \vspace{-6mm}
\end{figure}

While data imbalance poses critical challenges, current methods remain inadequate in addressing this problem. 
Low-rank Adaptation (LoRA) \cite{hu2022lora} is widely used in continual learning for LMMs \cite{chen2024coin, guo2025hidellava, zhao2025mllm}. Although it preserves the frozen backbone, its adapters can inherit dataset bias, leading to gradient updates skewed toward majority semantic classes \cite{chang2025balora, wang2024exploring}. Moreover, LoRA does not inherently mitigate bias propagation \cite{chang2025balora} and is prone to catastrophic forgetting, especially when adapters are shared or new ones induce representation drift \cite{zhang2025c, han2024slim}.
Meanwhile, Knowledge Distillation has also been widely adopted in unimodal continual learning \cite{cermelli2020modelingthebackground, douillard2021plop, sats_prj_2023}. However, it remains limited in multimodal contexts. LMMs often encode demographic and distributional biases from their large-scale pretraining, which distillation can transfer or amplify in the student model \cite{zhang2023not, shi2024densely}. Imitating biased teacher outputs could lead to suboptimal and biased predictions \cite{zhang2023not}. Under imbalance, majority data dominate distillation gradients, lowering generalization to tail classes \cite{truong2023fairness, truong2025falcon}. In addition, while distillation aligns output probabilities, it fails to preserve internal representations critical for maintaining prior knowledge, particularly in LMMs where knowledge spans multiple modalities and layers \cite{aguilar2020knowledge, tian2019contrastive}.

\noindent
\textbf{Contributions.}
This work proposes a novel Fairness Direct Preference Optimization (FaiDPO or $\phi$-DPO) approach to Continual Learning in LMMs.
Our contributions can be summarized as follows.
First, we introduce a new continual learning paradigm, Direct Preference Optimization (DPO), which addresses the catastrophic forgetting problem in continual learning.
Second, by analyzing the limitations of traditional DPO, we present a new Fairness DPO loss to address the fairness problem caused by imbalanced data.
We provide a comprehensive theoretical analysis to show that our proposed approach can address both catastrophic forgetting and imbalanced data.
Third, to support the DPO learning in our framework, we contribute the DPO labels for current continual learning benchmarks.
Finally, our intensive experiments and ablation studies have illustrated the effectiveness and State-of-the-Art performance (SoTA) of our approach compared to prior continual learning methods.

\section{Related Work}
\label{sec:related}
{
}
\noindent
\textbf{Large Multimodal Models.}
Early advances in large language models (LLMs) \cite{li2025llama, vicuna2023, achiam2023gpt, siino2024mcrock, bai2023qwen} have driven rapid progress in LMMs. 
Recent models span across vision-language \cite{liu2024visual, liu2024improved, truong2025directedtokens, truong2025insect, tran2025bima, truong2025mango}, video-language \cite{weng2024longvlm, zhao2023learning, lin2023video, li2023videochat, nguyen2025hyperglm}, and audio-language domains \cite{hussain2023m, ghosal2023text}, with LMMs playing a central role in scaling multimodal understanding.
{%
}
Early work by \cite{alayrac2022flamingo} effectively bridged vision and language modalities in a few-shot learning setting, followed by \cite{li2023blip}, which enhanced inter-modality connectivity via Q-Former. 
This was further developed into an instruction-aware model within the vision-language instruction-tuning framework \cite{liu2024visual}.
LLaVA \cite{liu2024visual} established a streamlined visual-to-language space projection using a linear layer, later refined by \cite{liu2024improved} with an MLP and AnyRes, a technique adept at handling high-resolution images. Subsequent studies \cite{liu2024llavanext, li2024llavanext-strong, zhang2024llavanextvideo, li2024llavanext-ablations, li2024llavanext-interleave} contributed further improvements, culminating in a robust model \cite{li2024llava1vision} capable of handling diverse vision tasks. 
These multimodal models are finding real-world use across a variety of domains. For instance, they have been adapted for biomedical analysis \cite{li2024llava}, improved through multimodal federated learning \cite{chenfedmbridge}, and applied to 3D point cloud understanding \cite{yang2023lidar, xu2024pointllm, Liu2024Uni3DLLMUP}.
Recent work \cite{bai2023qwenvl, wang2025internvl35, shen2025skyworkr1v3} with more effective training techniques and better architectures, serve as a stepping stone for a wide range of more advanced research efforts aimed at developing generalist LMMs.

\noindent
\textbf{Continual Learning.} 
The topic of continual learning has evolved through several core paradigms, each approaching the stability-plasticity dilemma from a unique angle. 
The field has largely centered around rehearsal-base approaches~\cite{lavda2018continual, buzzega2020dark}, regularization-based methods~\cite{truong2023fredom, truong2024eagle, truong2021bimal, kirkpatrick2017overcoming, li2017learning, nguyen2026cobra}, structure-based strategies~\cite{truong2023fairness, truong2025falcon, mallya2018piggyback, douillard2022dytox}, and prompt-based methods~\cite{wang2022learning, smith2023coda}.
In parallel, continual learning for large language models has rapidly become a focal point of recent work~\cite{shi2024continual_survey}.
Depending on where adaptation occurs, current efforts can be categorized into three major stages:
continual pre-training~\cite{jang2022temporalwiki, cossu2024continual}, continual instruction tuning~\cite{razdaibiedina2023progressive, zan2022cert, yin2022contintin, wang2023orthogonal}, and continual alignment~\cite{zhang2024cppo, suhr2023continual}.
Meanwhile, for LMMs, progress remains relatively limited~\cite{chen2024coin, zeng2024modalprompt, cao2024continualllava, guo2025comprehensive, guo2025hidellava, zhao2025mllm}.
Chen \etal \cite{chen2024coin} introduced one of the first systematic benchmarks for continual instruction tuning of LMMs. 
Building on this, Zeng \etal \cite{zeng2024modalprompt} proposed a dual-modality guided prompt framework to improve efficiency and stability. 
Cao \etal \cite{cao2024continualllava} and Guo \etal \cite{guo2025hidellava} explored hierarchical and modular strategies to better preserve multimodal representations over sequential updates.
Chen \etal \cite{chen2025sefe} and Zhao \etal \cite{zhao2025mllm} further attempted to mitigate forgetting and enhance continual adaptability, while Lin \etal \cite{lin2025continual} proposed a novel paradigm of sparse memory fine-tuning.
While prior continual learning studies in unimodal learning has taken fairness into consideration \cite{truong2023fairness, truong2025falcon}, there are limited studies addressing this problem in LMMs.
Unlike prior methods, \textit{we propose to address the catastrophic forgetting problem via a new continual learning paradigm and achieve fairness} under imbalanced data settings.

\section{The Proposed $\phi$-DPO Approach}
\label{sec:method}

Given a large multimodal model $\pi$, continual learning involves incrementally learning it on a sequence of datasets $\mathcal{D}\!=\!\{\mathcal{D}_1, ..., \mathcal{D}_T\}$, where $T$ is the number of learning steps.
For each learning step $t$, $\mathcal{D}_t\!=\!\{x^j, y^j\}_{j=1}^{|\mathcal{D}_i|}$ constraints $|\mathcal{D}_i|$ instruction data, where $x^j\!=\!(x^j_\mathrm{img}, x^j_\mathrm{ins})$ consists of a pair of an image 
$x^j_\mathrm{img}$ and a textual instruction $x^j_\mathrm{ins}$, and $y^j$ is the corresponding answer. 
Formally, learning the LMM $\pi_t$ at step $t$ on dataset $\mathcal{D}_t$ can be formed as follows:
\begin{equation}\label{eqn:general-cl-mllm}
\footnotesize
    \pi^*_t = \arg\!\max_{\pi_t} \mathbb{E}_{x, y \in \mathcal{D}_t} \log p(y | x) + D_\mathrm{Forget}(\pi_t \| \pi_{t-1}) 
\end{equation}
where $\log p(y | x)$ is the supervised fine-tuning loss on the instruction data,
$D_\mathrm{Forget}(\pi_t \| \pi_{t-1})$ is the forgetting mitigation that prevents the current LMM $\pi_t$ drifted away from the previous learned LMM $\pi_{t-1}$, \ie, avoid forgetting.

Prior continual learning studies commonly adopt knowledge distillation \cite{cermelli2020modelingthebackground, douillard2021plop} to mitigate the forgetting. However, as shown in Sec.~\ref{sec:intro}, the traditional knowledge distillation may exaggerate the bias and cause catastrophic forgetting, especially in the context of multimodal learning.
Several recent studies adopt contrastive clustering \cite{truong2025falcon, truong2023fairness} to model catastrophic forgetting. Although it has yielded promising results in unimodal problems, defining clusters in multimodal settings is infeasible. Thus, to address this problem, our approach adopts Reinforcement Learning from Human Feedback (RLHF) to model the forgetting.

Formally, let $r(x, y)$ be the reward model to evaluate the forgetting and adaptability level of $\pi_t$, \ie, the higher $r(x, y)$, the better memory retained and adaptability. Then, to model catastrophic forgetting ($D_\mathrm{Forget}$), our continual learning can be reformulated under an RLHF perspective as in Eqn. \eqref{eq:rlhf_constrained}.
\begin{equation}
\footnotesize
\label{eq:rlhf_constrained}
\begin{split}
\pi_t^\star \;=\; \arg\!\max_{\pi_t}\; \mathbb{E}_{x\sim \mathcal{X}_t}\,\mathbb{E}_{y\sim \pi_t(\cdot\mid x)}\big[ r(x,y)\big] \\
\quad\text{s.t.}\quad
D_{\mathrm{KL}}\!\big(\pi_t(\cdot\mid x)\,\big\|\,\pi_{t-1}(\cdot\mid x)\big)\le \delta,
\end{split}
\end{equation}
where $\pi_{t-1}$ is the reference (previous learning step) policy, $D_\textrm{KL}(\pi_t \| \pi_{t-1})$ is the KL divergence to measure the difference of predictions between previous and current LMMs, $\delta$ is the threshold to constraint the policy update. 

Although learning Eqn.~\eqref{eq:rlhf_constrained} can be achieved via Proximal Policy Optimization Algorithms (PPO), it still presents two major challenges.
First, learning the reward model $r$ requires the corresponding training data. In addition, in the context of continual learning, it may require incrementally learn the reward model $r$ at each learning step, which is not feasible.
Second, learning the reward model via PPO on the imbalanced data may lead to biased predictions produced by the model \cite{schulman2017proximal}.
To address these challenges, inspired by \cite{rafailov2023direct}, we propose modeling RLHF reward in continual learning via Direct Preference Optimization. We introduce a new Fairness DPO loss to address fairness modeling.

\subsection{Direct Preference Optimization to Continual Learning in LMMs}

\subsubsection{DPO as Continual Learning}

Inspired by \cite{rafailov2023direct}, learning the LMM at learning step $t$ of Eqn. \eqref{eq:rlhf_constrained} via RLHF can be rewritten using the Lagrangian multiplier as follows:
\begin{equation}\label{eq:rlhf-beta}
\footnotesize
\begin{split}
    \pi_t^\star = \max_{\pi_t}\; \mathbb{E}_{x, y\sim \pi_t}[r(x,y)] - \beta D_{\mathrm{KL}}\!\big(\pi_t(\cdot\mid x)\,\big\|\,\pi_{t-1}(\cdot\mid x)\big)
\end{split}
\end{equation}
where $\beta$ is the Lagrangian multiplier that controls the divergence of the LMM at the current learning step from the previous step.
As shown in \cite{rafailov2023direct}, the learning objective in Eqn.~\eqref{eq:rlhf-beta} achieves optimal when it takes the following forms:
\begin{equation}
\footnotesize
\label{eq:boltzmann}
\begin{split}
\pi_t^\star(y\mid x) &= \frac{\pi_{t-1}(y\mid x)\,\exp\left(\tfrac{1}{\beta}\,r(x,y)\right)}{Z(x)},
\\
r(x, y) &= \beta\log\frac{\pi_t^\star(y\mid x)}{\pi_{t-1}(y\mid x)} +  \beta \log Z(x),
\end{split}
\end{equation}
where $Z(x) = \sum_{y}\pi_{t-1}(y|x)\exp\left(\frac{1}{\beta}r(x, y)\right)$.

\noindent
\textbf{From RLHF to Pairwise Preferences.}
In our continual learning setting, for each sample instruction data $x$, let $y^+$ be the well-retrained (good memory) and well-adapted output, and let $y^-$ be the forgotten output. The preference of $y^+$ over $y^-$, \ie, $p(y^+\succ y^- | x)$ can be formed via the  Bradley-Terry model as follows:
\begin{equation}
\footnotesize
\begin{split}
    p(y^+\succ y^- | x) = \sigma\!\big(\beta\,[r(x,y^+)-r(x,y^-)]\big),
\end{split}
\end{equation}
where $\sigma(u) = \frac{1}{1+\exp(-u)}$ is the Sigmoid function. 
Then, maximizing the (conditional) log-likelihood over pairs to avoid catastrophic forgetting in continual learning can be reformed as the following logistic loss:
\begin{equation}
\footnotesize
\label{eq:loss-pref}
\mathcal{L}_{\text{pref}}(\pi_t, \pi_{t-1})
=
\mathbb{E}_{(x,y^+,y^-)}
\Big[
-\log \sigma \big(\beta \big[r(x,y^+)-r(x,y^-)\big]\big)
\Big]
\end{equation}

\noindent
\textbf{Continual Learning Paradigm with DPO.}
Under the optimum conditions defined in Eqn.~\eqref{eq:boltzmann}, 
the logistic loss of the \emph{reward difference} in Eqn.~\eqref{eq:loss-pref} can be written in terms of policy log-ratios as in Eqn. \eqref{eqn:dpo_loss}.
\begin{equation}\label{eqn:dpo_loss}
\footnotesize
\begin{split}
    &\mathcal{L}_{\mathrm{DPO}}(\pi_t, \pi_{t-1}) = 
    -\mathbb{E}_{x, y^+, y^-}\Bigg[
    \log\sigma\Bigg(\beta\log\frac{\pi_t(y^+|x)}{\pi_{t-1}(y^+|x)}  
    \\
    &\quad\quad\quad\quad\quad\quad\quad\quad\quad\quad\quad\quad\quad\quad\quad\quad - \beta \log\frac{\pi_t(y^-|x)}{\pi_{t-1}(y^-|x)}\Bigg)
    \Bigg] \\
    &= 
    -\mathbb{E}_{x, y^+, y^-} 
    \log \sigma \Bigg[
    \beta \underbrace{\Big[\Big(\log \pi_t(y^+| x)-\log \pi_t(y^-| x)\Big]}_{\text{trainable policy}}
    \\ &\quad\quad\quad\quad\quad\quad\quad\quad\quad  - \beta\underbrace{\Big[\log \pi_{t-1}(y^+| x)-\log \pi_{t-1}(y^-| x)\Big]}_{\text{fixed reference}}
    \Bigg]
    \raisetag{17mm}
\end{split}
\end{equation}

\noindent
\textbf{Interpretation.}
As in continual learning, each training stage aims to adapt the current policy $\pi_{t}$ to a new learning task while preserving consistency with the reference model $\pi_{t-1}$ to avoid catastrophic forgetting. 
The DPO objective encourages $\pi_t$ to increase the relative log-odds of well-retained (or memory-consistent) $y^+$ over forgotten output $y^-$ compared to $\pi_{t-1}$, effectively favoring outputs that remain aligned with prior knowledge. 
Our learning objective prevents the policy from drifting away from the previous learning task and implicitly regularizes updates toward the information manifold of the previous model, thereby mitigating catastrophic forgetting.
Meanwhile, the hyper-parameter $\beta$ controls the adaptability of the LMM, \ie, larger values constrain $\pi_t$ to remain close to $\pi_{t-1}$ (\textit{stability}). In comparison, smaller values permit more flexible adaptation to new distributions (\textit{plasticity}). 
Our learning mechanism replaces the explicit reward-based RLHF objective in Eqn.~\eqref{eq:rlhf_constrained} with a preference-based contrastive loss that directly enforces policy consistency under a bounded divergence constraint. 
As a result, our DPO approach provides a principled mechanism for continual alignment, \ie, preserving prior knowledge while enabling controlled updates that enhance adaptability across sequential tasks.

\subsubsection{Theoretical Analysis of Direct Preference Optimization in Forgetting Mitigation}

Prior studies \cite{douillard2021plop, cermelli2020modelingthebackground} have shown that Knowledge Distillation (KD) is a common approach to mitigate catastrophic forgetting. In this section, we will provide a theoretical analysis to demonstrate that the knowledge distillation loss is bounded by the DPO loss, which offers a more effective mechanism for preventing catastrophic forgetting and enhancing adaptability.
In a typical knowledge distillation approach used in continual learning, 
the current model $\pi_t$ is encouraged to remain close to the previous model $\pi_{t-1}$ via the Kullback–Leibler (KL) divergence to mitigate catastrophic forgetting: 
\begin{equation}
\footnotesize
\begin{split}
\mathcal{L}_{\mathrm{KD}}(\pi_t, \pi_{t-1}) &= D_{\mathrm{KL}}(\pi_{t-1} \,\|\, \pi_t) = \mathbb{E}_{x, y \sim \pi_{t-1}} \left[ \log\frac{\pi_{t-1}(y | x)}{\pi_t(y | x)} \right]
\end{split}
\end{equation}

\begin{lemma}\label{lemma:lower-bound-kl}
\textbf{Lower Bound of KL Divergence Governed by DPO Loss.} 
The lower bound of the $D_{\mathrm{KL}}(\pi_{t-1}\|\pi_t)$ is governed by the DPO loss as follows:
\begin{equation}\label{eqn:lowerbound}
\footnotesize
        D_{\mathrm{KL}}(\pi_{t-1}\|\pi_t) \geq \frac{1}{C_{\mathrm{lower}}} (\log 2 - \mathcal{L}_{\mathrm{DPO}}(\pi_t;\pi_{t-1}))^2
\end{equation}
where $C_{\mathrm{lower}}$ is a constant number.
\end{lemma}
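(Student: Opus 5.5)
The plan is to compare the DPO loss with its value at initialization. Set $\pi_t=\pi_{t-1}$: every log-ratio then vanishes and $\mathcal{L}_{\mathrm{DPO}}=-\log\sigma(0)=\log 2$. So $\log 2-\mathcal{L}_{\mathrm{DPO}}$ measures how far the DPO objective has moved from the reference policy. I will bound this gap above by a quantity that depends only on the log-ratio $h(x,y)=\log\frac{\pi_t(y|x)}{\pi_{t-1}(y|x)}$, then control that quantity by the KL divergence through Pinsker-type inequalities. Squaring the result gives \eqref{eqn:lowerbound}.

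\textbf{Step 1 (Lipschitz linearization).} Write $u(x,y^+,y^-)=\beta\,[h(x,y^+)-h(x,y^-)]$. The map $u\mapsto-\log\sigma(u)$ is $1$-Lipschitz, since its derivative is $-(1-\sigma(u))\in(-1,0)$. Hence
\begin{equation*}
\log 2-\mathcal{L}_{\mathrm{DPO}}\le \mathbb{E}\,|u|\le \beta\,\mathbb{E}_{x}\big[\mathbb{E}_{y^+}|h(x,y^+)|+\mathbb{E}_{y^-}|h(x,y^-)|\big].
\end{equation*}
If the gap is negative the claimed inequality is still meaningful only after squaring. In that case I will instead bound $|\log 2-\mathcal{L}_{\mathrm{DPO}}|$, which the same estimate covers.

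\textbf{Step 2 (from log-ratios to KL).} I need $\mathbb{E}|h|\lesssim\sqrt{D_{\mathrm{KL}}(\pi_{t-1}\|\pi_t)}$. Suppose the responses $y^\pm$ are drawn from a distribution comparable to $\pi_{t-1}$, say with density ratio bounded by $M$. Then $\mathbb{E}_{y\sim\pi_{t-1}}|h|$ can be handled with bounded log-ratios $|h|\le B$, which holds because the policies are softmax outputs with bounded logits. Under that bound, $|\log a-\log b|\le c_B|a-b|/b$ on the bounded range, so $\mathbb{E}_{\pi_{t-1}}|h|\le c_B\sum_y|\pi_t-\pi_{t-1}| = 2c_B\,\mathrm{TV}(\pi_{t-1},\pi_t)$. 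Pinsker's inequality then gives $\mathrm{TV}\le\sqrt{D_{\mathrm{KL}}(\pi_{t-1}\|\pi_t)/2}$, and Jensen's inequality moves the square root outside the expectation over $x$. Combining the two steps,
\begin{equation*}
\log 2-\mathcal{L}_{\mathrm{DPO}}\le 2\sqrt{2}\,\beta M c_B\sqrt{D_{\mathrm{KL}}(\pi_{t-1}\|\pi_t)}.
\end{equation*}
Squaring yields the lemma with $C_{\mathrm{lower}}=8\beta^2M^2c_B^2$.

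\textbf{Main obstacle.} The hard part is Step 2. A lemma stated this generally is false without assumptions: the preference pairs need not be drawn from $\pi_{t-1}$, and log-ratios can be unbounded where $\pi_t$ or $\pi_{t-1}$ is tiny. The constant $C_{\mathrm{lower}}$ must therefore absorb
\begin{itemize}
\item a bounded-log-ratio (or bounded-logit) assumption, and
\item a coverage assumption relating the preference-data distribution to $\pi_{t-1}$.
\end{itemize}
I will state both explicitly.

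A cleaner alternative avoids total variation altogether. It uses the second-order bound $\mathbb{E}_{\pi_{t-1}}[h^2]\le C_B\,D_{\mathrm{KL}}(\pi_{t-1}\|\pi_t)$, valid when $|h|\le B$ because $e^{-h}-1+h\ge c\,h^2$ on $[-B,B]$. Combined with Cauchy--Schwarz in Step 1, it gives the squared bound directly.
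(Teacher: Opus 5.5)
Your proof is correct under the two hypotheses you flag, and it takes a genuinely different route from the paper's.

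The paper's chain runs as follows:
\begin{itemize}
\item It uses the tangent-line bound $\mathcal{L}_{\mathrm{DPO}}\ge\log 2-\tfrac{\beta}{2}\mathbb{E}[\Delta_t]$.
\item It reads the signed mean margin as $\mathbb{E}_{P^+}[r]-\mathbb{E}_{P^-}[r]$ for an $L$-Lipschitz implicit reward.
\item It bounds this by $L\,W_1(P^+,P^-)\le 3L\,W_1(\pi_t,\pi_{t-1})$, using Kantorovich--Rubinstein plus the assumed closeness conditions $W_1(P^+,\pi_t),W_1(P^-,\pi_{t-1})\le W_1(\pi_t,\pi_{t-1})$.
\item It invokes an assumed Talagrand $T_2(C_0)$ inequality to reach $\sqrt{D_{\mathrm{KL}}(\pi_t\|\pi_{t-1})}$.
\item It converts this reverse KL into $D_{\mathrm{KL}}(\pi_{t-1}\|\pi_t)$ via the density-ratio bound $1/M\le\pi_{t-1}/\pi_t\le M$, arriving at $C_{\mathrm{lower}}=9M^3\beta^2L^2C_0$.
\end{itemize}

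You bypass the transport machinery entirely. Because you bound $\mathbb{E}|h|$ under each preference marginal separately and pass through total variation, Pinsker gives you the forward KL directly, with no reverse-to-forward conversion. You need no metric on responses, no Lipschitz reward, and no $T_2$ hypothesis. Your bounded-log-ratio assumption is the same one the paper already imposes.

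Your decomposition also vanishes automatically as $\pi_t\to\pi_{t-1}$, because the implicit reward itself does. The paper instead must assume $W_1(P^+,P^-)\le 3W_1(\pi_t,\pi_{t-1})$, and taken literally that condition forces $P^+=P^-$ whenever $\pi_t=\pi_{t-1}$. Further, by bounding $|\log 2-\mathcal{L}_{\mathrm{DPO}}|$ through the $1$-Lipschitz property, you legitimately cover the case $\mathcal{L}_{\mathrm{DPO}}>\log 2$. There, the paper's squaring of a one-sided inequality is unjustified.

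What you give up is the factor $\tfrac12$ from the tangent line and the Wasserstein ``drift'' interpretation the paper builds on. Your coverage constant $\sup dP^\pm/d\pi_{t-1}$ can also be enormous, since in the paper's data construction $y^\pm$ are reference answers and LLM-hallucinated negatives rather than samples from $\pi_{t-1}$. The lemma only asks for some constant, so this cost is quantitative, not logical.

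There is one slip in your alternative argument. With $h=\log(\pi_t/\pi_{t-1})$, the identity is $D_{\mathrm{KL}}(\pi_{t-1}\|\pi_t)=\mathbb{E}_{\pi_{t-1}}[e^{h}-1-h]$, since $\mathbb{E}_{\pi_{t-1}}[e^{h}]=1$; it is not $e^{-h}-1+h$. The quadratic lower bound on $[-B,B]$ holds for the correct expression too, so that variant still goes through.
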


\begin{lemma}\label{lemma:upper-bound-kl}
\textbf{Upper Bound of KL Divergence Governed by DPO Loss.} 
The upper bound of the $D_{\mathrm{KL}}(\pi_{t-1}\|\pi_t)$ is governed by the DPO loss as follows:
\begin{equation}\label{eqn:upperbound}
\footnotesize
        D_{\mathrm{KL}}(\pi_{t-1}\|\pi_t) \le C_{\mathrm{upper}} \mathcal{L}_{\mathrm{DPO}}(\pi_t;\pi_{t-1})
\end{equation}
where $C_{\mathrm{upper}}$ is a constant number.
\end{lemma}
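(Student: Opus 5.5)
Both bounds will be read through the DPO margin
\[
h(x,y^+,y^-)=\beta\log\frac{\pi_t(y^+|x)}{\pi_{t-1}(y^+|x)}-\beta\log\frac{\pi_t(y^-|x)}{\pi_{t-1}(y^-|x)},
\qquad
\mathcal{L}_{\mathrm{DPO}}=\mathbb{E}\big[\ell(h)\big],\quad \ell(u)=-\log\sigma(u).
\]
Two facts about $\ell$ drive everything: $\ell(0)=\log 2$, and $\ell$ is convex and $\tfrac12$-Lipschitz near $0$. More generally $|\ell(u)-\log 2|\le |u|$ for all $u$, since $|\ell'|\le 1$. The quantity $\log 2-\mathcal{L}_{\mathrm{DPO}}$ therefore measures how far the policy has moved from the reference in the directions probed by preference pairs. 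Note that $\mathcal{L}_{\mathrm{DPO}}(\pi_{t-1};\pi_{t-1})=\log 2$ exactly.

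\textbf{Plan for Lemma~\ref{lemma:lower-bound-kl}.}
\begin{enumerate}
\item Use the Lipschitz bound to get
\[
|\log 2-\mathcal{L}_{\mathrm{DPO}}|\le \mathbb{E}|h|\le \beta\,\mathbb{E}\big|\log\tfrac{\pi_t}{\pi_{t-1}}(y^+|x)\big|+\beta\,\mathbb{E}\big|\log\tfrac{\pi_t}{\pi_{t-1}}(y^-|x)\big|.
\]
\item Control each term by a distance between $\pi_t$ and $\pi_{t-1}$. Assume the log-ratios are bounded by some $M$, which is standard when both policies have bounded logits or are mixtures with a floor. Then $|\log a-\log b|\le e^{M}|a-b|/b$.
\item Rewrite the expectation over $y^\pm$ as a sum against the pair distribution. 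Assume its density relative to $\pi_{t-1}$ is bounded by $\kappa$, i.e.\ preference responses are drawn from a distribution dominated by the reference. This bounds each term by $\beta\kappa e^{M}\,\mathbb{E}_x\|\pi_t(\cdot|x)-\pi_{t-1}(\cdot|x)\|_1$.
\item Apply Pinsker and Jensen:
\[
\mathbb{E}_x\|\cdot\|_1\le \sqrt{2\,D_{\mathrm{KL}}(\pi_{t-1}\|\pi_t)}.
\]
\item Square to obtain Eq.~\eqref{eqn:lowerbound} with $C_{\mathrm{lower}}=8\beta^2\kappa^2e^{2M}$.
\end{enumerate}

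\textbf{Plan for Lemma~\ref{lemma:upper-bound-kl}.} Here I go the other way, using the optimality structure of Eq.~\eqref{eq:boltzmann}.
\begin{enumerate}
\item By Eq.~\eqref{eq:boltzmann}, $\log\frac{\pi_{t-1}}{\pi_t}=-\tfrac1\beta r+\log Z$. Hence
\[
D_{\mathrm{KL}}(\pi_{t-1}\|\pi_t)=\mathbb{E}_{x}\Big[\log Z(x)-\tfrac1\beta\,\mathbb{E}_{y\sim\pi_{t-1}}r(x,y)\Big].
\]
This is a log-moment-generating gap, which is bounded by a variance-type quantity of the centered reward. Using $\log\mathbb{E}e^{X}-\mathbb{E}X\le \tfrac{e^{2R}}{2}\mathrm{Var}(X)$ for $|X|\le R$, we get
\[
D_{\mathrm{KL}}\le c\,\mathbb{E}_x\,\mathbb{E}_{y,y'\sim\pi_{t-1}}\big(h(x,y,y')/\beta\big)^2.
\]
\item Relate $u^2$ to the logistic loss. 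On a bounded range $|u|\le B$, one would like $\ell(u)\ge c_B u^2$. This fails near $u=0$ because $\ell(0)=\log 2>0$. However, $\ell(u)\ge \log 2\cdot e^{-B}$ is bounded below while $u^2\le B^2$. It follows that $u^2\le \frac{B^2e^{B}}{\log 2}\,\ell(u)$ holds trivially on the bounded range.
\item Combine with the density ratio $\kappa$ between the pair distribution and $\pi_{t-1}\otimes\pi_{t-1}$, and symmetrise over the ordering of $(y,y')$. This gives $D_{\mathrm{KL}}\le C_{\mathrm{upper}}\mathcal{L}_{\mathrm{DPO}}$ with $C_{\mathrm{upper}}$ depending on $\beta,B,\kappa,M$.
\end{enumerate}

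\textbf{Main obstacle.} The crux is making the constants honest. Both directions need boundedness of log-ratios and a domination condition linking the preference-pair distribution to $\pi_{t-1}$; without these neither inequality can hold, since DPO only sees differences along sampled pairs. I would state these as standing assumptions, together with a bounded policy class, before the lemmas. In Lemma~\ref{lemma:upper-bound-kl}, the "trivial" bounded-range step means the bound is loose near the reference. If a tighter version is wanted, I would instead try bounding $D_{\mathrm{KL}}$ by $\mathcal{L}_{\mathrm{DPO}}-\min\mathcal{L}$ via strong convexity of $\ell$ on $[-B,B]$, where $\ell''\ge \sigma(B)\sigma(-B)$.
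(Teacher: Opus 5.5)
Your argument is correct under your standing hypothesis of bounded log-ratios, and it takes a genuinely different route from the paper's proof.

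\textbf{The paper's route.} The paper does not use the Boltzmann form here. It assumes (Remark~1) that candidates are drawn from $\alpha\pi_{t-1}+(1-\alpha)\pi_t$ and labelled monotonically, so that $\mathrm{TV}(\pi_{t-1},\pi_t)\le\frac{1}{\alpha}\mathrm{TV}(P^+,P^-)$. It then proves a pairwise calibration bound $\mathrm{TV}(P^+,P^-)\le 2\sqrt{2}\,\sqrt{\mathcal{L}_{\mathrm{DPO}}-\mathcal{L}^\star_{\mathrm{DPO}}}$ from the Bernoulli Pinsker inequality plus a sign-consistency assumption (Remark~2). Finally it converts TV to KL via ``Pinsker'' and drops $\mathcal{L}^\star_{\mathrm{DPO}}$, obtaining $C_{\mathrm{upper}}=16/\alpha^2$.

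\textbf{Your route.} You write $D_{\mathrm{KL}}(\pi_{t-1}\|\pi_t)$ as a log-moment-generating gap of the implicit reward. This is an identity for any $\pi_t$ once you set $r:=\beta\log(\pi_t/\pi_{t-1})$, so no optimality of $\pi_t$ is actually needed. You then bound the gap by a variance and use that $\ell$ is bounded below on a bounded range.

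\textbf{What each buys.} The paper's route aims for a constant depending only on the sampling weight $\alpha$, and a sharper excess-risk form. As written, it delivers neither.
\begin{itemize}
\item Pinsker gives $D_{\mathrm{KL}}\ge 2\,\mathrm{TV}^2$, not $\le$. The final conversion therefore needs a reverse-Pinsker inequality, which holds only under extra assumptions (e.g.\ a probability floor) and with constants depending on them.
\item The step $|\eta-\tfrac12|\le 2|\eta-q_\theta|$ does not follow from sign consistency: take $\eta=0.9$, $q_\theta=0.85$. Indeed, a calibrated $q_\theta=\eta$ has zero excess risk while $\mathrm{TV}(P^+,P^-)$ can be close to $1$.
\end{itemize}
Your route gives a valid proof, but it also shows the lemma is nearly vacuous. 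If $|\log(\pi_t/\pi_{t-1})|\le M$, then directly $D_{\mathrm{KL}}(\pi_{t-1}\|\pi_t)\le M$ and $\mathcal{L}_{\mathrm{DPO}}\ge\log(1+e^{-2\beta M})$. So $C_{\mathrm{upper}}=M/\log(1+e^{-2\beta M})$ already works, and neither your variance step nor the density ratio $\kappa$ in step~3 is needed. If you do keep $\kappa$, it must bound $d(\pi_{t-1}\otimes\pi_{t-1})/dP_{\mathrm{pair}}$, which is the reverse of the domination you assumed for the lower bound.

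\textbf{Correction to your closing remark.} The tightening $D_{\mathrm{KL}}\le C(\mathcal{L}_{\mathrm{DPO}}-\min\mathcal{L})$ via strong convexity of $\ell$ on $[-B,B]$ cannot work. Strong convexity controls how far the margins $h$ are from those of the loss minimiser, not from $h\equiv 0$ (the reference). At the minimiser the right-hand side is $0$, while in general $\pi_t\neq\pi_{t-1}$, so the left-hand side is positive. This is the same obstruction that invalidates the paper's excess-risk intermediate bound. If you want a non-trivial statement, keep the quantity your step~1 naturally produces, $\mathbb{E}_x\mathbb{E}_{y,y'\sim\pi_{t-1}}[h(x,y,y')^2]$, rather than the loss.
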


\noindent
\textbf{Proof.} The proof of Lemmas \ref{lemma:lower-bound-kl}-\ref{lemma:upper-bound-kl} is in our appendix. We show the exact form of $C_{\mathrm{lower}}$ and $C_{\mathrm{upper}}$ in our proof.

\noindent
\textbf{Interpretation.}
Lemmas~\ref{lemma:lower-bound-kl} and~\ref{lemma:upper-bound-kl} establish a two-sided relationship between the DPO loss and the KL divergence used in prior knowledge distillation methods \cite{cermelli2020modelingthebackground, douillard2021plop}. The lower bound implies that a small DPO loss ensures $\pi_t$ remains close to $\pi_{t-1}$, thereby preserving prior knowledge and mitigating catastrophic forgetting.
The upper bound further constrains the KL divergence, indicating that DPO regularizes updates so that $D_{\mathrm{KL}}(\pi_{t-1}\|\pi_t)$ grows proportionally to the DPO loss.
These bounds indicate that DPO implicitly controls catastrophic forgetting and adaptation, \ie a small $\mathcal{L}_{\mathrm{DPO}}$ constrains the divergence, ensuring semantic consistency with $\pi_{t-1}$ while allowing new learning task updates. 
Different from KD, which minimizes KL divergence directly, DPO introduces an adaptive, pairwise preference mechanism.
Therefore, DPO can be viewed as a generalized form of distillation, \ie, retaining the regularization effect while selectively amplifying high-reward (well-retained) responses and suppressing low-reward (forgotten) ones. This makes DPO more robust to forgetting while maintaining flexibility for continual learning in LMMs.

\subsection{Fairness DPO in Continual Learning}

\begin{figure}
    \centering
    \includegraphics[width=1.0\linewidth]{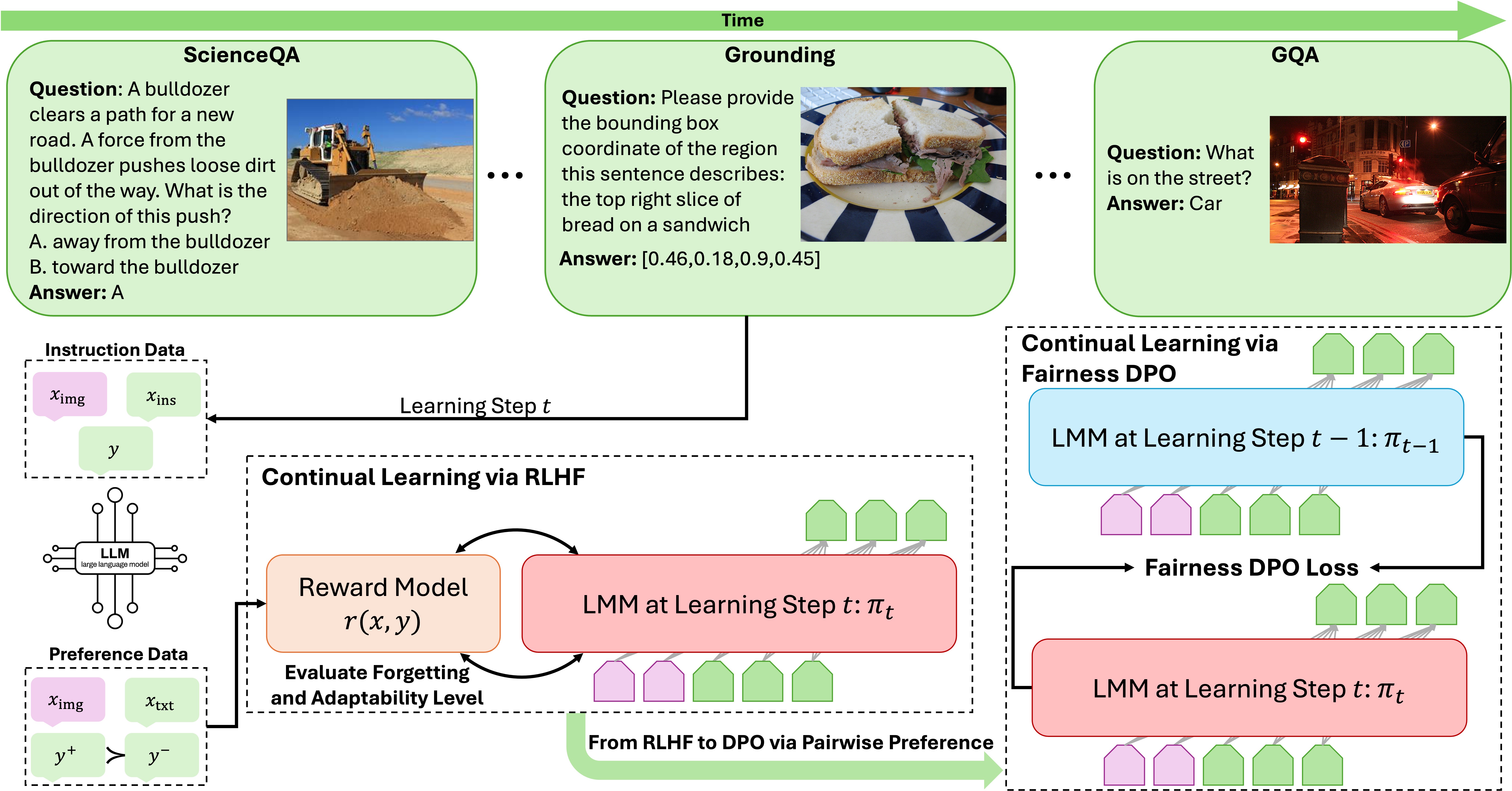}
    \vspace{-6mm}
    \caption{
    \textbf{Our Proposed Continual Learning Approach via Fairness DPO for Large Multimodal Models}. Traditional reinforcement learning with human feedback (RLHF) method optimize models through explicit reward maximization. Our framework instead reformulates RLHF as Direct Preference Optimization (DPO). The Fairness DPO loss mitigate the gradient biased under the imbalanced data.
    }
    \label{fig:framework}
    \vspace{-6mm}
\end{figure}

While the vanilla DPO loss can help prevent catastrophic forgetting and improve adaptability, the imbalance in data distribution can influence the behavior of DPO, leading to suboptimal performance. Indeed, let us revise the gradient produced by the DPO loss defined in Eqn. \eqref{eqn:dpo_loss} as follows:
\begin{equation}
\footnotesize
\begin{split}
\nabla_\theta \mathcal{L}_{\mathrm{DPO}}(\theta;\mu)
& = \mathbb E_\mu\!\big[(p(z)-1)\,\nabla_\theta s_\theta(z)\big] \\
    \mathcal{L}_{\mathrm{DPO}}(\theta;\mu) & = -\mathbb E_{(x,y^+,y^-)\sim \mu}\big[\log p(y^+\succ y^- | x)\big],
\end{split}
\end{equation}
where $\theta$ is the parameters of LMM, $z = (x, y^+, y^-)$, and $s_\theta(z) = \beta \big(r(x,y^+)-r(x,y^-)\big)$, $\mu$ is the data distribution of the current learning task.
Let us define $p(z) = p(y^+\succ y^- | x)$. In addition, we partition training data into $K$ disjoint groups $\{G_k\}_{k=1}^K$, with mixture weights $\mu_k=\mu(G_k)$.
Then, the group gradients can be rewritten as follows:
\begin{equation}\label{eqn:dpo-gradient}
\footnotesize
\begin{split}
    \nabla_\theta \mathcal{L}_{\mathrm{DPO}}(\theta;\mu) &= \sum_{k=1}^K \mu_k\, m_k(\theta) \\
    m_k(\theta) &= \mathbb E\!\big[(p(z)-1)\nabla_\theta s_\theta(z)\mid z\in G_k\big]
\end{split}
\end{equation}
where $m_k(\theta)$ is the group mean gradient.
Let $q'=(q'_1,\dots,q'_K)$ be the desired (ideal) balanced mixture over groups, and
$q=(q_1,\dots,q_K)$ the observed imbalanced distribution, with $p\neq q$. 
In an ideal scenario, the LLM $\pi_{t}$ trained on the balanced data distribution $q'$ will perform fairly.
Then, the gradient difference incurred when optimizing with biased distribution $q$ instead of balanced distribution $q'$ can be defined as:
\begin{equation}\label{eqn:grad-diff}
\footnotesize
B(\theta)
= \nabla_\theta \mathcal{L}_{\mathrm{DPO}}(\theta;q)-\nabla_\theta \mathcal{L}_{\mathrm{DPO}}(\theta;q')
= \sum_{k=1}^K (q_k-q'_k)\,m_k(\theta)
\end{equation}
Then, suppose there exists a group $j$ such that $q_j>q'_j$ (major group), the $j$-th group contributes $(q_j-q'_j)\, m_j(\theta)$ in the gradient updates, which is a systematic overweighting of group $j$’s gradient. Meanwhile, if $q_i<q'_i$ for a minority group $i$, then the $i$-th term is underweighted.
In other words, the gradient updates produced by the vanilla DPO loss will be biased towards major groups, and the gradient difference between ideal and practical data distribution will not be identical, \ie, $\|B(\theta)\| \neq 0$.

\begin{figure*}[!b]
    \centering
    \vspace{-4mm}
    \includegraphics[width=0.8\linewidth]{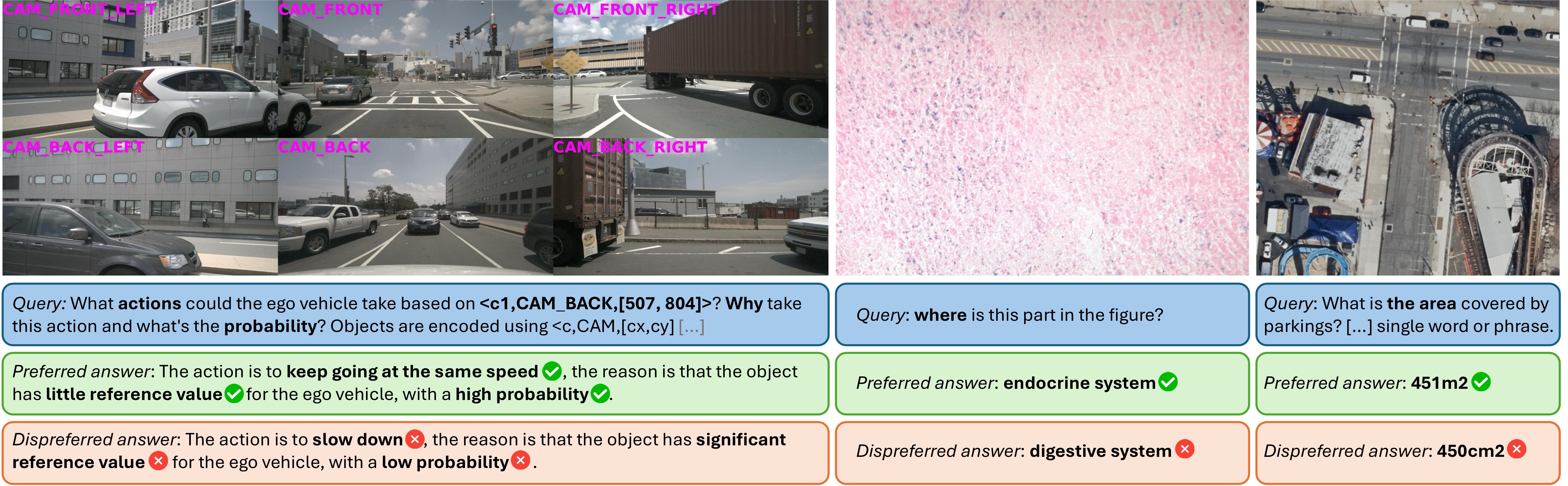}
    \vspace{-3mm}
    \caption{Example of Our DPO Data in the Continual Learning Benchmark. Best viewed in color.}
    \label{fig:example-dpo}
\end{figure*}

To address the problem caused by the imbalanced data distribution, inspired by the focal loss \cite{lin2017focal}, we introduce a new Fair DPO loss to improve the fairness of the LMM.  In particular, the \textbf{Fair DPO loss} can be defined as follows:
\begin{equation}
\footnotesize
    \mathcal{L}^{\gamma}_{\mathrm{DPO}}(\theta;\mu) = -\,\mathbb E_{z\sim\mu}
\Big[(1-p(z))^\gamma \,\log p(z)\Big]
\end{equation}
where $\gamma$ is the focusing parameter. Then, the gradient updates in Eqn.~\eqref{eqn:dpo-gradient} can be rewritten as follows:
\begin{equation}\label{eqn:fair-dpo-gradient}
\footnotesize
\begin{split}
  \nabla \mathcal{L}^{\gamma}_{\mathrm{DPO}}(\theta;\mu) &= \sum_{k=1}^K \mu_k\,  w_k^\gamma(\theta)\, m_k(\theta) \\
  \text{where} \quad w_k^\gamma(\theta) &= \mathbb E\!\big[\alpha_\gamma(p(z))\mid z\in G_k\big] \\
  \text{and} \quad \alpha_\gamma(p) &= (1-p)^{\gamma-1}\Big[(1-p)+\gamma p\log p\Big]
\end{split}
\end{equation}
In Eqn.~\eqref{eqn:fair-dpo-gradient}, $w_k^\gamma(\theta)$ plays a role as a modulating factor to balance the gradients of each group. Then, the gradient difference in Eqn. \eqref{eqn:grad-diff} with respect to the Fair DPO loss can be rewritten as:
\begin{equation}
\footnotesize
\begin{split}
B_\gamma(\theta) &=\nabla \mathcal{L}^{\gamma}_{\mathrm{DPO}}(\theta;q)-\nabla \mathcal{L}^{\gamma}_{\mathrm{DPO}}(\theta;q')
\\
&=\sum_{k=1}^K (q_k-q'_k)\, w_k^\gamma(\theta)\, m_k(\theta)
\end{split}
\end{equation}

\begin{lemma}\label{lema:fair-dpo-loss}
\textbf{Balanced Gradient Update of Fair DPO Loss}. Given a sufficient large value of $\gamma$, the Fair DPO loss will produce the balanced gradient update across groups regardless of the biased data distribution, \ie $\lim_{\gamma\to\infty} \|B_\gamma(\theta)\|=0$.
\end{lemma}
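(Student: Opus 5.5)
The plan is to show that each weight $w_k^\gamma(\theta)$ tends to zero as $\gamma\to\infty$, uniformly enough to dominate the fixed factors $(q_k-q'_k)\,m_k(\theta)$. Since $K$ is finite, the triangle inequality gives
\begin{equation*}
\footnotesize
\|B_\gamma(\theta)\| \le \sum_{k=1}^K |q_k-q'_k|\,|w_k^\gamma(\theta)|\,\|m_k(\theta)\|.
\end{equation*}
The factors $|q_k-q'_k|\le 1$ and $\|m_k(\theta)\|$ do not depend on $\gamma$. I will assume $\|m_k(\theta)\|<\infty$, which holds because $|p(z)-1|\le 1$ and $\nabla_\theta s_\theta$ is integrable at fixed $\theta$. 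It therefore suffices to prove $\lim_{\gamma\to\infty} w_k^\gamma(\theta)=0$ for every $k$.

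For a single pair, write $p=p(z)\in(0,1)$. The sigmoid of a finite score lies strictly inside $(0,1)$, and I will assume this holds for every pair in the support. The modulating factor is
\begin{equation*}
\footnotesize
\alpha_\gamma(p)=(1-p)^{\gamma}+\gamma(1-p)^{\gamma-1}p\log p .
\end{equation*}
Both terms vanish as $\gamma\to\infty$, because $0<1-p<1$ and geometric decay beats the linear factor $\gamma$. So $\alpha_\gamma(p)\to 0$ pointwise.

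To pass the limit inside the conditional expectation defining $w_k^\gamma$, I will use dominated convergence. I will bound $\sup_{p\in(0,1)}|\alpha_\gamma(p)|$ uniformly in $\gamma\ge 1$:
\begin{itemize}
\item The first term satisfies $(1-p)^\gamma\le 1$.
\item For the second term, $|p\log p|\le 1-p$ on $(0,1)$, so $|\gamma(1-p)^{\gamma-1}p\log p|\le \gamma(1-p)^{\gamma}$.
\item Maximizing $\gamma u^\gamma$ over $u\in(0,1)$ alone does not help, since its supremum is $\gamma$. Instead I will keep the factor $p$: the second term is at most $\gamma(1-p)^{\gamma-1}p$, and $\max_p \gamma p(1-p)^{\gamma-1}=\bigl(1-\tfrac1\gamma\bigr)^{\gamma-1}\le 1$.
\end{itemize}
Together these give $|\alpha_\gamma|\le 2$ uniformly. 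Dominated convergence, justified because the conditional distribution on $G_k$ is a probability measure, then yields $w_k^\gamma(\theta)\to 0$. Summing over the finitely many groups gives $\|B_\gamma(\theta)\|\to 0$.

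The main obstacle is that the premise of the lemma is delicate. The pointwise limit fails at $p=0$ (where $\alpha_\gamma=1$ for all $\gamma$) and is degenerate near it. Also, the natural bound $|p\log p|\le 1-p$ alone is too weak to dominate. This is why I fix $\theta$ and use the sharper estimate with the factor $p$. If uniform-in-$\theta$ control were needed, I would additionally assume the scores $s_\theta$ are bounded, so that $p(z)\in[\epsilon,1-\epsilon]$, which gives $|w_k^\gamma|\le 2\gamma(1-\epsilon)^{\gamma-1}$ and an explicit exponential rate. I would also remark that the limit is \emph{balanced} only in the degenerate sense that all weighted group gradients vanish, so the interpretation is that the bias term is suppressed faster than any fixed group imbalance can amplify it.
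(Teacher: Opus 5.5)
Your skeleton matches the paper's proof: pointwise $\alpha_\gamma(p)\to0$ on $(0,1)$, hence $w_k^\gamma(\theta)\to0$ for each group, then the triangle inequality over the $K$ groups. The domination step, however, is wrong. Your claim that the second term of $\alpha_\gamma$ is at most $\gamma(1-p)^{\gamma-1}p$ silently uses $|\log p|\le1$, which fails for $p<1/e$. In fact no constant majorant exists. At $p=1/\gamma$,
\begin{equation*}
\alpha_\gamma(1/\gamma)=\Bigl(1-\tfrac{1}{\gamma}\Bigr)^{\gamma-1}\Bigl(1-\tfrac{1}{\gamma}-\log\gamma\Bigr)\approx e^{-1}(1-\log\gamma),
\end{equation*}
so $\sup_{p\in(0,1)}|\alpha_\gamma(p)|$ grows like $\log\gamma$, and the bound $|\alpha_\gamma|\le2$ is false. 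This is not just a loose constant: under only the hypotheses you state, the conclusion can fail. Suppose on $G_k$ one has $p(z)=e^{-2^n}$ with probability $2^{-n}$ ($n\ge1$), and take $\gamma_n=e^{2^n}$. Then the $n$-th atom alone contributes $-(1-\gamma_n^{-1})^{\gamma_n-1}\le-e^{-1}$ to $w_k^{\gamma_n}$. All other contributions of the $p\log p$ term are nonpositive, and $\mathbb{E}[(1-p)^{\gamma_n}\mid G_k]\to0$. Hence $\limsup_n w_k^{\gamma_n}\le-e^{-1}$, and $w_k^\gamma$ does not tend to $0$.

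The repair is to keep the logarithm you dropped. Your own estimate $\max_p\gamma p(1-p)^{\gamma-1}=(1-\tfrac1\gamma)^{\gamma-1}\le1$ gives $|\gamma(1-p)^{\gamma-1}p\log p|\le|\log p|$, hence $|\alpha_\gamma(p)|\le1+|\log p|$ for all $\gamma\ge1$. This majorant is integrable on $G_k$ exactly when the group-wise vanilla DPO loss $\mathbb{E}[-\log p(z)\mid z\in G_k]$ is finite, so that is the hypothesis to add. Your alternative assumption $p(z)\in[\epsilon,1-\epsilon]$ also suffices. With either, dominated convergence and the rest of your argument go through.

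The paper's own proof has the same hole. It interchanges limit and expectation via the claimed sandwich $0\le\alpha_\gamma(p)\le(1-p)^\gamma$, which is false for its formula: the bracket $(1-p)+\gamma p\log p$ is negative at, e.g., $p=\tfrac12$, $\gamma=2$. Its stated hypotheses also do not include integrability of $\log p$. You were right not to trust that sandwich. Note also that differentiating $-(1-p)^\gamma\log p$ actually produces the bracket $(1-p)-\gamma p\log p$, but the majorant $1+|\log p|$ covers either sign.
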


\noindent
\textbf{Proof.} The proof of Lemma 3 is included in our appendix.

\noindent
\textbf{Interpretation.}
When the focusing parameter $\gamma$ becomes sufficiently large, the discrepancy between optimization over the imbalanced distribution and the idealized balanced distribution vanishes, \ie, $\lim_{\gamma \to \infty} B_\gamma(\theta) = 0$.
This indicates that our proposed Fair DPO loss can yield fairer gradient updates as $\gamma$ increases. 
However, excessively large values of $\gamma$ can cause gradient vanishing, leading to a numerically flat loss landscape. As a result, this limits the adaptability of the LMMs in continual learning settings. Meanwhile, if $\gamma$ is too small, the loss behaves similarly to the standard DPO objective, potentially reinforcing the unfairness induced by imbalanced data. Therefore, careful tuning of $\gamma$ is crucial to strike a balance between fairness and plasticity in the continual learning of LMMs.

\noindent
\textbf{Continual Learning Procedure.} Figure \ref{fig:framework} illustrates our continual learning framework. In particular, the final learning objective of our proposed approach at each learning step $t$ can be formed as follows:
\begin{equation}\label{eqn:final-cl-mllm}
\footnotesize
    \pi^*_t = \arg\!\min_{\pi_t} \mathbb{E}_{x, y \in \mathcal{D}_t} -\log p(y | x) + \mathcal{L}^{\gamma}_{\mathrm{DPO}}(\pi_t \| \pi_{t-1}) 
\end{equation}
To avoid the overfitting on small-scale size data and increase the memory-efficiency during training DPO, the LMM $\pi_t$ at learning step $t$ is optimized via LoRA.

\subsection{DPO Data in Continual Learning Benchmark}

We conduct our experiments on three benchmark suites: CoIN~\cite{chen2024coin}, MLLM-CL Domain \cite{zhao2025mllm}, and MLLM-CL Ability \cite{zhao2025mllm}.
The CoIN benchmark comprises eight diverse learning tasks: ScienceQA \cite{lu2022learn}, TextVQA \cite{singh2019towards}, ImageNet \cite{deng2009imagenet}, GQA \cite{hudson2019gqa}, VizWiz \cite{gurari2018vizwiz}, Grounding \cite{mao2016generation, kazemzadeh2014referitgame}, VQAv2 \cite{goyal2017making}, and OCR-VQA \cite{mishra2019ocr}.
The MLLM-CL Domain benchmark is designed for domain-incremental learning and consists of five sequential domains: Remote Sensing \cite{lobry2020rsvqa}, Medical \cite{he2020pathvqa}, Autonomous Driving \cite{sima2024drivelm}, Science \cite{kembhavi2016diagram, guo2025sciverse, chang2022mapqa, kembhavi2017you}, and Finance \cite{zhao2025mllm}.
The MLLM-CL Ability benchmark focuses on task-incremental learning and includes four tasks: OCR \cite{li2024monkey, liu2024ocrbench}, Math and Logic \cite{shi2024math, zhang2024mavis}, Visual Perception \cite{johnson2017clevr, acharya2019tallyqa}, and GUI Agent \cite{ying2024mmt, hsiao2022screenqa, wang2021screen2words}.

While these benchmarks provide instruction-following data suitable for training LMMs, they lack pairwise preference annotations required for DPO. To address this gap and enable continual learning via DPO, we construct pairwise preference data for each dataset within these three benchmarks.
In particular, for each instruction instance, we treat the provided reference answer as the preferred output $y^+$, which represents a well-retained (good-memory) and well-adapted response. To simulate the less preferred (forgotten) output $y^-$, we prompt a large language model to hallucinate an alternative response. 
The model is conditioned on both the textual instruction and the reference answer $y^+$, and is instructed to generate an output  $y^-$ that is plausible and coherent, yet distinct from $y^+$ and potentially flawed in subtle ways. This design encourages the formation of challenging preference pairs suitable for effective DPO training.
Finally, all labels are manually verified by human annotators to ensure that the rejected responses accurately reflect undesirable or suboptimal behavior (Figure \ref{fig:example-dpo}).

\vspace{-2mm}
\section{Experimental Results}
\label{sec:exp}

\subsection{Benchmarks, Metrics, and Implementation}

\noindent
\textbf{Benchmark and Metrics.} We evaluate on three benchmarks: CoIN, MLLM-CL Domain, and MLLM-CL Ability. Following standard protocols \cite{zhao2025mllm, guo2025hidellava, chen2024coin}, continual learning performance is measured using five metrics. \textbf{Last Accuracy} reports accuracy on all seen tasks after learning the final one. \textbf{Mean Finetune Accuracy} (MFT) reflects accuracy on each task immediately after learning, serving as an upper bound without forgetting. \textbf{Mean Final Accuracy} (MFN) averages accuracy across all tasks after full training. \textbf{Mean Average Accuracy} (MAA) captures the average performance over all tasks after each step. \textbf{Backward Transfer} (BWT) quantifies forgetting by comparing final accuracy to post-learning accuracy for each task.
Higher scores indicate better performance.

\noindent
\textbf{Implementation.} 
Our framework adopts the implementation of LLaVA v1.5 \cite{liu2024improved}, using CLIP-ViT-L-14 ($336^2$) as the vision encoder and Vicuna 7B \cite{vicuna2023} as the language backbone. For fair comparison, we follow the training setup of \cite{zhao2025mllm, guo2025hidellava, chen2024coin}, with LoRA rank 32, AdamW optimizer, a cosine learning rate schedule (base LR: $2\mathrm{e}{-5}$), and batch size 64 for one training epoch. All experiments are run on 16 NVIDIA A100 40GB GPUs.

\subsection{Main Results}

\noindent
\textbf{Results on the MLLM-CL Domain Benchmark.} 
Table~\ref{tab:main-domain} shows results on the MLLM-CL Domain benchmark across five incremental domains: Remote Sensing (RS), Medical, Autonomous Driving (AD), Science, and Finance.
Our \method achieves consistently outperforms prior methods in both individual accuracy (on last step) and continual learning metrics. In particular, 
\method achieves 85.58\% on RS, 95.28\% on Finance, and shows similar improvements on Medical (69.74\%), AD (57.75\%), and Science (61.55\%).
The results has indicated the robustness of our \method under domain shifts. In terms of continual learning performance, \method achieves an MFT of 74.29\%, an MFN of 74.00\%, and an MAA of 75.78\%. The BWT is -0.37\%, further confirming the ability for \method in forgetting mitigation across incremental domain adaptation. Compared to prior methods that rely on LoRA (\eg, MR-LoRA) or Mixture-of-Expert architectures (\eg, CL-MoE), \method offers a more unified framework while gaining superior results. These findings reinforce the generalizability and stability of \method in continual domain-incremental learning.

\begin{table}[!t]
\setlength{\tabcolsep}{3pt}
\caption{Results on MLLM-CL Domain (* denote the method using relay data). RS: Remote Sensing, Med: Medical, AD: Autonmous Driving, Sci: ScienceQA, Fin: Finance.}\label{tab:main-domain}
\vspace{-3mm}
\resizebox{1.0\linewidth}{!}{
\begin{tabular}{l|ccccc|cccc}
\hline
Method   & RS    & Med   & AD    & Sci   & Fin   & MFT$\uparrow$ & MFN$\uparrow$ & MAA$\uparrow$  & BWT$\uparrow$   \\
\hline
Zeroshot & 32.29 & 28.28 & 15.59 & 35.55 & 62.56 & 34.85 &  $-$  & $-$   & $-$    \\
\hline
LoRA-FT* \cite{hu2022lora} & 76.54 & 50.27 & 43.01 & 43.32 & 89.85 & 66.32 & 60.60 & 64.72 & -7.15  \\
O-LoRA* \cite{wang2023orthogonal}  & 76.94 & 41.17 & 34.18 & 39.61 & 83.22 & 60.49 & 55.02 & 60.73 & -6.83  \\
MoELoRA* \cite{chen2024coin} & 77.63 & 49.54 & 39.08 & 41.04 & 89.21 & 66.24 & 59.30 & 64.81 & -8.68  \\
CL-MoE* \cite{huai2025cl}  & 76.58 & 52.31 & 39.65 & 45.64 & 90.21 & 66.65 & 60.88 & 64.95 & -7.22  \\
HiDe* \cite{guo2025hidellava}    & 74.80 & 42.29 & 34.03 & 38.01 & 79.22 & 60.83 & 53.67 & 61.81 & -8.95  \\
SEFE* \cite{chen2025sefe}    & 78.43 & 52.85 & 46.21 & 47.76 & 89.33 & 66.89 & 62.92 & 66.51 & -4.97  \\
DISCO* \cite{guo2025federated}  & 77.78 & 46.25 & 50.45 & 49.51 & 89.71 & 65.27 & 62.74 & 64.92 & -3.17  \\
\hline
LoRA-FT \cite{hu2022lora} & 69.65 & 41.59 & 25.43 & 40.88 & 87.45 & 64.98 & 53.00 & 61.13 & -14.97 \\
O-LoRA \cite{wang2023orthogonal}  & 74.64 & 44.42 & 30.02 & 41.47 & 87.15 & 65.16 & 55.54 & 62.12 & -12.03 \\
MoELoRA \cite{chen2024coin} & 77.54 & 41.85 & 27.62 & 40.13 & 86.75 & 64.94 & 54.78 & 61.76 & -12.70 \\
CL-MoE \cite{huai2025cl}  & 71.34 & 46.84 & 26.33 & 41.17 & 88.74 & 66.06 & 54.88 & 61.79 & -13.96 \\
HiDe \cite{guo2025hidellava}    & 74.31 & 48.95 & 33.21 & 38.54 & 81.55 & 60.77 & 55.31 & 60.68 & -6.82  \\
SEFE \cite{chen2025sefe}     & 77.26 & 50.37 & 37.21 & 40.87 & 86.82 & 65.01 & 58.51 & 63.63 & -8.13  \\
DISCO    \cite{guo2025federated} & 76.03 & 45.20 & 43.79 & 42.33 & 88.95 & 64.43 & 59.26 & 63.35 & -6.46  \\
MR-LoRA \cite{zhao2025mllm} & 80.87 & 65.32 & 54.12 & 56.71 & 91.12 & 69.64 & 69.63 & 71.06 & \textbf{-0.01}  \\
\hline
\textbf{\method} & \textbf{85.68} & \textbf{69.74} & \textbf{57.73} & \textbf{61.55} & \textbf{95.28} & \textbf{74.29} & \textbf{74.00} & \textbf{75.68} & {-0.37} \\
\hline
\end{tabular}
}
\vspace{-6mm}
\end{table}

\noindent
\textbf{Results on the MLLM-CL Ability Benchmark.} 
Table~\ref{tab:main-ability} shows results on the MLLM-CL Ability benchmark across four incremental tasks: OCR, Math \& Logic (M\&L), Visual Programming (VP), and GUI Agent (GUI), with the first four columns reporting performance after the final task.
Our \method achieves consistently strong results across all tasks, with 38.40\% on OCR, 39.20\% on M\&L, 68.65\% on VP, and 35.00\% on GUI Agent. In addition, \method outperforms prior methods on all continual learning metrics. \method achieves a MFT of 45.55\%, MFN of 45.31\%, and MAA of 43.03, while gaining a BWT of -0.31\%, indicating minimal forgetting. Compared to DISCO and MR-LoRA, which rely on LoRA and routing strategies, \method demonstrates improved overall performance. These results highlight the effectiveness of \method in maintaining stability and generalization throughout the incremental learning process.

\begin{table}[!t]
\caption{Results on MLLM-CL Ability (* denote the method using relay data). M\&L: Math and Logic, VP: Visual Perception.}\label{tab:main-ability}
\setlength{\tabcolsep}{3pt}
\vspace{-3mm}
\resizebox{1.0\linewidth}{!}{
\begin{tabular}{l|cccc|cccc}
\hline
Method       & OCR   & M\&L   & VP    & GUI & MFT$\uparrow$  & MFN$\uparrow$  & MAA$\uparrow$  & BWT$\uparrow$   \\
\hline
Zeroshot \cite{hu2022lora}    & 31.20 & 30.20 & 60.79 & 10.00     & 33.05 &  $-$  & $-$   & $-$    \\
\hline
LoRA-FT* \cite{hu2022lora}    & 21.80 & 32.70 & 58.38 & 28.75     & 40.32 & 35.41 & 36.32 & -6.55  \\
O-LoRA*  \cite{wang2023orthogonal}    & 29.60 & 31.30 & 60.79 & 27.50     & 39.96 & 37.30 & 36.34 & -3.55  \\
MoELoRA* \cite{chen2024coin}    & 19.80 & 32.20 & 54.19 & 30.00     & 40.35 & 34.05 & 35.39 & -8.41  \\
CL-MoE* \cite{huai2025cl}    & 25.40 & 31.80 & 60.91 & 30.00     & 41.22 & 37.03 & 37.28 & -5.59  \\
HiDe*  \cite{guo2025hidellava}      & 24.60 & 28.40 & 30.71 & 23.75     & 36.84 & 26.86 & 33.54 & -13.30 \\
SEFE*  \cite{chen2025sefe}      & 25.60 & 34.80 & 57.61 & 31.39     & 42.25 & 37.35 & 37.93 & -6.53  \\
DISCO* \cite{huai2025cl}      & 34.20 & 35.00 & 61.55 & 27.50     & 40.14 & 39.56 & 37.85 & -0.77  \\
\hline
LoRA-FT \cite{hu2022lora}     & 23.60 & 33.70 & 55.84 & 32.50     & 41.28 & 36.41 & 36.58 & -6.49  \\
O-LoRA \cite{wang2023orthogonal}      & 29.60 & 32.90 & 52.41 & 33.75     & 39.72 & 37.16 & 35.42 & -3.41  \\
MoELoRA \cite{chen2024coin}     & 26.70 & 32.80 & 56.85 & 27.22     & 39.45 & 35.89 & 36.07 & -4.75  \\
CL-MoE \cite{huai2025cl}      & 19.90 & 32.70 & 53.43 & 30.69     & 40.50 & 34.18 & 35.65 & -8.43  \\
HiDe  \cite{guo2025hidellava}       & 24.60 & 32.10 & 46.32 & 28.75     & 37.98 & 32.94 & 34.60 & -6.72  \\
SEFE \cite{chen2025sefe}        & 26.00 & 33.40 & 57.74 & 33.75     & 40.98 & 37.72 & 36.59 & -4.35  \\
DISCO  \cite{guo2025federated}      & 32.90 & 33.10 & 60.15 & 30.14     & 39.02 & 39.07 & 36.57 & 0.07   \\
MR-LoRA  \cite{zhao2025mllm}    & 33.70 & 36.20 & 65.10 & 32.50     & 41.89 & 41.88 & 38.86 & \textbf{-0.02}  \\
\hline
\textbf{\method} & \textbf{38.40} & \textbf{39.20} & \textbf{68.65} & \textbf{35.00} & \textbf{45.55} & \textbf{45.31} & \textbf{43.03} & -0.31 \\
\hline
\end{tabular}
}
\vspace{-4mm}
\end{table}

\begin{table}[!b]
\vspace{-4mm}
\caption{Results on CoIN. SciQA: ScienceQA, Image: ImageNet, Viz: VizWiz, Ground: Grounding, Text: TextVQA, VQA: VQAv2.}\label{tab:main-coin}
\setlength{\tabcolsep}{2pt}
\vspace{-3mm}
\resizebox{1.0\linewidth}{!}{
\begin{tabular}{l|cccccccc|cc}
\hline
Method     & SciQA          & Image          & Viz            & Ground      & Text           & GQA            & VQA            & OCR            & MFN$\uparrow$            & MAA$\uparrow$            \\
\hline
Zeroshot   & 69.79          & 9.93           & 45.50          & 58.47          & 57.75          & 60.77          & 66.50          & 64.93      &   $-$    &  $-$              \\
FineTune   & 57.43          & 28.90          & 41.88          & 30.05          & 51.39          & 50.76          & 53.28          & 64.78          & 47.31          & 52.86          \\
LwF \cite{li2017learning}        & 60.71          & 30.58          & 41.49          & 36.01          & 52.80          & 47.07          & 53.43          & 65.12          & 48.40          & 53.22          \\
EWC \cite{kirkpatrick2017overcoming}        & 59.75          & 31.88          & 42.26          & 34.96          & 51.06          & 51.84          & 55.30          & 64.55          & 48.95          & 53.30          \\
L2P \cite{wang2022learning}        & 70.21          & 23.31          & 44.21          & 43.76          & 56.25          & 58.46          & 62.32          & 64.11          & 52.83          & 53.96          \\
O-LoRA \cite{wang2023orthogonal}     & 72.56          & 62.84          & 48.43          & 58.97          & 57.66          & 59.14          & 63.21          & 63.31          & 60.77          & 62.60          \\
MoELoRA  \cite{chen2024coin}  & 62.02          & 37.21          & 43.32          & 33.22          & 52.05          & 53.12          & 57.92          & 65.75          & 50.58          & 55.24          \\
HiDe \cite{guo2025hidellava} & 73.20          & 69.28          & 50.76          & 59.18          & 56.92          & 61.33          & 67.12          & 64.76          & 62.82          & 64.70          \\
\hline
\textbf{\method}       & \textbf{77.84} & \textbf{95.61} & \textbf{54.55} & \textbf{60.74} & \textbf{59.17} & \textbf{64.32} & \textbf{69.99} & \textbf{68.69} & \textbf{68.86} & \textbf{74.94} \\
\hline
\end{tabular}
}
\end{table}

\noindent
\textbf{Results on the CoIN Benchmark.} 
Table~\ref{tab:main-coin} shows results on the CoIN benchmark. MFT and BWT are omitted for prior methods since intermediate models were not published. The first eight columns report final-task accuracy on: ScienceQA (SciQA), ImageNet (Image), VizWiz (Viz), Grounding (Ground), TextVQA (Text), GQA, VQAv2 (VQA), and OCR.
Our \method consistently outperforms all prior methods across these tasks. In particular, it achieves major gains on vision-centric tasks, \ie, ImageNet (95.61\%), VizWiz (54.55\%), and OCR (68.69\%), while also obtaining strong results on language and reasoning benchmarks, \ie, ScienceQA (77.84\%), GQA (64.32\%), and VQAv2 (69.99\%). Importantly, our approach maintains a high Mean Final Accuracy (MFN) of 68.86\%, reflecting superior knowledge preservation across incremental tasks. \method further achieves the best MAA of 74.94\%, demonstrating stable performance throughout continual training. These results confirm the effectiveness of \method in mitigating catastrophic forgetting and increasing adaptability.

\subsection{Ablation Study}

\noindent
\textbf{Effectiveness of Fairness DPO.}
Table~\ref{tab:ablation-dpo} presents an ablation study on the impact of each component in our approach. Compared to knowledge distillation (KD), vanilla DPO achieves consistently better performance across incremental domains, \ie RS (82.26\%), Med (67.12\%), AD (55.79\%), Sci (59.58\%), and Fin (93.94\%), as well as improved continual learning metrics: MFT of 72.80\%, MFN of 71.74\%, MAA of 73.68\%, and BWT of -1.33\%, indicating reduced forgetting. Our \method further improves the performance, \ie 74.29\% MFT, 74.00\% MFN, 75.68\% MAA, and -0.37\% BWT, demonstrating stronger task performance and greater stability. These results highlight the effectiveness of our Fair DPO loss in continual learning.

\begin{table}[!t]
\caption{Effectiveness of Our Fairness DPO.}\label{tab:ablation-dpo}
\vspace{-3mm}
\setlength{\tabcolsep}{3pt}
\resizebox{1.0\linewidth}{!}{
\begin{tabular}{l|ccccc|cccc}
\hline
             & RS & Med & AD & Sci & Fin & MFT$\uparrow$  & MFN$\uparrow$  & MAA$\uparrow$  & BWT$\uparrow$   \\
\hline
Zeroshot & 32.29 & 28.28 & 15.59 & 35.55 & 62.56 & 34.85 &  $-$  & $-$   & $-$    \\
LoRA-FT  & 69.65 & 41.59 & 25.43 & 40.88 & 87.45 & 64.98 & 53.00 & 61.13 & -14.97 \\
KD           &   77.82	& 63.92	& 52.88	& 58.57	& 92.85	& 71.37	& 69.21	& 71.49	& -2.71 \\
\hline
DPO  &  82.26 & 67.12 & 55.79 & 59.58 & 93.94 & 72.80 & 71.74 & 73.68 & -1.33 \\
\method &  \textbf{85.68} & \textbf{69.74} & \textbf{57.73} & \textbf{61.55} & \textbf{95.28} & \textbf{74.29} & \textbf{74.00} & \textbf{75.68} & \textbf{-0.37} \\
\hline
\end{tabular}
}
\vspace{-6mm}
\end{table}

\noindent
\textbf{Effectiveness of Divergence Parameter $\beta$.} 
Table \ref{tab:ablation-beta} studies the impact of the divergence parameter $\beta$ in our approach. The $\beta$ parameter controls the trade-off between adaptability to new tasks and the forgetting level of previous knowledge. As shown in Table \ref{tab:ablation-beta}, lower values of $\beta$ (\ie, $0.01$\ and $0.05$) lead to faster adaptation, reflected in higher MFT scores (75.43\% and 75.27\%). However, it will result in increased forgetting, as indicated by degraded BWT values (-2.66\% and -1.66\%). Meanwhile, larger values of $\beta$ (\ie, $0.50$) reduce forgetting with improved BWT (-0.32\%), but at the cost of reduced overall performance, including lower MFT (72.11\%), MFN (71.85\%), and MAA (73.60\%). Among our configurations, $\beta = 0.10$ achieves the best balance, with competitive MFT (74.29\%), MFN (74.00\%), and MAA (75.68\%), while maintaining a favorable BWT of -0.37\%. These results have illustrated the role of $\beta$ in stability and plasticity in continual learning.

\begin{table}[!b]
\vspace{-6mm}
\caption{Effectiveness of Divergence Parameter $\beta$.}\label{tab:ablation-beta}
\vspace{-3mm}
\setlength{\tabcolsep}{3pt}
\resizebox{1.0\linewidth}{!}{
\begin{tabular}{c|ccccc|cccc}
\hline
$\beta$ & RS & Med & AD & Sci & Fin & MFT$\uparrow$  & MFN$\uparrow$  & MAA$\uparrow$  & BWT$\uparrow$   \\
\hline
0.01 & 83.09          & 67.71          & 56.90          & 62.06          & \textbf{96.77} & 75.43          & 73.31          & 75.83          & -2.66          \\
0.05 & 84.91          & 69.30          & 57.16          & \textbf{62.21} & 96.12          & \textbf{75.27} & 73.94          & \textbf{76.27} & -1.66          \\
0.10 & \textbf{85.68} & \textbf{69.74} & \textbf{57.73} & 61.55          & 95.28          & 74.29          & \textbf{74.00} & 75.68          & -0.37          \\
0.50 & 83.74          & 67.55          & 55.44          & 59.64          & 92.88          & 72.11          & 71.85          & 73.60          & \textbf{-0.32} \\
\hline
\end{tabular}
}
\end{table}

\noindent
\textbf{Effectiveness of Focusing Parameter $\gamma$.}
Table~\ref{tab:ablation-gamma} presents an ablation on the focusing parameter $\gamma$, which controls the emphasis on harder preference pairs during training.
When $\gamma = 0.0$, the loss reduces to the standard DPO formulation, corresponding to the vanilla DPO baseline in Table \ref{tab:ablation-dpo}. Lower values of $\gamma$ (\ie, $0.50$ and $1.00$) yield moderate improvements in stability and forgetting, as reflected in reduced BWT (-1.07\% and -0.77\%) while maintaining competitive MFT (72.79\% and 73.38\%) and MAA (73.87\% and 74.66\%). This result suggests that incorporating moderate focus on harder examples enhances knowledge preservation without compromising adaptability. Meanwhile, the high values of $\gamma$ (\eg, $5.00$) lead to degraded performance across most metrics, with reduced MFT (73.21\%), MFN (72.18\%), and MAA (74.27\%), indicating that overemphasizing difficult pairs can lower overall learning dynamics. In our experiments, $\gamma = 2.00$ provides the best trade-off, achieving favorable MFT (74.29\%), MFN (74.00\%), MAA (75.68\%), and BWT (-0.37\%). These results have indicated the importance of focusing parameter $\gamma$ in balancing plasticity and stability in our \method approach.

\begin{table}[!t]
\caption{Effectiveness of Focusing Parameter $\gamma$.}\label{tab:ablation-gamma}
\vspace{-3mm}
\setlength{\tabcolsep}{3pt}
\resizebox{1.0\linewidth}{!}{
\begin{tabular}{c|ccccc|cccc}
\hline
$\gamma$ & RS & Med & AD & Sci & Fin & MFT$\uparrow$  & MFN$\uparrow$  & MAA$\uparrow$  & BWT$\uparrow$   \\
\hline
0.00 & 82.26          & 67.12          & 55.79          & 59.58          & 93.94          & 72.80          & 71.74          & 73.68          & -1.33          \\
0.50 & 82.84          & 67.84          & 55.50          & 59.71          & 93.78          & 72.79          & 71.93          & 73.87          & -1.07          \\
1.00 & 84.01          & 68.44          & 56.56          & 60.74          & 94.06          & 73.38          & 72.76          & 74.66          & -0.77          \\
2.00 & \textbf{85.68} & \textbf{69.74} & \textbf{57.73} & \textbf{61.55} & \textbf{95.28} & \textbf{74.29} & \textbf{74.00} & \textbf{75.68} & \textbf{-0.37} \\
5.00 & 83.08          & 67.74          & 55.99          & 60.05          & 94.03          & 73.21          & 72.18          & 74.27          & -1.29  \\
\hline
\end{tabular}
}
\vspace{-6mm}
\end{table}

\noindent
\textbf{Effectiveness of Different LMMs.}
Table~\ref{tab:ablation-llm} evaluates \method across different LMMs: LLaVA-7B, LLaVA-13B, and InternVL-7B \cite{chen2024internvl}. In all cases, \method outperforms standard DPO, confirming the generality of our Fair DPO objective. Larger models (\ie LLaVA-13B) show improved performance over LLaVA-7B in MFT (76.29\% vs. 74.29\%), MFN (75.81\% vs. 74.00\%), MAA (77.57\% vs. 75.68\%), and BWT (-0.59\% vs. -0.37\%), reflecting better knowledge preservation due to greater capacity. InternVL-7B, despite similar size to LLaVA-7B, benefits from stronger vision-language alignment and achieves competitive results. \method consistently enhances performance across all backbones, demonstrating robustness and compatibility with diverse LMM architectures.

\begin{table}[!b]
\vspace{-6mm}
\caption{Effectiveness of Different LMM Framework.}\label{tab:ablation-llm}
\setlength{\tabcolsep}{3pt}
\vspace{-3mm}
\resizebox{1.0\linewidth}{!}{
\begin{tabular}{ll|ccccc|cccc}
\hline
LLM     & Method       & RS & Med & AD & Sci & Fin & MFT$\uparrow$  & MFN$\uparrow$  & MAA$\uparrow$  & BWT$\uparrow$   \\
\hline
\multirow{2}{*}{LLaVA-7B}
    & DPO  &  82.26 & 67.12 & 55.79 & 59.58 & 93.94 & 72.80 & 71.74 & 73.68 & -1.33 \\
    & \method & 
    \textbf{85.68} & \textbf{69.74} & \textbf{57.73} & \textbf{61.55} & \textbf{95.28} & \textbf{74.29} & \textbf{74.00} & \textbf{75.68} & \textbf{-0.37} \\
\hline
\multirow{2}{*}{LLaVA-13B}
    & DPO  &  86.24          & 70.28          & 58.02          & 63.26          & 96.84          & 75.89          & 74.93          & 77.00          & -1.21          \\
    & \method & \textbf{87.40} & \textbf{71.09} & \textbf{59.34} & \textbf{63.96} & \textbf{97.28} & \textbf{76.29} & \textbf{75.81} & \textbf{77.57} & \textbf{-0.59} \\
\hline
\multirow{2}{*}{InternVL-7B}
    & DPO  &  82.69          & 67.69          & 55.97          & 60.67          & 94.83          & 73.88          & 72.37          & 74.68          & -1.89          \\
    & \method & \textbf{85.64} & \textbf{69.88} & \textbf{57.94} & \textbf{61.95} & \textbf{95.74} & \textbf{74.62} & \textbf{74.23} & \textbf{75.94} & \textbf{-0.49} \\
\hline
\end{tabular}
}
\end{table}

\vspace{-2mm}
\section{Conclusions and Limitations}
\label{sec:concl}

\textbf{Conclusions.} This paper has presented a novel Fairness DPO approach to Continual Learning in LMMs. 
In particular, our Fair DPO learning objective has been introduced to address both catastrophic forgetting and fairness problems. Our theoretical analysis has also shown the effectiveness of our proposed approach.
Our SoTA results on three  benchmarks have further confirmed the effectiveness of our \method compared to prior methods.

\noindent
\textbf{Limitations.} 
Our work adopts a set of learning hyper-parameters aligned with theoretical analysis, but this choice introduces limitations, particularly in tuning $\beta$, $\gamma$, the weighted loss in Eqn.~\eqref{eqn:final-cl-mllm}, and DPO data construction. The quality of DPO data is sensitive to label stability, which may be affected by class imbalance, domain shifts, or model uncertainty, potentially leading to suboptimal distillation targets and misleading pairwise correlations. These limitations highlight the need for future work on more robust and adaptive DPO strategies for continual multimodal learning.

\small{
\noindent
\textbf{Acknowledgment.} 
This work is partly supported by NSF CAREER (No. 2442295), NSF SCH (No. 2501021), NSF E-RISE (No. 2445877), NSF BIO (No. 2524623) and USDA/NIFA Award. We also acknowledge the Arkansas High-Performance Computing Center (HPC) for GPU servers.
}

{
    \small
    \bibliographystyle{ieeenat_fullname}
    \bibliography{references}

\begin{thebibliography}{124}
\providecommand{\natexlab}[1]{#1}
\providecommand{\url}[1]{\texttt{#1}}
\expandafter\ifx\csname urlstyle\endcsname\relax
  \providecommand{\doi}[1]{doi: #1}\else
  \providecommand{\doi}{doi: \begingroup \urlstyle{rm}\Url}\fi

\bibitem[Acharya et~al.(2019)Acharya, Kafle, and Kanan]{acharya2019tallyqa}
Manoj Acharya, Kushal Kafle, and Christopher Kanan.
\newblock Tallyqa: Answering complex counting questions.
\newblock In \emph{Proceedings of the AAAI conference on artificial intelligence}, pages 8076--8084, 2019.

\bibitem[Achiam et~al.(2023)Achiam, Adler, Agarwal, Ahmad, Akkaya, Aleman, Almeida, Altenschmidt, Altman, Anadkat, et~al.]{achiam2023gpt}
Josh Achiam, Steven Adler, Sandhini Agarwal, Lama Ahmad, Ilge Akkaya, Florencia~Leoni Aleman, Diogo Almeida, Janko Altenschmidt, Sam Altman, Shyamal Anadkat, et~al.
\newblock Gpt-4 technical report.
\newblock \emph{arXiv}, 2023.

\bibitem[Aguilar et~al.(2020)Aguilar, Ling, Zhang, Yao, Fan, and Guo]{aguilar2020knowledge}
Gustavo Aguilar, Yuan Ling, Yu Zhang, Benjamin Yao, Xing Fan, and Chenlei Guo.
\newblock Knowledge distillation from internal representations.
\newblock In \emph{Proceedings of the AAAI conference on artificial intelligence}, pages 7350--7357, 2020.

\bibitem[Alayrac et~al.(2022)Alayrac, Donahue, Luc, Miech, Barr, Hasson, Lenc, Mensch, Millican, Reynolds, et~al.]{alayrac2022flamingo}
Jean-Baptiste Alayrac, Jeff Donahue, Pauline Luc, Antoine Miech, Iain Barr, Yana Hasson, Karel Lenc, Arthur Mensch, Katherine Millican, Malcolm Reynolds, et~al.
\newblock Flamingo: a visual language model for few-shot learning.
\newblock \emph{Advances in neural information processing systems}, 35:\penalty0 23716--23736, 2022.

\bibitem[Bai et~al.(2023{\natexlab{a}})Bai, Bai, Chu, Cui, Dang, Deng, Fan, Ge, Han, Huang, et~al.]{bai2023qwen}
Jinze Bai, Shuai Bai, Yunfei Chu, Zeyu Cui, Kai Dang, Xiaodong Deng, Yang Fan, Wenbin Ge, Yu Han, Fei Huang, et~al.
\newblock Qwen technical report.
\newblock \emph{arXiv}, 2023{\natexlab{a}}.

\bibitem[Bai et~al.(2023{\natexlab{b}})Bai, Bai, Yang, Wang, Tan, Wang, Lin, Zhou, and Zhou]{bai2023qwenvl}
Jinze Bai, Shuai Bai, Shusheng Yang, Shijie Wang, Sinan Tan, Peng Wang, Junyang Lin, Chang Zhou, and Jingren Zhou.
\newblock Qwen-vl: A versatile vision-language model for understanding, localization, text reading, and beyond.
\newblock \emph{arXiv}, 2023{\natexlab{b}}.

\bibitem[Buzzega et~al.(2020)Buzzega, Boschini, Porrello, Abati, and Calderara]{buzzega2020dark}
Pietro Buzzega, Matteo Boschini, Angelo Porrello, Davide Abati, and Simone Calderara.
\newblock Dark experience for general continual learning: a strong, simple baseline.
\newblock \emph{Advances in neural information processing systems}, 33:\penalty0 15920--15930, 2020.

\bibitem[Cao et~al.(2024)Cao, Liu, Liu, Wang, Dong, Ding, Zhang, Reid, and Liang]{cao2024continualllava}
Meng Cao, Yuyang Liu, Yingfei Liu, Tiancai Wang, Jiahua Dong, Henghui Ding, Xiangyu Zhang, Ian Reid, and Xiaodan Liang.
\newblock Continual llava: Continual instruction tuning in large vision-language models.
\newblock \emph{arXiv}, 2024.

\bibitem[Cattiaux and Guillin(2003)]{cattiaux2003criterion}
Patrick Cattiaux and Arnaud Guillin.
\newblock A criterion for talagrand's quadratic transportation cost inequality.
\newblock \emph{arXiv}, 2003.

\bibitem[Cermelli et~al.(2020)Cermelli, Mancini, Rota~Bulò, Ricci, and Caputo]{cermelli2020modelingthebackground}
Fabio Cermelli, Massimiliano Mancini, Samuel Rota~Bulò, Elisa Ricci, and Barbara Caputo.
\newblock Modeling the background for incremental learning in semantic segmentation.
\newblock In \emph{CVPR}, 2020.

\bibitem[Chang et~al.(2022)Chang, Palzer, Li, Fosler-Lussier, and Xiao]{chang2022mapqa}
Shuaichen Chang, David Palzer, Jialin Li, Eric Fosler-Lussier, and Ningchuan Xiao.
\newblock Mapqa: A dataset for question answering on choropleth maps.
\newblock \emph{arXiv}, 2022.

\bibitem[Chang et~al.(2025)Chang, Chang, and Wu]{chang2025balora}
Yupeng Chang, Yi Chang, and Yuan Wu.
\newblock Ba-lora: Bias-alleviating low-rank adaptation to mitigate catastrophic inheritance in large language models, 2025.

\bibitem[Chen et~al.(2024{\natexlab{a}})Chen, Zhu, Luo, Shen, Song, and Gao]{chen2024coin}
Cheng Chen, Junchen Zhu, Xu Luo, Heng~T Shen, Jingkuan Song, and Lianli Gao.
\newblock Coin: A benchmark of continual instruction tuning for multimodel large language models.
\newblock \emph{Advances in Neural Information Processing Systems}, 37:\penalty0 57817--57840, 2024{\natexlab{a}}.

\bibitem[Chen and Zhang(2024)]{chenfedmbridge}
Jiayi Chen and Aidong Zhang.
\newblock Fed{MB}ridge: Bridgeable multimodal federated learning.
\newblock In \emph{Forty-first International Conference on Machine Learning}, 2024.

\bibitem[Chen et~al.(2025)Chen, Cong, Zhao, Yang, Hu, Ip, and Kwong]{chen2025sefe}
Jinpeng Chen, Runmin Cong, Yuzhi Zhao, Hongzheng Yang, Guangneng Hu, Horace Ho~Shing Ip, and Sam Kwong.
\newblock Sefe: Superficial and essential forgetting eliminator for multimodal continual instruction tuning.
\newblock \emph{arXiv}, 2025.

\bibitem[Chen et~al.(2024{\natexlab{b}})Chen, Wu, Wang, Su, Chen, Xing, Zhong, Zhang, Zhu, Lu, et~al.]{chen2024internvl}
Zhe Chen, Jiannan Wu, Wenhai Wang, Weijie Su, Guo Chen, Sen Xing, Muyan Zhong, Qinglong Zhang, Xizhou Zhu, Lewei Lu, et~al.
\newblock Internvl: Scaling up vision foundation models and aligning for generic visual-linguistic tasks.
\newblock In \emph{Proceedings of the IEEE/CVF conference on computer vision and pattern recognition}, pages 24185--24198, 2024{\natexlab{b}}.

\bibitem[Chiang et~al.(2023)Chiang, Li, Lin, Sheng, Wu, Zhang, Zheng, Zhuang, Zhuang, Gonzalez, Stoica, and Xing]{vicuna2023}
Wei-Lin Chiang, Zhuohan Li, Zi Lin, Ying Sheng, Zhanghao Wu, Hao Zhang, Lianmin Zheng, Siyuan Zhuang, Yonghao Zhuang, Joseph~E. Gonzalez, Ion Stoica, and Eric~P. Xing.
\newblock Vicuna: An open-source chatbot impressing gpt-4 with 90\%* chatgpt quality, 2023.

\bibitem[Cossu et~al.(2024)Cossu, Carta, Passaro, Lomonaco, Tuytelaars, and Bacciu]{cossu2024continual}
Andrea Cossu, Antonio Carta, Lucia Passaro, Vincenzo Lomonaco, Tinne Tuytelaars, and Davide Bacciu.
\newblock Continual pre-training mitigates forgetting in language and vision.
\newblock \emph{Neural Networks}, 179:\penalty0 106492, 2024.

\bibitem[Deitke et~al.(2025)Deitke, Clark, Lee, Tripathi, Yang, Park, Salehi, Muennighoff, Lo, Soldaini, et~al.]{deitke2025molmo}
Matt Deitke, Christopher Clark, Sangho Lee, Rohun Tripathi, Yue Yang, Jae~Sung Park, Mohammadreza Salehi, Niklas Muennighoff, Kyle Lo, Luca Soldaini, et~al.
\newblock Molmo and pixmo: Open weights and open data for state-of-the-art vision-language models.
\newblock In \emph{Proceedings of the Computer Vision and Pattern Recognition Conference}, pages 91--104, 2025.

\bibitem[Deng et~al.(2009)Deng, Dong, Socher, Li, Li, and Fei-Fei]{deng2009imagenet}
Jia Deng, Wei Dong, Richard Socher, Li-Jia Li, Kai Li, and Li Fei-Fei.
\newblock Imagenet: A large-scale hierarchical image database.
\newblock In \emph{2009 IEEE conference on computer vision and pattern recognition}, pages 248--255. Ieee, 2009.

\bibitem[Douillard et~al.(2021)Douillard, Chen, Dapogny, and Cord]{douillard2021plop}
Arthur Douillard, Yifu Chen, Arnaud Dapogny, and Matthieu Cord.
\newblock Plop: Learning without forgetting for continual semantic segmentation.
\newblock In \emph{Proceedings of the IEEE/CVF Conference on Computer Vision and Pattern Recognition}, pages 4040--4050, 2021.

\bibitem[Douillard et~al.(2022)Douillard, Ram{\'e}, Couairon, and Cord]{douillard2022dytox}
Arthur Douillard, Alexandre Ram{\'e}, Guillaume Couairon, and Matthieu Cord.
\newblock Dytox: Transformers for continual learning with dynamic token expansion.
\newblock In \emph{Proceedings of the IEEE/CVF conference on computer vision and pattern recognition}, pages 9285--9295, 2022.

\bibitem[Edwards(2011)]{edwards2011kantorovich}
David~A Edwards.
\newblock On the kantorovich--rubinstein theorem.
\newblock \emph{Expositiones Mathematicae}, 29\penalty0 (4):\penalty0 387--398, 2011.

\bibitem[Ghosal et~al.(2023)Ghosal, Majumder, Mehrish, and Poria]{ghosal2023text}
Deepanway Ghosal, Navonil Majumder, Ambuj Mehrish, and Soujanya Poria.
\newblock Text-to-audio generation using instruction-tuned llm and latent diffusion model.
\newblock \emph{arXiv}, 2023.

\bibitem[Goyal et~al.(2017)Goyal, Khot, Summers-Stay, Batra, and Parikh]{goyal2017making}
Yash Goyal, Tejas Khot, Douglas Summers-Stay, Dhruv Batra, and Devi Parikh.
\newblock Making the v in vqa matter: Elevating the role of image understanding in visual question answering.
\newblock In \emph{Proceedings of the IEEE conference on computer vision and pattern recognition}, pages 6904--6913, 2017.

\bibitem[Guo et~al.(2025{\natexlab{a}})Guo, Zeng, Xiang, Zhu, Wang, Zhang, and Liu]{guo2025hidellava}
Haiyang Guo, Fanhu Zeng, Ziwei Xiang, Fei Zhu, Da-Han Wang, Xu-Yao Zhang, and Cheng-Lin Liu.
\newblock Hide-llava: Hierarchical decoupling for continual instruction tuning of multimodal large language model.
\newblock \emph{arXiv}, 2025{\natexlab{a}}.

\bibitem[Guo et~al.(2025{\natexlab{b}})Guo, Zeng, Zhu, Liu, Wang, Xu, Zhang, and Liu]{guo2025federated}
Haiyang Guo, Fanhu Zeng, Fei Zhu, Wenzhuo Liu, Da-Han Wang, Jian Xu, Xu-Yao Zhang, and Cheng-Lin Liu.
\newblock Federated continual instruction tuning.
\newblock \emph{arXiv}, 2025{\natexlab{b}}.

\bibitem[Guo et~al.(2025{\natexlab{c}})Guo, Zeng, Zhu, Wang, Wang, Zhou, Zhao, Liu, Ma, Zhang, et~al.]{guo2025comprehensive}
Haiyang Guo, Fanhu Zeng, Fei Zhu, Jiayi Wang, Xukai Wang, Jingang Zhou, Hongbo Zhao, Wenzhuo Liu, Shijie Ma, Xu-Yao Zhang, et~al.
\newblock A comprehensive survey on continual learning in generative models.
\newblock \emph{arXiv}, 2025{\natexlab{c}}.

\bibitem[Guo et~al.(2025{\natexlab{d}})Guo, Zhang, Chen, Gao, Jiang, Wang, and Heng]{guo2025sciverse}
Ziyu Guo, Ray Zhang, Hao Chen, Jialin Gao, Dongzhi Jiang, Jiaze Wang, and Pheng-Ann Heng.
\newblock Sciverse: Unveiling the knowledge comprehension and visual reasoning of lmms on multi-modal scientific problems.
\newblock \emph{arXiv}, 2025{\natexlab{d}}.

\bibitem[Gurari et~al.(2018)Gurari, Li, Stangl, Guo, Lin, Grauman, Luo, and Bigham]{gurari2018vizwiz}
Danna Gurari, Qing Li, Abigale~J Stangl, Anhong Guo, Chi Lin, Kristen Grauman, Jiebo Luo, and Jeffrey~P Bigham.
\newblock Vizwiz grand challenge: Answering visual questions from blind people.
\newblock In \emph{Proceedings of the IEEE conference on computer vision and pattern recognition}, pages 3608--3617, 2018.

\bibitem[Han et~al.(2024)Han, Du, Du, Zhou, Wu, Zheng, and Han]{han2024slim}
Jiayi Han, Liang Du, Hongwei Du, Xiangguo Zhou, Yiwen Wu, Weibo Zheng, and Donghong Han.
\newblock Slim: Let llm learn more and forget less with soft lora and identity mixture.
\newblock \emph{arXiv}, 2024.

\bibitem[Hase et~al.(2025)Hase, Rashid, Lewis, Liu, Koike-Akino, Parsons, and Wang]{hase2025smoothed}
Ryo Hase, Md~Rafi~Ur Rashid, Ashley Lewis, Jing Liu, Toshiaki Koike-Akino, Kieran Parsons, and Ye Wang.
\newblock Smoothed embeddings for robust language models.
\newblock \emph{arXiv}, 2025.

\bibitem[He et~al.(2020)He, Zhang, Mou, Xing, and Xie]{he2020pathvqa}
Xuehai He, Yichen Zhang, Luntian Mou, Eric Xing, and Pengtao Xie.
\newblock Pathvqa: 30000+ questions for medical visual question answering.
\newblock \emph{arXiv}, 2020.

\bibitem[Herrera et~al.(2020)Herrera, Krach, and Teichmann]{herrera2020local}
Calypso Herrera, Florian Krach, and Josef Teichmann.
\newblock Local lipschitz bounds of deep neural networks.
\newblock \emph{arXiv}, 2020.

\bibitem[Hsiao et~al.(2022)Hsiao, Zubach, Baechler, Carbune, Lin, Wang, Sunkara, Zhu, and Chen]{hsiao2022screenqa}
Yu-Chung Hsiao, Fedir Zubach, Gilles Baechler, Victor Carbune, Jason Lin, Maria Wang, Srinivas Sunkara, Yun Zhu, and Jindong Chen.
\newblock Screenqa: Large-scale question-answer pairs over mobile app screenshots.
\newblock \emph{arXiv}, 2022.

\bibitem[Hu et~al.(2022)Hu, Shen, Wallis, Allen-Zhu, Li, Wang, Wang, Chen, et~al.]{hu2022lora}
Edward~J Hu, Yelong Shen, Phillip Wallis, Zeyuan Allen-Zhu, Yuanzhi Li, Shean Wang, Lu Wang, Weizhu Chen, et~al.
\newblock Lora: Low-rank adaptation of large language models.
\newblock \emph{ICLR}, 1\penalty0 (2):\penalty0 3, 2022.

\bibitem[Huai et~al.(2025)Huai, Zhou, Wu, Chen, Bai, Zhou, and He]{huai2025cl}
Tianyu Huai, Jie Zhou, Xingjiao Wu, Qin Chen, Qingchun Bai, Ze Zhou, and Liang He.
\newblock Cl-moe: Enhancing multimodal large language model with dual momentum mixture-of-experts for continual visual question answering.
\newblock In \emph{Proceedings of the Computer Vision and Pattern Recognition Conference}, pages 19608--19617, 2025.

\bibitem[Hudson and Manning(2019)]{hudson2019gqa}
Drew~A Hudson and Christopher~D Manning.
\newblock Gqa: A new dataset for real-world visual reasoning and compositional question answering.
\newblock In \emph{Proceedings of the IEEE/CVF conference on computer vision and pattern recognition}, pages 6700--6709, 2019.

\bibitem[Hussain et~al.(2023)Hussain, Liu, Sun, and Shan]{hussain2023m}
Atin~Sakkeer Hussain, Shansong Liu, Chenshuo Sun, and Ying Shan.
\newblock M2ugen: Multi-modal music understanding and generation with the power of large language models.
\newblock \emph{arXiv}, 2023.

\bibitem[Huybrechts et~al.(2025)Huybrechts, Ronanki, Jayanthi, Fitzgerald, and Veeravanallur]{huybrechts2025document}
Goeric Huybrechts, Srikanth Ronanki, Sai~Muralidhar Jayanthi, Jack Fitzgerald, and Srinivasan Veeravanallur.
\newblock Document haystack: A long context multimodal image/document understanding vision llm benchmark.
\newblock \emph{arXiv}, 2025.

\bibitem[Jaiswal et~al.(2025)Jaiswal, Liu, and Frommholz]{jaiswal2025multimodal}
Amit~Kumar Jaiswal, Haiming Liu, and Ingo Frommholz.
\newblock Multimodal rag enhanced visual description.
\newblock \emph{arXiv}, 2025.

\bibitem[Jang et~al.(2022)Jang, Ye, Lee, Yang, Shin, Han, Kim, and Seo]{jang2022temporalwiki}
Joel Jang, Seonghyeon Ye, Changho Lee, Sohee Yang, Joongbo Shin, Janghoon Han, Gyeonghun Kim, and Minjoon Seo.
\newblock Temporalwiki: A lifelong benchmark for training and evaluating ever-evolving language models.
\newblock \emph{arXiv}, 2022.

\bibitem[Johnson et~al.(2017)Johnson, Hariharan, Van Der~Maaten, Fei-Fei, Lawrence~Zitnick, and Girshick]{johnson2017clevr}
Justin Johnson, Bharath Hariharan, Laurens Van Der~Maaten, Li Fei-Fei, C Lawrence~Zitnick, and Ross Girshick.
\newblock Clevr: A diagnostic dataset for compositional language and elementary visual reasoning.
\newblock In \emph{Proceedings of the IEEE conference on computer vision and pattern recognition}, pages 2901--2910, 2017.

\bibitem[Kazemzadeh et~al.(2014)Kazemzadeh, Ordonez, Matten, and Berg]{kazemzadeh2014referitgame}
Sahar Kazemzadeh, Vicente Ordonez, Mark Matten, and Tamara Berg.
\newblock Referitgame: Referring to objects in photographs of natural scenes.
\newblock In \emph{Proceedings of the 2014 conference on empirical methods in natural language processing (EMNLP)}, pages 787--798, 2014.

\bibitem[Kembhavi et~al.(2016)Kembhavi, Salvato, Kolve, Seo, Hajishirzi, and Farhadi]{kembhavi2016diagram}
Aniruddha Kembhavi, Mike Salvato, Eric Kolve, Minjoon Seo, Hannaneh Hajishirzi, and Ali Farhadi.
\newblock A diagram is worth a dozen images.
\newblock \emph{arXiv}, 2016.

\bibitem[Kembhavi et~al.(2017)Kembhavi, Seo, Schwenk, Choi, Farhadi, and Hajishirzi]{kembhavi2017you}
Aniruddha Kembhavi, Minjoon Seo, Dustin Schwenk, Jonghyun Choi, Ali Farhadi, and Hannaneh Hajishirzi.
\newblock Are you smarter than a sixth grader? textbook question answering for multimodal machine comprehension.
\newblock In \emph{Proceedings of the IEEE Conference on Computer Vision and Pattern recognition}, pages 4999--5007, 2017.

\bibitem[Kirkpatrick et~al.(2017)Kirkpatrick, Pascanu, Rabinowitz, Veness, Desjardins, Rusu, Milan, Quan, Ramalho, Grabska-Barwinska, et~al.]{kirkpatrick2017overcoming}
James Kirkpatrick, Razvan Pascanu, Neil Rabinowitz, Joel Veness, Guillaume Desjardins, Andrei~A Rusu, Kieran Milan, John Quan, Tiago Ramalho, Agnieszka Grabska-Barwinska, et~al.
\newblock Overcoming catastrophic forgetting in neural networks.
\newblock \emph{Proceedings of the national academy of sciences}, 114\penalty0 (13):\penalty0 3521--3526, 2017.

\bibitem[Lavda et~al.(2018)Lavda, Ramapuram, Gregorova, and Kalousis]{lavda2018continual}
Frantzeska Lavda, Jason Ramapuram, Magda Gregorova, and Alexandros Kalousis.
\newblock Continual classification learning using generative models.
\newblock \emph{arXiv}, 2018.

\bibitem[Ledoux et~al.(2015)Ledoux, Nourdin, and Peccati]{ledoux2015stein}
Michel Ledoux, Ivan Nourdin, and Giovanni Peccati.
\newblock Stein’s method, logarithmic sobolev and transport inequalities.
\newblock \emph{Geometric and Functional Analysis}, 25\penalty0 (1):\penalty0 256--306, 2015.

\bibitem[Li et~al.(2024{\natexlab{a}})Li, Zhang, Zhang, Guo, Zhang, Zhang, Li, Liu, and Li]{li2024llavanext-ablations}
Bo Li, Hao Zhang, Kaichen Zhang, Dong Guo, Yuanhan Zhang, Renrui Zhang, Feng Li, Ziwei Liu, and Chunyuan Li.
\newblock Llava-next: What else influences visual instruction tuning beyond data?, 2024{\natexlab{a}}.

\bibitem[Li et~al.(2024{\natexlab{b}})Li, Zhang, Zhang, Guo, Zhang, Li, Zhang, Liu, and Li]{li2024llavanext-strong}
Bo Li, Kaichen Zhang, Hao Zhang, Dong Guo, Renrui Zhang, Feng Li, Yuanhan Zhang, Ziwei Liu, and Chunyuan Li.
\newblock Llava-next: Stronger llms supercharge multimodal capabilities in the wild, 2024{\natexlab{b}}.

\bibitem[Li et~al.(2024{\natexlab{c}})Li, Zhang, Guo, Zhang, Li, Zhang, Zhang, Li, Liu, and Li]{li2024llava1vision}
Bo Li, Yuanhan Zhang, Dong Guo, Renrui Zhang, Feng Li, Hao Zhang, Kaichen Zhang, Yanwei Li, Ziwei Liu, and Chunyuan Li.
\newblock Llava-onevision: Easy visual task transfer.
\newblock \emph{arXiv}, 2024{\natexlab{c}}.

\bibitem[Li et~al.(2024{\natexlab{d}})Li, Wong, Zhang, Usuyama, Liu, Yang, Naumann, Poon, and Gao]{li2024llava}
Chunyuan Li, Cliff Wong, Sheng Zhang, Naoto Usuyama, Haotian Liu, Jianwei Yang, Tristan Naumann, Hoifung Poon, and Jianfeng Gao.
\newblock Llava-med: Training a large language-and-vision assistant for biomedicine in one day.
\newblock \emph{Advances in Neural Information Processing Systems}, 36, 2024{\natexlab{d}}.

\bibitem[Li et~al.(2024{\natexlab{e}})Li, Zhang, Zhang, Zhang, Li, Li, Ma, and Li]{li2024llavanext-interleave}
Feng Li, Renrui Zhang, Hao Zhang, Yuanhan Zhang, Bo Li, Wei Li, Zejun Ma, and Chunyuan Li.
\newblock Llava-next: Tackling multi-image, video, and 3d in large multimodal models, 2024{\natexlab{e}}.

\bibitem[Li et~al.(2022)Li, Li, Xiong, and Hoi]{li2022blip}
Junnan Li, Dongxu Li, Caiming Xiong, and Steven Hoi.
\newblock Blip: Bootstrapping language-image pre-training for unified vision-language understanding and generation.
\newblock In \emph{International Conference on Machine Learning}, pages 12888--12900. PMLR, 2022.

\bibitem[Li et~al.(2023{\natexlab{a}})Li, Li, Savarese, and Hoi]{li2023blip}
Junnan Li, Dongxu Li, Silvio Savarese, and Steven Hoi.
\newblock Blip-2: Bootstrapping language-image pre-training with frozen image encoders and large language models.
\newblock In \emph{International conference on machine learning}, pages 19730--19742. PMLR, 2023{\natexlab{a}}.

\bibitem[Li et~al.(2023{\natexlab{b}})Li, He, Wang, Li, Wang, Luo, Wang, Wang, and Qiao]{li2023videochat}
KunChang Li, Yinan He, Yi Wang, Yizhuo Li, Wenhai Wang, Ping Luo, Yali Wang, Limin Wang, and Yu Qiao.
\newblock Videochat: Chat-centric video understanding.
\newblock \emph{arXiv}, 2023{\natexlab{b}}.

\bibitem[Li et~al.(2025)Li, Wang, and Jia]{li2025llama}
Yanwei Li, Chengyao Wang, and Jiaya Jia.
\newblock Llama-vid: An image is worth 2 tokens in large language models.
\newblock In \emph{European Conference on Computer Vision}, pages 323--340. Springer, 2025.

\bibitem[Li and Hoiem(2017)]{li2017learning}
Zhizhong Li and Derek Hoiem.
\newblock Learning without forgetting.
\newblock \emph{IEEE transactions on pattern analysis and machine intelligence}, 40\penalty0 (12):\penalty0 2935--2947, 2017.

\bibitem[Li et~al.(2024{\natexlab{f}})Li, Yang, Liu, Ma, Zhang, Yang, Sun, Liu, and Bai]{li2024monkey}
Zhang Li, Biao Yang, Qiang Liu, Zhiyin Ma, Shuo Zhang, Jingxu Yang, Yabo Sun, Yuliang Liu, and Xiang Bai.
\newblock Monkey: Image resolution and text label are important things for large multi-modal models.
\newblock In \emph{proceedings of the IEEE/CVF conference on computer vision and pattern recognition}, pages 26763--26773, 2024{\natexlab{f}}.

\bibitem[Lin et~al.(2023)Lin, Ye, Zhu, Cui, Ning, Jin, and Yuan]{lin2023video}
Bin Lin, Yang Ye, Bin Zhu, Jiaxi Cui, Munan Ning, Peng Jin, and Li Yuan.
\newblock Video-llava: Learning united visual representation by alignment before projection.
\newblock \emph{arXiv}, 2023.

\bibitem[Lin et~al.(2025)Lin, Zettlemoyer, Ghosh, Yih, Markosyan, Berges, and O{\u{g}}uz]{lin2025continual}
Jessy Lin, Luke Zettlemoyer, Gargi Ghosh, Wen-Tau Yih, Aram Markosyan, Vincent-Pierre Berges, and Barlas O{\u{g}}uz.
\newblock Continual learning via sparse memory finetuning.
\newblock \emph{arXiv}, 2025.

\bibitem[Lin et~al.(2017)Lin, Goyal, Girshick, He, and Doll{\'a}r]{lin2017focal}
Tsung-Yi Lin, Priya Goyal, Ross Girshick, Kaiming He, and Piotr Doll{\'a}r.
\newblock Focal loss for dense object detection.
\newblock In \emph{Proceedings of the IEEE international conference on computer vision}, pages 2980--2988, 2017.

\bibitem[Liu et~al.(2024{\natexlab{a}})Liu, Huang, Hou, Wang, fei Yin, Gong, Gao, and Ouyang]{Liu2024Uni3DLLMUP}
Dingning Liu, Xiaoshui Huang, Yuenan Hou, Zhihui Wang, Zhen fei Yin, Yongshun Gong, Peng Gao, and Wanli Ouyang.
\newblock Uni3d-llm: Unifying point cloud perception, generation and editing with large language models.
\newblock \emph{arXiv}, 2024{\natexlab{a}}.

\bibitem[Liu et~al.(2024{\natexlab{b}})Liu, Li, Li, and Lee]{liu2024improved}
Haotian Liu, Chunyuan Li, Yuheng Li, and Yong~Jae Lee.
\newblock Improved baselines with visual instruction tuning.
\newblock In \emph{Proceedings of the IEEE/CVF Conference on Computer Vision and Pattern Recognition}, pages 26296--26306, 2024{\natexlab{b}}.

\bibitem[Liu et~al.(2024{\natexlab{c}})Liu, Li, Li, Li, Zhang, Shen, and Lee]{liu2024llavanext}
Haotian Liu, Chunyuan Li, Yuheng Li, Bo Li, Yuanhan Zhang, Sheng Shen, and Yong~Jae Lee.
\newblock Llava-next: Improved reasoning, ocr, and world knowledge, 2024{\natexlab{c}}.

\bibitem[Liu et~al.(2024{\natexlab{d}})Liu, Li, Wu, and Lee]{liu2024visual}
Haotian Liu, Chunyuan Li, Qingyang Wu, and Yong~Jae Lee.
\newblock Visual instruction tuning.
\newblock \emph{Advances in neural information processing systems}, 36, 2024{\natexlab{d}}.

\bibitem[Liu et~al.(2024{\natexlab{e}})Liu, Li, Huang, Yang, Yu, Li, Yin, Liu, Jin, and Bai]{liu2024ocrbench}
Yuliang Liu, Zhang Li, Mingxin Huang, Biao Yang, Wenwen Yu, Chunyuan Li, Xu-Cheng Yin, Cheng-Lin Liu, Lianwen Jin, and Xiang Bai.
\newblock Ocrbench: on the hidden mystery of ocr in large multimodal models.
\newblock \emph{Science China Information Sciences}, 67\penalty0 (12):\penalty0 220102, 2024{\natexlab{e}}.

\bibitem[Lobry et~al.(2020)Lobry, Marcos, Murray, and Tuia]{lobry2020rsvqa}
Sylvain Lobry, Diego Marcos, Jesse Murray, and Devis Tuia.
\newblock Rsvqa: Visual question answering for remote sensing data.
\newblock \emph{IEEE Transactions on Geoscience and Remote Sensing}, 58\penalty0 (12):\penalty0 8555--8566, 2020.

\bibitem[Lu et~al.(2022)Lu, Mishra, Xia, Qiu, Chang, Zhu, Tafjord, Clark, and Kalyan]{lu2022learn}
Pan Lu, Swaroop Mishra, Tanglin Xia, Liang Qiu, Kai-Wei Chang, Song-Chun Zhu, Oyvind Tafjord, Peter Clark, and Ashwin Kalyan.
\newblock Learn to explain: Multimodal reasoning via thought chains for science question answering.
\newblock \emph{Advances in Neural Information Processing Systems}, 35:\penalty0 2507--2521, 2022.

\bibitem[Mallya et~al.(2018)Mallya, Davis, and Lazebnik]{mallya2018piggyback}
Arun Mallya, Dillon Davis, and Svetlana Lazebnik.
\newblock Piggyback: Adapting a single network to multiple tasks by learning to mask weights.
\newblock In \emph{Proceedings of the European conference on computer vision (ECCV)}, pages 67--82, 2018.

\bibitem[Mao et~al.(2016)Mao, Huang, Toshev, Camburu, Yuille, and Murphy]{mao2016generation}
Junhua Mao, Jonathan Huang, Alexander Toshev, Oana Camburu, Alan~L Yuille, and Kevin Murphy.
\newblock Generation and comprehension of unambiguous object descriptions.
\newblock In \emph{Proceedings of the IEEE conference on computer vision and pattern recognition}, pages 11--20, 2016.

\bibitem[Mishra et~al.(2019)Mishra, Shekhar, Singh, and Chakraborty]{mishra2019ocr}
Anand Mishra, Shashank Shekhar, Ajeet~Kumar Singh, and Anirban Chakraborty.
\newblock Ocr-vqa: Visual question answering by reading text in images.
\newblock In \emph{2019 international conference on document analysis and recognition (ICDAR)}, pages 947--952. IEEE, 2019.

\bibitem[Nguyen et~al.(2024)Nguyen, Truong, Nguyen, Dowling, Li, and Luu]{nguyen2024insect}
Hoang-Quan Nguyen, Thanh-Dat Truong, Xuan~Bac Nguyen, Ashley Dowling, Xin Li, and Khoa Luu.
\newblock Insect-foundation: A foundation model and large-scale 1m dataset for visual insect understanding.
\newblock In \emph{Proceedings of the IEEE/CVF Conference on Computer Vision and Pattern Recognition}, pages 21945--21955, 2024.

\bibitem[Nguyen et~al.(2025)Nguyen, Nguyen, Cothren, Yilmaz, and Luu]{nguyen2025hyperglm}
Trong-Thuan Nguyen, Pha Nguyen, Jackson Cothren, Alper Yilmaz, and Khoa Luu.
\newblock Hyperglm: Hypergraph for video scene graph generation and anticipation.
\newblock In \emph{Proceedings of the Computer Vision and Pattern Recognition Conference}, pages 29150--29160, 2025.

\bibitem[Nguyen et~al.(2026)Nguyen, Serna-Aguilera, Choudhary, Sinha, Li, and Luu]{nguyen2026cobra}
Xuan-Bac Nguyen, Manuel Serna-Aguilera, Arabinda~Kumar Choudhary, Pawan Sinha, Xin Li, and Khoa Luu.
\newblock Cobra: A continual learning approach to vision-brain understanding: X. nguyen et al.
\newblock \emph{International Journal of Computer Vision}, 134\penalty0 (1):\penalty0 30, 2026.

\bibitem[Qiu et~al.(2023)Qiu, Shen, Sun, Zheng, Chang, Zheng, and Wang]{sats_prj_2023}
Yiqiao Qiu, Yixing Shen, Zhuohao Sun, Yanchong Zheng, Xiaobin Chang, Weishi Zheng, and Ruixuan Wang.
\newblock Sats: Self-attention transfer for continual semantic segmentation.
\newblock \emph{Pattern Recognition}, 138:\penalty0 109383, 2023.

\bibitem[Rafailov et~al.(2023)Rafailov, Sharma, Mitchell, Manning, Ermon, and Finn]{rafailov2023direct}
Rafael Rafailov, Archit Sharma, Eric Mitchell, Christopher~D Manning, Stefano Ermon, and Chelsea Finn.
\newblock Direct preference optimization: Your language model is secretly a reward model.
\newblock \emph{Advances in neural information processing systems}, 36:\penalty0 53728--53741, 2023.

\bibitem[Razdaibiedina et~al.(2023)Razdaibiedina, Mao, Hou, Khabsa, Lewis, and Almahairi]{razdaibiedina2023progressive}
Anastasia Razdaibiedina, Yuning Mao, Rui Hou, Madian Khabsa, Mike Lewis, and Amjad Almahairi.
\newblock Progressive prompts: Continual learning for language models.
\newblock \emph{arXiv}, 2023.

\bibitem[Schulman et~al.(2017)Schulman, Wolski, Dhariwal, Radford, and Klimov]{schulman2017proximal}
John Schulman, Filip Wolski, Prafulla Dhariwal, Alec Radford, and Oleg Klimov.
\newblock Proximal policy optimization algorithms.
\newblock \emph{arXiv}, 2017.

\bibitem[Serna-Aguilera et~al.(2026)Serna-Aguilera, Anderes, Dobbs, and Luu]{serna2026nico}
Manuel Serna-Aguilera, Raegan Anderes, Page Dobbs, and Khoa Luu.
\newblock Nico-rag: Multimodal hypergraph retrieval-augmented generation for understanding the nicotine public health crisis.
\newblock \emph{arXiv}, 2026.

\bibitem[Shen et~al.(2025)Shen, Pei, Peng, Song, Liu, Peng, Sun, Hao, Wang, Zhang, and Zhou]{shen2025skyworkr1v3}
Wei Shen, Jiangbo Pei, Yi Peng, Xuchen Song, Yang Liu, Jian Peng, Haofeng Sun, Yunzhuo Hao, Peiyu Wang, Jianhao Zhang, and Yahui Zhou.
\newblock Skywork-r1v3 technical report, 2025.

\bibitem[Shi et~al.(2024{\natexlab{a}})Shi, Xu, Wang, Qin, Wang, Wang, Wang, Ebrahimi, and Wang]{shi2024continual_survey}
Haizhou Shi, Zihao Xu, Hengyi Wang, Weiyi Qin, Wenyuan Wang, Yibin Wang, Zifeng Wang, Sayna Ebrahimi, and Hao Wang.
\newblock Continual learning of large language models: A comprehensive survey.
\newblock \emph{ACM Computing Surveys}, 2024{\natexlab{a}}.

\bibitem[Shi et~al.(2024{\natexlab{b}})Shi, Hu, Bin, Liu, Yang, Ng, Bing, and Lee]{shi2024math}
Wenhao Shi, Zhiqiang Hu, Yi Bin, Junhua Liu, Yang Yang, See-Kiong Ng, Lidong Bing, and Roy Ka-Wei Lee.
\newblock Math-llava: Bootstrapping mathematical reasoning for multimodal large language models.
\newblock \emph{arXiv}, 2024{\natexlab{b}}.

\bibitem[Shi et~al.(2024{\natexlab{c}})Shi, Liu, Su, Wu, Liu, Song, and Wang]{shi2024densely}
Zenglin Shi, Pei Liu, Tong Su, Yunpeng Wu, Kuien Liu, Yu Song, and Meng Wang.
\newblock Densely distilling cumulative knowledge for continual learning.
\newblock \emph{arXiv}, 2024{\natexlab{c}}.

\bibitem[Siino(2024)]{siino2024mcrock}
Marco Siino.
\newblock Mcrock at semeval-2024 task 4: Mistral 7b for multilingual detection of persuasion techniques in memes.
\newblock In \emph{Proceedings of the 18th International Workshop on Semantic Evaluation (SemEval-2024)}, pages 53--59, 2024.

\bibitem[Sima et~al.(2024)Sima, Renz, Chitta, Chen, Zhang, Xie, Bei{\ss}wenger, Luo, Geiger, and Li]{sima2024drivelm}
Chonghao Sima, Katrin Renz, Kashyap Chitta, Li Chen, Hanxue Zhang, Chengen Xie, Jens Bei{\ss}wenger, Ping Luo, Andreas Geiger, and Hongyang Li.
\newblock Drivelm: Driving with graph visual question answering.
\newblock In \emph{European conference on computer vision}, pages 256--274. Springer, 2024.

\bibitem[Singh et~al.(2019)Singh, Natarajan, Shah, Jiang, Chen, Batra, Parikh, and Rohrbach]{singh2019towards}
Amanpreet Singh, Vivek Natarajan, Meet Shah, Yu Jiang, Xinlei Chen, Dhruv Batra, Devi Parikh, and Marcus Rohrbach.
\newblock Towards vqa models that can read.
\newblock In \emph{Proceedings of the IEEE/CVF conference on computer vision and pattern recognition}, pages 8317--8326, 2019.

\bibitem[Smith et~al.(2023)Smith, Karlinsky, Gutta, Cascante-Bonilla, Kim, Arbelle, Panda, Feris, and Kira]{smith2023coda}
James~Seale Smith, Leonid Karlinsky, Vyshnavi Gutta, Paola Cascante-Bonilla, Donghyun Kim, Assaf Arbelle, Rameswar Panda, Rogerio Feris, and Zsolt Kira.
\newblock Coda-prompt: Continual decomposed attention-based prompting for rehearsal-free continual learning.
\newblock In \emph{Proceedings of the IEEE/CVF conference on computer vision and pattern recognition}, pages 11909--11919, 2023.

\bibitem[Suhr and Artzi(2023)]{suhr2023continual}
Alane Suhr and Yoav Artzi.
\newblock Continual learning for instruction following from realtime feedback.
\newblock \emph{Advances in Neural Information Processing Systems}, 36:\penalty0 32340--32359, 2023.

\bibitem[Tanaka et~al.(2025)Tanaka, Iki, Hasegawa, Nishida, Saito, and Suzuki]{tanaka2025vdocrag}
Ryota Tanaka, Taichi Iki, Taku Hasegawa, Kyosuke Nishida, Kuniko Saito, and Jun Suzuki.
\newblock Vdocrag: Retrieval-augmented generation over visually-rich documents.
\newblock In \emph{Proceedings of the Computer Vision and Pattern Recognition Conference}, pages 24827--24837, 2025.

\bibitem[Tian et~al.(2019)Tian, Krishnan, and Isola]{tian2019contrastive}
Yonglong Tian, Dilip Krishnan, and Phillip Isola.
\newblock Contrastive representation distillation.
\newblock \emph{arXiv}, 2019.

\bibitem[Tran et~al.(2025)Tran, Truong, and Luu]{tran2025bima}
Huu-Thien Tran, Thanh-Dat Truong, and Khoa Luu.
\newblock Bima: Bijective maximum likelihood learning approach to hallucination prediction and mitigation in large vision-language models.
\newblock In \emph{Proceedings of the Computer Vision and Pattern Recognition Conference}, pages 5302--5311, 2025.

\bibitem[Truong et~al.(2021)Truong, Duong, Le, Phung, Rainwater, and Luu]{truong2021bimal}
Thanh-Dat Truong, Chi~Nhan Duong, Ngan Le, Son~Lam Phung, Chase Rainwater, and Khoa Luu.
\newblock Bimal: Bijective maximum likelihood approach to domain adaptation in semantic scene segmentation.
\newblock In \emph{Proceedings of the IEEE/CVF International Conference on Computer Vision}, pages 8548--8557, 2021.

\bibitem[Truong et~al.(2023{\natexlab{a}})Truong, Le, Raj, Cothren, and Luu]{truong2023fredom}
Thanh-Dat Truong, Ngan Le, Bhiksha Raj, Jackson Cothren, and Khoa Luu.
\newblock Fredom: Fairness domain adaptation approach to semantic scene understanding.
\newblock In \emph{IEEE/CVF Computer Vision and Pattern Recognition (CVPR)}, 2023{\natexlab{a}}.

\bibitem[Truong et~al.(2023{\natexlab{b}})Truong, Nguyen, Raj, and Luu]{truong2023fairness}
Thanh-Dat Truong, Hoang-Quan Nguyen, Bhiksha Raj, and Khoa Luu.
\newblock Fairness continual learning approach to semantic scene understanding in open-world environments.
\newblock \emph{Advances in Neural Information Processing Systems}, 36:\penalty0 65456--65467, 2023{\natexlab{b}}.

\bibitem[Truong et~al.(2024)Truong, Prabhu, Wang, Raj, Gauch, Subbiah, and Luu]{truong2024eagle}
Thanh-Dat Truong, Utsav Prabhu, Dongyi Wang, Bhiksha Raj, Susan Gauch, Jeyamkondan Subbiah, and Khoa Luu.
\newblock Eagle: Efficient adaptive geometry-based learning in cross-view understanding.
\newblock \emph{Advances in Neural Information Processing Systems}, 37:\penalty0 137309--137333, 2024.

\bibitem[Truong et~al.(2025{\natexlab{a}})Truong, Bobda, Agarwal, and Luu]{truong2025mango}
Thanh-Dat Truong, Christophe Bobda, Nitin Agarwal, and Khoa Luu.
\newblock Mango: Multimodal attention-based normalizing flow approach to fusion learning.
\newblock \emph{The Thirty-ninth Annual Conference on Neural Information Processing Systems}, 2025{\natexlab{a}}.

\bibitem[Truong et~al.(2025{\natexlab{b}})Truong, Nguyen, Nguyen, Dowling, Li, and Luu]{truong2025insect}
Thanh-Dat Truong, Hoang-Quan Nguyen, Xuan-Bac Nguyen, Ashley Dowling, Xin Li, and Khoa Luu.
\newblock Insect-foundation: A foundation model and large multimodal dataset for vision-language insect understanding.
\newblock \emph{International Journal of Computer Vision}, pages 1--26, 2025{\natexlab{b}}.

\bibitem[Truong et~al.(2025{\natexlab{c}})Truong, Prabhu, Raj, Cothren, and Luu]{truong2025falcon}
Thanh-Dat Truong, Utsav Prabhu, Bhiksha Raj, Jackson Cothren, and Khoa Luu.
\newblock Falcon: Fairness learning via contrastive attention approach to continual semantic scene understanding.
\newblock In \emph{Proceedings of the Computer Vision and Pattern Recognition Conference}, pages 15065--15075, 2025{\natexlab{c}}.

\bibitem[Truong et~al.(2025{\natexlab{d}})Truong, Tran, Son, Raj, and Luu]{truong2025directedtokens}
Thanh-Dat Truong, Huu-Thien Tran, Tran~Thai Son, Bhiksha Raj, and Khoa Luu.
\newblock Directed-tokens: A robust multi-modality alignment approach to large language-vision models.
\newblock In \emph{The Thirty-ninth Annual Conference on Neural Information Processing Systems}, 2025{\natexlab{d}}.

\bibitem[Wang et~al.(2021)Wang, Li, Zhou, Chen, Grossman, and Li]{wang2021screen2words}
Bryan Wang, Gang Li, Xin Zhou, Zhourong Chen, Tovi Grossman, and Yang Li.
\newblock Screen2words: Automatic mobile ui summarization with multimodal learning.
\newblock In \emph{The 34th Annual ACM Symposium on User Interface Software and Technology}, pages 498--510, 2021.

\bibitem[Wang et~al.(2022{\natexlab{a}})Wang, Stavrou, and Skoglund]{wang2022generalizations}
Shuchan Wang, Photios~A Stavrou, and Mikael Skoglund.
\newblock Generalizations of talagrand inequality for sinkhorn distance using entropy power inequality.
\newblock \emph{Entropy}, 24\penalty0 (2):\penalty0 306, 2022{\natexlab{a}}.

\bibitem[Wang et~al.(2025)Wang, Gao, Gu, Pu, Cui, Wei, Liu, Jing, Ye, Shao, et~al.]{wang2025internvl35}
Weiyun Wang, Zhangwei Gao, Lixin Gu, Hengjun Pu, Long Cui, Xingguang Wei, Zhaoyang Liu, Linglin Jing, Shenglong Ye, Jie Shao, et~al.
\newblock Internvl3.5: Advancing open-source multimodal models in versatility, reasoning, and efficiency, 2025.

\bibitem[Wang et~al.(2023)Wang, Chen, Ge, Xia, Bao, Zheng, Zhang, Gui, and Huang]{wang2023orthogonal}
Xiao Wang, Tianze Chen, Qiming Ge, Han Xia, Rong Bao, Rui Zheng, Qi Zhang, Tao Gui, and Xuanjing Huang.
\newblock Orthogonal subspace learning for language model continual learning.
\newblock \emph{arXiv}, 2023.

\bibitem[Wang et~al.(2024)Wang, Yu, Wang, Heng, Chen, Ye, Xie, Xie, and Zhang]{wang2024exploring}
Yidong Wang, Zhuohao Yu, Jindong Wang, Qiang Heng, Hao Chen, Wei Ye, Rui Xie, Xing Xie, and Shikun Zhang.
\newblock Exploring vision-language models for imbalanced learning.
\newblock \emph{International Journal of Computer Vision}, 132\penalty0 (1):\penalty0 224--237, 2024.

\bibitem[Wang et~al.(2022{\natexlab{b}})Wang, Zhang, Lee, Zhang, Sun, Ren, Su, Perot, Dy, and Pfister]{wang2022learning}
Zifeng Wang, Zizhao Zhang, Chen-Yu Lee, Han Zhang, Ruoxi Sun, Xiaoqi Ren, Guolong Su, Vincent Perot, Jennifer Dy, and Tomas Pfister.
\newblock Learning to prompt for continual learning.
\newblock In \emph{Proceedings of the IEEE/CVF conference on computer vision and pattern recognition}, pages 139--149, 2022{\natexlab{b}}.

\bibitem[Weng et~al.(2024)Weng, Han, He, Chang, and Zhuang]{weng2024longvlm}
Yuetian Weng, Mingfei Han, Haoyu He, Xiaojun Chang, and Bohan Zhuang.
\newblock Longvlm: Efficient long video understanding via large language models.
\newblock \emph{arXiv}, 2024.

\bibitem[Xu et~al.(2024)Xu, Wang, Wang, Chen, Pang, and Lin]{xu2024pointllm}
Runsen Xu, Xiaolong Wang, Tai Wang, Yilun Chen, Jiangmiao Pang, and Dahua Lin.
\newblock Pointllm: Empowering large language models to understand point clouds.
\newblock In \emph{ECCV}, 2024.

\bibitem[Yang et~al.(2023)Yang, Liu, Zhang, Pan, Guo, Li, Chen, Gao, Guo, and Zhang]{yang2023lidar}
Senqiao Yang, Jiaming Liu, Ray Zhang, Mingjie Pan, Zoey Guo, Xiaoqi Li, Zehui Chen, Peng Gao, Yandong Guo, and Shanghang Zhang.
\newblock Lidar-llm: Exploring the potential of large language models for 3d lidar understanding.
\newblock \emph{arXiv}, 2023.

\bibitem[Ye et~al.(2023)Ye, Yin, Zhang, Du, Chen, Wang, and Ma]{ye2023unit}
Muchao Ye, Ziyi Yin, Tianrong Zhang, Tianyu Du, Jinghui Chen, Ting Wang, and Fenglong Ma.
\newblock Unit: a unified look at certified robust training against text adversarial perturbation.
\newblock \emph{Advances in Neural Information Processing Systems}, 36:\penalty0 22351--22368, 2023.

\bibitem[Yin et~al.(2022)Yin, Li, and Xiong]{yin2022contintin}
Wenpeng Yin, Jia Li, and Caiming Xiong.
\newblock Contintin: Continual learning from task instructions.
\newblock \emph{arXiv}, 2022.

\bibitem[Ying et~al.(2024)Ying, Meng, Wang, Li, Lin, Yang, Zhang, Zhang, Lin, Liu, et~al.]{ying2024mmt}
Kaining Ying, Fanqing Meng, Jin Wang, Zhiqian Li, Han Lin, Yue Yang, Hao Zhang, Wenbo Zhang, Yuqi Lin, Shuo Liu, et~al.
\newblock Mmt-bench: A comprehensive multimodal benchmark for evaluating large vision-language models towards multitask agi.
\newblock \emph{arXiv}, 2024.

\bibitem[Yu et~al.(2024)Yu, Tang, Xu, Cui, Ran, Yan, Liu, Wang, Han, Liu, et~al.]{yu2024visrag}
Shi Yu, Chaoyue Tang, Bokai Xu, Junbo Cui, Junhao Ran, Yukun Yan, Zhenghao Liu, Shuo Wang, Xu Han, Zhiyuan Liu, et~al.
\newblock Visrag: Vision-based retrieval-augmented generation on multi-modality documents.
\newblock \emph{arXiv}, 2024.

\bibitem[Zan et~al.(2022)Zan, Chen, Yang, Lin, Kim, Guan, Wang, Chen, and Lou]{zan2022cert}
Daoguang Zan, Bei Chen, Dejian Yang, Zeqi Lin, Minsu Kim, Bei Guan, Yongji Wang, Weizhu Chen, and Jian-Guang Lou.
\newblock Cert: continual pre-training on sketches for library-oriented code generation.
\newblock \emph{arXiv}, 2022.

\bibitem[Zeng et~al.(2024)Zeng, Zhu, Guo, Zhang, and Liu]{zeng2024modalprompt}
Fanhu Zeng, Fei Zhu, Haiyang Guo, Xu-Yao Zhang, and Cheng-Lin Liu.
\newblock Modalprompt: Towards efficient multimodal continual instruction tuning with dual-modality guided prompt.
\newblock \emph{arXiv}, 2024.

\bibitem[Zhang et~al.(2024{\natexlab{a}})Zhang, Lei, Gui, Yang, He, Wang, and Xu]{zhang2024cppo}
Han Zhang, Yu Lei, Lin Gui, Min Yang, Yulan He, Hui Wang, and Ruifeng Xu.
\newblock Cppo: Continual learning for reinforcement learning with human feedback.
\newblock In \emph{The Twelfth International Conference on Learning Representations}, 2024{\natexlab{a}}.

\bibitem[Zhang et~al.(2023)Zhang, Shen, Liu, Liu, Bendersky, Najork, and Zhang]{zhang2023not}
Rongzhi Zhang, Jiaming Shen, Tianqi Liu, Jialu Liu, Michael Bendersky, Marc Najork, and Chao Zhang.
\newblock Do not blindly imitate the teacher: Using perturbed loss for knowledge distillation.
\newblock \emph{arXiv}, 2023.

\bibitem[Zhang et~al.(2024{\natexlab{b}})Zhang, Wei, Jiang, Guo, Li, Zhang, Tong, Liu, Zhou, Wei, et~al.]{zhang2024mavis}
Renrui Zhang, Xinyu Wei, Dongzhi Jiang, Ziyu Guo, Shicheng Li, Yichi Zhang, Chengzhuo Tong, Jiaming Liu, Aojun Zhou, Bin Wei, et~al.
\newblock Mavis: Mathematical visual instruction tuning with an automatic data engine.
\newblock \emph{arXiv}, 2024{\natexlab{b}}.

\bibitem[Zhang et~al.(2025{\natexlab{a}})Zhang, Gui, Sun, Feng, Xu, Zhang, Fu, Li, Hauptmann, Bisk, and Yang]{zhang-etal-2025-direct}
Ruohong Zhang, Liangke Gui, Zhiqing Sun, Yihao Feng, Keyang Xu, Yuanhan Zhang, Di Fu, Chunyuan Li, Alexander~G Hauptmann, Yonatan Bisk, and Yiming Yang.
\newblock Direct preference optimization of video large multimodal models from language model reward.
\newblock In \emph{Proceedings of the 2025 Conference of the Nations of the Americas Chapter of the Association for Computational Linguistics: Human Language Technologies (Volume 1: Long Papers)}, pages 694--717. Association for Computational Linguistics, 2025{\natexlab{a}}.

\bibitem[Zhang et~al.(2025{\natexlab{b}})Zhang, Bai, Yang, and Liang]{zhang2025c}
Xin Zhang, Liang Bai, Xian Yang, and Jiye Liang.
\newblock C-lora: Continual low-rank adaptation for pre-trained models.
\newblock \emph{arXiv}, 2025{\natexlab{b}}.

\bibitem[Zhang et~al.(2024{\natexlab{c}})Zhang, Li, Liu, Lee, Gui, Fu, Feng, Liu, and Li]{zhang2024llavanextvideo}
Yuanhan Zhang, Bo Li, haotian Liu, Yong~jae Lee, Liangke Gui, Di Fu, Jiashi Feng, Ziwei Liu, and Chunyuan Li.
\newblock Llava-next: A strong zero-shot video understanding model, 2024{\natexlab{c}}.

\bibitem[Zhao et~al.(2025)Zhao, Zhu, Guo, Wang, Wang, Meng, and Zhang]{zhao2025mllm}
Hongbo Zhao, Fei Zhu, Haiyang Guo, Meng Wang, Rundong Wang, Gaofeng Meng, and Zhaoxiang Zhang.
\newblock Mllm-cl: Continual learning for multimodal large language models.
\newblock \emph{arXiv}, 2025.

\bibitem[Zhao et~al.(2023)Zhao, Misra, Kr{\"a}henb{\"u}hl, and Girdhar]{zhao2023learning}
Yue Zhao, Ishan Misra, Philipp Kr{\"a}henb{\"u}hl, and Rohit Girdhar.
\newblock Learning video representations from large language models.
\newblock In \emph{Proceedings of the IEEE/CVF Conference on Computer Vision and Pattern Recognition}, pages 6586--6597, 2023.

\end{thebibliography}
}

\clearpage
\appendix
\maketitlesupplementary

\section{Proof of Lemmas}

\subsection{Proof of Lemma \ref{lemma:lower-bound-kl}}

The DPO loss in Eqn.~\eqref{eqn:dpo_loss} can be rewrite as follows:
\begin{equation}\label{eqn:dpo-loss-rewrite}
\footnotesize
\begin{split}
    \mathcal{L}_{\mathrm{DPO}}(\pi_t, \pi_{t-1}) &= \mathbb{E}_{x, y^+, y}\Bigg[\ell_{\beta}\left(\Delta_t(y^+, y^-) \right)\Bigg] \\
    \ell_{\beta}\left( u \right) &= \log\Big(1+\exp(\beta u)\Big) \\
    \Delta_t(y^+, y^-) &= \Big(\log \pi_t(y^+ | x)-\log \pi_t(y^- | x)\Big) \\ &\quad\quad - \Big(\log \pi_{t-1}(y^+ | x)-\log \pi_{t-1}(y^- | x)\Big)
\end{split}
\end{equation}

\begin{lemma}\label{lemma:pairwise-logistic}
\textbf{Pairwise Logistic Lower Bound by Margin}. For any $u \in \mathbb{R}$ and $\beta > 0$,
\[
\ell_\beta(u) = \log(1+e^{-\beta u})
\;\;\ge\;\; \log 2 - \tfrac{\beta}{2} u.
\]
\end{lemma}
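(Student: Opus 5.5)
The quickest route is an exact algebraic rewriting rather than a calculus argument. Set $v=\beta u\in\mathbb{R}$ and factor $e^{-v/2}$ out of the argument of the logarithm:
\[
\log\bigl(1+e^{-v}\bigr)=\log\Bigl(e^{-v/2}\bigl(e^{v/2}+e^{-v/2}\bigr)\Bigr)=-\tfrac{v}{2}+\log\bigl(2\cosh(\tfrac{v}{2})\bigr).
\]
Because $\cosh(w)\ge 1$ for every real $w$, the second term is at least $\log 2$. Hence $\ell_\beta(u)\ge \log 2-\tfrac{\beta}{2}u$. Equality holds exactly when $u=0$, since only then is $\cosh(\beta u/2)=1$.

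As a cross-check, and as an equivalent proof, one can use convexity. The function $f(v)=\log(1+e^{-v})$ has $f''(v)=\sigma(v)(1-\sigma(v))>0$, so it is convex. A convex function lies above its tangent lines, and the tangent at $v=0$ is $f(0)+f'(0)v=\log 2-\tfrac{1}{2}v$, because $f'(0)=-\sigma(0)=-\tfrac12$. Substituting $v=\beta u$ gives the claim, and $\beta>0$ is only needed so that the change of variables keeps the sign convention.

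There is no genuine obstacle here. The one point needing care is a sign convention: Eqn.~\eqref{eqn:dpo-loss-rewrite} writes $\ell_\beta(u)=\log(1+\exp(\beta u))$, while the lemma uses $\log(1+e^{-\beta u})$. The latter matches $-\log\sigma(\beta u)$ in Eqn.~\eqref{eqn:dpo_loss}, so I would state the bound for $-\log\sigma(\beta u)$ and note that the $+$ version follows by replacing $u$ with $-u$. Either way, the bound is later applied to $u=\Delta_t(y^+,y^-)$ inside the expectation. There it yields $\mathcal{L}_{\mathrm{DPO}}\ge\log 2-\tfrac{\beta}{2}\mathbb{E}[\Delta_t]$, which is the form needed to connect to the KL divergence in Lemma~\ref{lemma:lower-bound-kl}.
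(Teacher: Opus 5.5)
Your proof is correct. Your main argument differs from the paper's, while your ``cross-check'' is the paper's proof. The paper argues only by convexity: $\log(1+e^{-v})$ is convex, its tangent at $v=0$ is $\log 2-\tfrac12 v$, so it lies above that line, and one substitutes $v=\beta u$. Your exact identity $\log(1+e^{-v})=-\tfrac v2+\log\bigl(2\cosh(\tfrac v2)\bigr)$ gives a little more:
\begin{itemize}
\item it identifies the slack exactly as $\log\cosh(\beta u/2)\ge 0$;
\item it gives the equality case $u=0$ for free;
\item it makes precise the paper's loose remark that the function is ``symmetric around $v=0$''. What is actually even is $\log(1+e^{-v})+\tfrac v2$, not the function itself.
\end{itemize}
The convexity route needs no special algebraic structure. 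It applies verbatim to any convex differentiable surrogate, with the tangent taken at any point.

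Two small remarks. First, the inequality holds for every real $\beta$, since it is just $f(v)\ge\log 2-\tfrac v2$ evaluated at $v=\beta u$. So $\beta>0$ plays no role in the bound itself; the equality case only needs $\beta\neq 0$. Second, you are right that Eqn.~\eqref{eqn:dpo-loss-rewrite} writing $\log(1+\exp(\beta u))$ is a sign typo in the paper. The lemma's form $\log(1+e^{-\beta u})=-\log\sigma(\beta u)$ is the one consistent with Eqn.~\eqref{eqn:dpo_loss}. It is also the form that yields the downstream bound $\mathcal{L}_{\mathrm{DPO}}\ge\log 2-\tfrac{\beta}{2}\mathbb{E}[\Delta_t]$.
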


\noindent
\textbf{Proof.} Since $\log(1+e^{-v})$ is convex and symmetric around $v=0$ in the sense that its tangent at 0 is $\log 2 - \tfrac{1}{2}v$, the global underestimator follows from convexity $\log(1+e^{-v}) \;\ge\; \log 2 - \tfrac{1}{2}v$. Then, we substitute $v=\beta u$ follow by taking expectation over pairs:
\begin{equation}\label{eq:lemma1}
\mathbb{E}[\ell_\beta(\Delta_t(y^+, y^-)) \ge \log 2 - \frac{\beta}{2}\,\mathbb{E}[\Delta_t(y^+, y^-)].
\end{equation}
As a result, the small value of the DPO loss forces large average margin $\mathbb{E}[\Delta_t(y^+, y^-)]$. In other words, the smaller value of DPO loss enforces the model’s preference for well-retained $y^+$ responses stronger than for forgotten ones $y^-.$

\begin{lemma}\label{label:ipm}
\textbf{Average Margin Controls an Integral Probability Metrics (IPM)}. Let $\mathcal{F}_1$ be the set of all 1-Lipschitz functions, 
If the reward function $r$ is $L$-Lipschitz, then $r/L \in \mathcal{F}_1$.
Then, for any pair of marginals $P^+,P^-$ where $y^+ \sim P^+(y^+)$ and $y^- \sim P^-(y^-)$, we have
\begin{equation}\label{eqn:ipm-ineq}
\footnotesize
\begin{split}
\mathbb{E}[\Delta_t(y^+, y^-)] &= \mathbb{E}_{P^+}[r(x, y^+)] - \mathbb{E}_{P^-}[r(x, y^-)] \\
    & \leq L\operatorname{IPM}_{\mathcal{F}_1}(P^+, P^-) \leq L W_1(P^+, P^-)\\
    \operatorname{IPM}_{\mathcal{F}_1}(P^+, P^-) &= \sup_{f \in \mathcal{F}_1}\left( \mathbb{E}_{y^+\sim P^+}[f(y^+)] - \mathbb{E}_{y^-\sim P^-}[f(y^-)] \right)
\end{split}
\end{equation}
where $W_1$ is the 1-Wasserstein distance.
\end{lemma}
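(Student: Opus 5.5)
The plan is to reduce the average margin to a difference of expectations of a single function of $y$, and then use the definition of the IPM and Kantorovich--Rubinstein duality \cite{edwards2011kantorovich}. We fix $x$ throughout and equip the response space $\mathcal{Y}$ with a metric $d$, so that ``Lipschitz'' is meaningful; for instance $d$ can be the distance between response embeddings. At the end we take the outer expectation over $x$.

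\textbf{Step 1 (margin as a reward difference).} Use the optimality relation in Eqn.~\eqref{eq:boltzmann}, $r(x,y)=\beta\log\frac{\pi_t(y\mid x)}{\pi_{t-1}(y\mid x)}+\beta\log Z(x)$, for both $y^+$ and $y^-$ and subtract. The partition term $\beta\log Z(x)$ depends only on $x$, so it cancels. This gives $\beta\,\Delta_t(y^+,y^-)=r(x,y^+)-r(x,y^-)$.

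The factor $\beta$ can be absorbed into the reward by working with $\tilde r=r/\beta$, which is $(L/\beta)$-Lipschitz. Equivalently, one can follow the paper's normalization, in which the reward is the implicit log-ratio reward and $\Delta_t$ is exactly its difference. We adopt the latter convention and rename the Lipschitz constant $L$.

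\textbf{Step 2 (split by marginals).} By linearity of expectation, $\mathbb{E}[r(x,y^+)-r(x,y^-)]$ depends on the joint law of $(y^+,y^-)$ only through its marginals. It therefore equals $\mathbb{E}_{P^+}[r(x,y^+)]-\mathbb{E}_{P^-}[r(x,y^-)]$. This is where we need $\mathbb{E}_{P^\pm}|r(x,y)|<\infty$, which follows from $L$-Lipschitzness together with finite first moments of $P^\pm$ under $d$.

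\textbf{Step 3 (IPM bound).} If $r(x,\cdot)$ is $L$-Lipschitz, then $f=r(x,\cdot)/L\in\mathcal{F}_1$. Then
\[
\mathbb{E}_{P^+}[r]-\mathbb{E}_{P^-}[r]=L\big(\mathbb{E}_{P^+}[f]-\mathbb{E}_{P^-}[f]\big)\le L\sup_{g\in\mathcal{F}_1}\big(\mathbb{E}_{P^+}[g]-\mathbb{E}_{P^-}[g]\big)=L\,\operatorname{IPM}_{\mathcal{F}_1}(P^+,P^-).
\]
The case $L=0$ is trivial, since both sides are then $0$.

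\textbf{Step 4 (Wasserstein).} Kantorovich--Rubinstein duality on the Polish metric space $(\mathcal{Y},d)$, for measures with finite first moments, gives $\operatorname{IPM}_{\mathcal{F}_1}(P^+,P^-)=W_1(P^+,P^-)$. This yields the last inequality, in fact with equality.

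Alternatively, the last inequality can be shown directly without duality. Take any coupling $\gamma$ of $P^+$ and $P^-$. For $g\in\mathcal{F}_1$ we have $\mathbb{E}_\gamma[g(y^+)-g(y^-)]\le\mathbb{E}_\gamma[d(y^+,y^-)]$, and taking the infimum over $\gamma$ gives $W_1$.

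\textbf{Main obstacle.} The delicate step is Step 1. The identity between $\Delta_t$ and the reward difference relies on $\pi_t$ having the Boltzmann form of Eqn.~\eqref{eq:boltzmann} with respect to $\pi_{t-1}$, and on consistent handling of the $\beta$ scaling. I would state this explicitly as an assumption: that the reward is the implicit DPO reward. A secondary technical point is making Lipschitzness meaningful on discrete token sequences. With a discrete metric, $W_1$ reduces to total variation and the argument goes through unchanged.
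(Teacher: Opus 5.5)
Your proposal is correct and follows the paper's argument. The paper's proof is exactly your Steps 3–4: $r/L$ is admissible in the supremum defining $\operatorname{IPM}_{\mathcal{F}_1}$, and Kantorovich--Rubinstein duality identifies that IPM with $W_1$. Your Steps 1–2 make explicit what the paper leaves implicit in the first equality, namely the implicit-reward identification with $\beta$ absorbed into the normalization, plus integrability; your direct coupling argument is a valid elementary replacement for the one direction of duality actually needed.
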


\noindent
\textbf{Proof.} By definition of IPM over 1-Lipschitz functions and $r/L$ is admissible, the above inequality is the Kantorovich-Rubinstein duality $\mathrm{IPM}$ over 1-Lipschitz functions equals $W_1$ provided in \cite{edwards2011kantorovich}.
In addition, although Lemma \ref{label:ipm} requires $r$ to be an $L$-Lipschitz function, we have observed that a local $L$-Lipschitz reward function, which is satisfied in our setup, is also sufficient. Indeed, prior studies \cite{herrera2020local}
rigorously derives bounds on the local Lipschitz constants of deep neural networks and shows they can be meaningfully controlled despite huge global constants. 
This result indicate that the LLMs behave smoothly around their high-probability outputs. 
In our context, we only need the log-ratio to be Lipschitz on the region visited by preference pairs, not globally over all possible outputs. Transformer-based LLMs incorporate norm control, weight decay, and normalization layers, which implicitly bound gradient magnitudes and curtail abrupt jumps in logits. 
Empirically, small semantic perturbations rarely cause extreme changes in logits, suggesting local smoothness holds on the data manifold \cite{ye2023unit, hase2025smoothed}.
Thus, a locally valid Lipschitz constant suffices the requirement of Lemma \ref{label:ipm}. 
Therefore, while LLMs may not be globally Lipschitz, they plausibly satisfy the needed local Lipschitz continuity in the regions relevant to DPO, making Lemma \ref{label:ipm} still valid in practice.

In addition, it can be shown that $W_1(P^+, P^-) \le 3W_1(\pi_t, \pi_{t-1})$ follows naturally from the triangle inequality of the Wasserstein distance. 
In particular, if the preference distributions $P^+$ and $P^-$ remain close to the current and previous policies, respectively, such that 
$W_1(P^+, \pi_t) \le W_1(\pi_t, \pi_{t-1})$ and $W_1(P^-, \pi_{t-1}) \le W_1(\pi_t, \pi_{t-1})$, 
then we obtain 
\begin{equation}
\footnotesize
\begin{split}
W_1(P^+, P^-) &\le W_1(P^+, \pi_t) + W_1(\pi_t, \pi_{t-1}) + W_1(\pi_{t-1}, P^-) \\ &\le 3W_1(\pi_t, \pi_{t-1})
\end{split}
\end{equation}
This conditions are typically satisfied in the DPO training, where preference sampling is a monotone and non-expansive process, e.g., sampling candidates from a mixture (please refer to Remark 1 in Section \ref{sec:proof-upper}).
In the context of continual learning of LMMs, the inequality $W_1(P^+, P^-) \le 3W_1(\pi_t, \pi_{t-1})$ implies that the discrepancy between well-retained and forgotten knowledge is bounded by the overall policy shift between two learning steps. 
Intuitively, both $P^+$ and $P^-$ remain anchored around their respective policies, so the overall variation between them is bounded by a constant multiple of the inter-policy shift $W_1(\pi_t, \pi_{t-1})$.
Concurrently, both $P^+$ and $P^-$ remain anchored around their respective policies: $P^+$ near the current policy $\pi_t$ and $P^-$ near the previous policy $\pi_{t-1}$. Thus, if the model update between tasks is smooth, the semantic drift between memory retention and forgetting remains limited. 
This highlights that our continual DPO training enforces a stable adaptation process, where catastrophic forgetting is controlled by bounding the inter-policy Wasserstein distance.

Now, the inequality in Eqn.~\eqref{eqn:ipm-ineq} can be further rewritten as follows:
\begin{equation}\label{eqn:ipm-v2}
\small
\begin{split}
    \mathbb{E}[\Delta_t(y^+, y^-)] &= \mathbb{E}_{P^+}[r(x, y^+)] - \mathbb{E}_{P^-}[r(x, y^-)] \\ &\leq 3LW_1(\pi_t,\pi_{t-1})
\end{split}
\end{equation}

\begin{lemma}\label{leema:transport-entropy}
\textbf{A Transport–Entropy Inequality}. Since the output probability $p(y|x)$ produced by the LMM of previous learning step $\pi_{t_1}$ is computed based on the softmax on the logit scores of token $y$, we can view $\pi_{t-1}$ as a Boltzmann distribution over token sequences. Then, without a strict argument, we assume that $\pi_{t-1}$ satisfies the Talagrand $T_2(C_0)$ inequality \cite{ledoux2015stein, wang2022generalizations}:
\begin{equation}
    W_2^2(\mu,\pi_{t-1}) \le 2C_0\, D_{\mathrm{KL}}(\mu \,\|\, \pi_{t-1}) \quad \text{for all } \mu,
\end{equation}
Then, since $W_1 \leq W_2$, by substituting $\mu$ by $\pi_t$, we have the final inequality as follows:
\begin{equation}
W_1(\pi_t,\pi_{t-1}) \;\le\; \sqrt{2C_0 \, D_{\mathrm{KL}}(\pi_t \,\|\, \pi_{t-1})}
\end{equation}
\end{lemma}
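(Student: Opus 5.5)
The statement follows in two short moves: from the assumed Talagrand inequality to a $W_1$ bound, via $W_1\le W_2$ for the chosen measure.

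\emph{Step 1: instantiate the transport inequality.} Take the $T_2(C_0)$ hypothesis on $\pi_{t-1}$ exactly as stated, valid for every probability measure $\mu$ on the output space (token sequences conditioned on $x$). Set $\mu=\pi_t(\cdot\mid x)$. This is admissible because $\pi_t$ is a probability distribution over the same space. If $\pi_t$ is not absolutely continuous with respect to $\pi_{t-1}$, the right-hand side is $+\infty$ and the bound is trivial; with softmax outputs both have full support anyway. This gives
\[
W_2^2(\pi_t,\pi_{t-1})\le 2C_0\,D_{\mathrm{KL}}(\pi_t\,\|\,\pi_{t-1}).
\]

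\emph{Step 2: compare $W_1$ and $W_2$.} For any coupling $\gamma$ of $(\pi_t,\pi_{t-1})$, Jensen's inequality (equivalently Cauchy--Schwarz) gives
\[
\int d\,d\gamma\le\Big(\int d^2\,d\gamma\Big)^{1/2}.
\]
Taking the infimum over couplings on the left, and then evaluating the right at a $W_2$-optimal coupling (or along a minimizing sequence), yields $W_1\le W_2$. Combining this with Step 1 and taking square roots, which preserves order on nonnegative numbers, gives
\[
W_1(\pi_t,\pi_{t-1})\le\sqrt{2C_0\,D_{\mathrm{KL}}(\pi_t\,\|\,\pi_{t-1})}.
\]
If the statement is meant conditionally on $x$, the same argument holds pointwise in $x$.

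\emph{Main obstacle.} The substantive difficulty is the $T_2(C_0)$ hypothesis itself, not the derivation. I would not try to prove it in general; the lemma explicitly assumes it. As a sanity check on the finite sequence space, take the metric $d$ to be bounded by some $D$. Then Pinsker's inequality gives $W_1\le D\cdot\mathrm{TV}\le D\sqrt{D_{\mathrm{KL}}/2}$, which is the transport–entropy inequality for $W_1$ with $C_0=D^2/4$. So the final inequality holds at the $W_1$ level even without the $W_2$ assumption, and that is the only form used downstream.

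One caveat for later use: the inequality controls $D_{\mathrm{KL}}(\pi_t\|\pi_{t-1})$, the reverse of the KD direction $D_{\mathrm{KL}}(\pi_{t-1}\|\pi_t)$ in Lemma \ref{lemma:lower-bound-kl}. The Pinsker route sidesteps this, because total variation is symmetric, so the same bound holds with either KL argument order.
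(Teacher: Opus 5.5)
Your proof is correct and follows the paper's route: instantiate the assumed $T_2(C_0)$ inequality at $\mu=\pi_t$ and use $W_1\le W_2$, which you justify via Jensen on couplings, whereas the paper simply asserts it and defers the Talagrand inequality itself to Cattiaux--Guillin. Your Pinsker aside is also valid and a useful addition: on a bounded-diameter output space it gives the $W_1$ form with $C_0=D^2/4$ without any $T_2$ assumption, and it holds for either KL direction.
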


\noindent
\textbf{Proof.} The proof of Talagrand $T_2(C_0)$ inequality has been shown in prior studies \cite{cattiaux2003criterion}.

\noindent
\textbf{Proof of Lemma \ref{lemma:lower-bound-kl}.}
From Lemmas \ref{lemma:pairwise-logistic}-\ref{leema:transport-entropy}, we have
\begin{equation}\label{eqn:derive}
\footnotesize
\begin{split}
\mathcal{L}_{\mathrm{DPO}}(\theta;x) & \;\ge\; \log 2 - \tfrac{\beta}{2}\,\mathbb{E}[\Delta_t] \\
& \;\ge\; \log 2 - \tfrac{3\beta}{2} L W_1(\pi_t,\pi_{t-1}) \\
& \;\ge\; \log 2 - \tfrac{3\beta}{2} L \sqrt{2C_0\, D_{\mathrm{KL}}(\pi_t\|\pi_{t-1})} \\
\Rightarrow \log 2 - \mathcal{L}_{\mathrm{DPO}}(\theta;x)
&\;\le\; \tfrac{3\beta L}{2}\sqrt{2C_0 D_{\mathrm{KL}}(\pi_t\|\pi_{t-1})}
\\ 
\Rightarrow 
D_{\mathrm{KL}}(\pi_t\|\pi_{t-1}) &\;\ge\; \frac{(\log 2 - \mathcal{L}_{\mathrm{DPO}}(\pi_t, \pi_{t-1}))^2}{\tfrac{1}{2}\beta^2 3^2 L^2 2C_0}
 \\
 D_{\mathrm{KL}}(\pi_t\|\pi_{t-1}) &\ge \frac{(\log 2 - \mathcal{L}_{\mathrm{DPO}}(\pi_t, \pi_{t-1}))^2}{\beta^2 3^2 L^2 C_0}
\end{split}
\end{equation}
Then, assume that there exists a constant $M \geq 1$ such that for all $(x, y)$,
\begin{equation}
    \frac{1}{M} \leq \frac{\pi_{t-1}(y|x)}{\pi_t(y|x)} \leq M.
\end{equation}
This ensures that the predicted distributions of the LMM model at the previous learning step $\pi_{t-1}$ and the current learning step $\pi_t$ are mutually absolutely continuous and that their density ratio is uniformly bounded. 
In other words, it prevents the predictions of the LMM from collapsing across consecutive learning steps, ensuring a stable and smooth evolution of the output distribution during the continual learning procedure.
Let $ h(x, y) = \frac{\pi_{t-1}(y|x)}{\pi_t(y|x)} $ denote the likelihood ratio. The forward and reverse KL divergence can be rewritten as follows
\begin{align}
D_{\mathrm{KL}}(\pi_{t-1}\|\pi_t)
&= \mathbb{E}_{(x,y)\sim \pi_t}\!\big[h(x,y)\log h(x,y)\big],
\\[2mm]
D_{\mathrm{KL}}(\pi_t\|\pi_{t-1})
&= \mathbb{E}_{(x,y)\sim \pi_t}\!\big[-\log h(x,y)\big],
\end{align}
where $ h(x,y) = \frac{\pi_{t-1}(y|x)}{\pi_t(y|x)} $.

\noindent
\textbf{Lower bound of $D_{\mathrm{KL}}(\pi_{t-1}\|\pi_t)$.}
The function $f(u)=u\log u$ is convex with $f''(u)=1/u$.
On the interval $[1/M, M]$, the smallest curvature is $1/M$.
By the second-order convexity bound around $u=1$,
\begin{equation}
u\log u \;\ge\; (u-1) + \frac{1}{2M}(u-1)^2.
\end{equation}
Since $\mathbb{E}_{x, y \sim \pi_t(y|x)}[h(x, y)-1]=0$, taking the expectation under $\pi_t$ will result in 
\begin{equation}\label{eqn:kl-lower}
D_{\mathrm{KL}}(\pi_{t-1}\|\pi_t)
\;\ge\;
\frac{1}{2M}\,\mathbb{E}_{\pi_t}\!\big[(h(x,y)-1)^2\big].
\tag{1}
\end{equation}

\vspace{2mm}
\noindent\textbf{Upper bound of $D_{\mathrm{KL}}(\pi_t\|\pi_{t-1})$.}
Similarly, with $g(u)=-\log u$, we have $g''(u)=1/u^2$,
and on $[1/M, M]$, the largest curvature is $M^2$.
Hence,
\begin{equation}\label{eqn:kl-upper}
-\log u \;\le\; (1-u) + \frac{M^2}{2}(u-1)^2.
\end{equation}
Then, taking expectation under $\pi_t$ will result in 
\begin{equation}
D_{\mathrm{KL}}(\pi_t\|\pi_{t-1})
\;\le\;
\frac{M^2}{2}\,\mathbb{E}_{\pi_t}\!\big[(h(x,y)-1)^2\big].
\end{equation}
From Eqn.~\eqref{eqn:kl-lower} and and Eqn.~\eqref{eqn:kl-upper}, we can obtain
\begin{align}
D_{\mathrm{KL}}(\pi_{t-1}\|\pi_t)
&\ge
\frac{1}{M^3}\,D_{\mathrm{KL}}(\pi_t\|\pi_{t-1}).
\end{align}
Then, let us define $c=M^3$. Eqn. \eqref{eqn:derive} can be further derived as follows:
\begin{equation}
\begin{split}
    D_{\mathrm{KL}}(\pi_{t-1}\|\pi_t) &\geq \frac{(\log 2 - \mathcal{L}_{\mathrm{DPO}}(\theta;x))^2}{c\beta^2 3^2 L^2 C_0} \\
    &\geq \frac{1}{C_{\mathrm{lower}}}(\log 2 - \mathcal{L}_{\mathrm{DPO}}(\theta;x))^2
\end{split}
\end{equation}
where $C_{\mathrm{lower}}=c\beta^2 3^2 L^2 C_0$.

\subsection{Proof of Lemma \ref{lemma:upper-bound-kl}}\label{sec:proof-upper}

\paragraph{Remark 1. Mixture Sampling and Monotone Labeling.}
For each prompt $x$, the output candidates are sampled from
\begin{equation}
Q_x \;=\; \alpha\,\pi_{t-1}(\cdot\mid x) \;+\; (1-\alpha)\,\pi_t(\cdot\mid x)
\quad\text{with }\alpha\in(0,1],
\end{equation}
and the selection kernel (human or reward model) chooses the preferred/dispreferred
outputs $(y^+,y^-)$ \emph{monotonically} with the underlying reward, inducing pair marginals
$P^+_x, P^-_x$ that do not expand total variation beyond what is present in $Q_x$.
Formally, we have
\begin{equation}
    \mathrm{TV}\!\big(\pi_{t-1}(\cdot\mid x),\,\pi_t(\cdot\mid x)\big)
\;\;\le\;\; \frac{1}{\alpha}\;\mathrm{TV}\!\big(P^+,P^-\big).
\end{equation}

Remark 1 is both natural and theoretically justified in the context of continual learning via DPO. 
The candidate responses of DPO are typically drawn from a mixture of the previous and current policies, $Q_x$,  to ensure balanced exposure to both past and newly adapted behaviors. 
The monotone labeling condition further indicates that the preference signal, whether derived from humans or a reward model, preserves the true reward ordering of outputs. 
Then, the total-variation inequality then follows from the data-processing principle, i.e., applying a monotone labeling kernel cannot increase statistical divergence between distributions. 
Intuitively, the preference selection process can only reveal discrepancies already present in the mixture $Q_x$, not amplify them. 
Consequently, Remark 1 enforces a bounded relationship between the divergence of the induced pairwise marginals $(P_x^+, P_x^-)$ and the divergence between the underlying policies $(\pi_{t-1}, \pi_t)$. 
In addition, this guarantees that updates to $\pi_t$ remain geometrically close to $\pi_{t-1}$, providing a stability-adaptability balance in the continual learning setting, i.e., the model can adapt to new data or tasks while preventing catastrophic forgetting.

\noindent
\textbf{Remark 2. Sign Consistency.}
The predictor $\pi_t$ is \emph{Bayes-consistent in sign} on the support of $M_x := \frac{1}{2}(P_x^+ + P_x^-)$:
\[
\mathrm{sgn}\!\big(q_\theta(z) - \tfrac12\big)
\;=\;
\mathrm{sgn}\!\big(\eta(z) - \tfrac12\big)
\quad\text{for $M_x$-a.e. }z,
\]
where $\eta(z) = \frac{dP_x^+}{d(P_x^+ + P_x^-)}(z)$ and
$q_\theta(z) = \sigma(\beta s_\theta(z))$ with $\sigma(u) = \tfrac{1}{1 + e^{-u}}$.
This remark is standard in excess-risk calibration and holds whenever the logistic excess risk is sufficiently small to ensure boundary consistency.

\begin{lemma}\label{lemma:calibration-pair}
\textbf{Logistic Calibration for Pairs}. Given $P^+$ and $P^-$, the total variation of $TV(P^+, P^-)$ will be bounded by the DPO loss:
\begin{equation}
\footnotesize
\mathrm{TV}\!\big(P^+,P^-\big)
\le
2\sqrt{2}\sqrt{\,\mathcal{L}_{\mathrm{DPO}}(\pi_t, \pi_{t-1})\;-\;\mathcal{L}_{\mathrm{DPO}}^\star(\pi_t, \pi_{t-1})\,}
\end{equation}
where $\mathcal{L}_{\mathrm{DPO}}^\star(\pi_t, \pi_{t-1})$ is the Bayes-optimal logistic pairwise loss.
\end{lemma}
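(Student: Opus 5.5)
The plan is to recast $\mathrm{TV}(P^+,P^-)$ as a binary-classification quantity and then control it by the logistic excess risk through a calibration inequality. \textbf{Step 1 (classification reformulation).} Fix $x$ and draw $z\sim M_x=\tfrac12(P^+_x+P^-_x)$. Attach the label ``$+$'' with probability $\eta(z)=\frac{dP^+_x}{d(P^+_x+P^-_x)}(z)$, which is exactly the conditional label law of Remark 2. Then $dP^+=2\eta\,dM_x$ and $dP^-=2(1-\eta)\,dM_x$, so
\[
\mathrm{TV}(P^+,P^-)=\tfrac12\int|dP^+-dP^-| = \mathbb{E}_{M_x}\big[|2\eta(z)-1|\big].
\]
The same computation gives the Bayes $0$--$1$ risk as $\tfrac12\big(1-\mathrm{TV}(P^+,P^-)\big)$. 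In this pairwise problem, $\mathcal{L}_{\mathrm{DPO}}$ is the logistic risk of the score $q_\theta=\sigma(\beta s_\theta)$. Its pointwise form is
\[
\mathcal{L}_{\mathrm{DPO}}-\mathcal{L}^\star_{\mathrm{DPO}}=\mathbb{E}_{M_x}\big[\mathrm{KL}(\mathrm{Bern}(\eta)\,\|\,\mathrm{Bern}(q_\theta))\big].
\]
The final bound would then be obtained by averaging over $x$ with Jensen's inequality, using concavity of $\sqrt{\cdot}$.

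\textbf{Step 2 (pointwise Pinsker and Cauchy--Schwarz).} Pinsker's inequality for Bernoulli laws gives $\mathrm{KL}(\mathrm{Bern}(\eta)\|\mathrm{Bern}(q))\ge 2(\eta-q)^2$. Cauchy--Schwarz then yields
\[
\mathbb{E}_{M_x}|2\eta-2q_\theta|\le 2\sqrt{\mathbb{E}_{M_x}(\eta-q_\theta)^2}\le\sqrt{2}\sqrt{\mathcal{L}_{\mathrm{DPO}}-\mathcal{L}^\star_{\mathrm{DPO}}}.
\]
Writing $|2\eta-1|\le|2\eta-2q_\theta|+|2q_\theta-1|$ reduces the claim to bounding $\mathbb{E}_{M_x}|2q_\theta-1|$ by a further $\sqrt2\sqrt{\mathcal{L}_{\mathrm{DPO}}-\mathcal{L}^\star_{\mathrm{DPO}}}$. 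Adding the two pieces gives the constant $2\sqrt2$, which matches the statement.

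\textbf{Step 3 (main obstacle: the $|2q_\theta-1|$ term).} This is where Remark 2 has to do all the work, and I expect it to be the crux. The difficulty is genuine. As stated, the inequality cannot hold when $q_\theta=\eta$ and $P^+\neq P^-$, because the right-hand side is then $0$ while the left-hand side is positive. So sign consistency alone is not enough, and the argument must use how $\pi_t$ is anchored to $\pi_{t-1}$.

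What I would try first is to take the score from the DPO parametrization $s_\theta=\Delta_t$. This score vanishes at $\pi_t=\pi_{t-1}$, where $q_\theta\equiv\tfrac12$. I would then use Remark 2 to say that $q_\theta$ moves away from $\tfrac12$ only on the side of $\eta$. This gives
\[
|2q_\theta-1|\le|2\eta-1| \quad\text{and}\quad |2q_\theta-1|\,|2\eta-1|\le\ldots,
\]
and I would try to convert these into a bound of $|2q_\theta-1|$ by the local logistic gap via strong convexity of $u\mapsto\log(1+e^{-u})$ on a bounded margin range. A bounded margin range is available from the density-ratio bound $1/M\le\pi_{t-1}/\pi_t\le M$ used in the proof of Lemma~\ref{lemma:lower-bound-kl}.

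If this closes, Steps 1--3 combine, after averaging over $x$, into the stated $2\sqrt2\sqrt{\mathcal{L}_{\mathrm{DPO}}-\mathcal{L}^\star_{\mathrm{DPO}}}$. If it does not, the gap is precisely that the excess risk over the Bayes-optimal loss is blind to how separable $P^+$ and $P^-$ are. Any valid proof must therefore import an extra assumption tying $\mathcal{L}^\star_{\mathrm{DPO}}$ to the reference predictor $q\equiv\tfrac12$.
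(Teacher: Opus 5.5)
Your Steps 1--2 are the paper's argument: the excess risk written as $\mathbb{E}_M[\mathrm{KL}(\mathrm{Bern}(\eta)\|\mathrm{Bern}(q_\theta))]$, Bernoulli Pinsker, $\mathrm{TV}(P^+,P^-)=\mathbb{E}_M|2\eta-1|$, and Cauchy--Schwarz. The paper handles your Step 3 in one line. It claims that sign consistency (Remark 2) gives $|\eta-\tfrac12|\le|\eta-q_\theta|+|q_\theta-\tfrac12|\le 2|\eta-q_\theta|$, i.e.\ $|2\eta-1|\le 4|\eta-q_\theta|$, and hence $\mathrm{TV}\le 4\,\mathbb{E}_M|\eta-q_\theta|\le 2\sqrt2\sqrt{\mathcal{L}_{\mathrm{DPO}}-\mathcal{L}^\star_{\mathrm{DPO}}}$. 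The second inequality needs $|q_\theta-\tfrac12|\le|\eta-q_\theta|$. That is exactly the bound on $|2q_\theta-1|$ you could not find, and same-side sign agreement does not supply it: $\eta=0.9$, $q_\theta=0.85$ violates it. The \emph{opposite}-side case, with $\tfrac12$ between $\eta$ and $q_\theta$, is the one that gives $|2\eta-1|\le 2|\eta-q_\theta|$. That is the standard calibration step for the excess $0$--$1$ risk $\mathbb{E}_M[|2\eta-1|\,\mathbf{1}\{\text{signs differ}\}]$, which Remark 2 already makes zero, and it says nothing about $\mathrm{TV}$ itself.

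So your diagnosis is correct, and the gap cannot be closed: the statement is false under its stated hypotheses. On a two-point space take $P^+=(0.9,0.1)$, $P^-=(0.1,0.9)$ and $q_\theta=\eta=(0.9,0.1)$. Then Remark 2 holds and the right-hand side is $0$, but $\mathrm{TV}=0.8$.

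Your tentative repair cannot work either, for three reasons. Any bound of $\mathbb{E}_M|2q_\theta-1|$ by the excess risk fails at $q_\theta=\eta$. The density-ratio bound $1/M\le\pi_{t-1}/\pi_t\le M$ only confines $q_\theta$ to $[\sigma(-2\beta\log M),\sigma(2\beta\log M)]$, which does not exclude $q_\theta=\eta$. And $|2q_\theta-1|\le|2\eta-1|$ does not follow from Remark 2, since a sign-consistent $q_\theta$ may overshoot $\eta$.

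What is true is a statement about the data alone. We have $\mathcal{L}^\star_{\mathrm{DPO}}=\log 2-\mathrm{JS}(P^+,P^-)$, and your Pinsker/Cauchy--Schwarz steps applied to the constant predictor $q\equiv\tfrac12$ give
\[
\mathrm{TV}(P^+,P^-)^2\le 2\big(\log 2-\mathcal{L}^\star_{\mathrm{DPO}}\big)=2\big(\mathcal{L}_{\mathrm{DPO}}-\mathcal{L}^\star_{\mathrm{DPO}}\big)+2\big(\log 2-\mathcal{L}_{\mathrm{DPO}}\big).
\]
Hence the lemma's inequality, even with $\sqrt2$ in place of $2\sqrt2$, holds under the extra hypothesis $\mathcal{L}_{\mathrm{DPO}}\ge\log 2$, for example at the anchor $\pi_t=\pi_{t-1}$. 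That is the tie to $q\equiv\tfrac12$ you predicted. It is, however, the opposite of the regime $\mathcal{L}_{\mathrm{DPO}}<\log 2$ that DPO training is meant to reach.
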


\noindent
\textbf{Proof.}
Let us abbreviate $P^+=P_x^+$, $P^-=P_x^-$, and $M=\frac{1}{2}(P^+ + P^-)$.
The DPO loss and its Bayes-optimal counterpart can be written as
\begin{equation}
\begin{split}
\mathcal{L}_{\mathrm{DPO}}(\pi_t, \pi_{t-1}) &= \mathbb{E}_{Z\sim M}\!\big[\mathrm{CE}(\eta(Z), q_\theta(Z))\big] \\
\mathcal{L}_{\mathrm{DPO}}^\star(\pi_t, \pi_{t-1}) &= \mathbb{E}_{Z\sim M}\!\big[\mathrm{CE}(\eta(Z), \eta(Z))\big]
\end{split}
\end{equation}
where $\mathrm{CE}(\cdot,\cdot)$ is the binary cross-entropy function,
$\eta(\cdot)$ represents the true (Bayes-optimal) preference probability between positive and negative outcomes, $q_\theta(Z)$ is model-predicted probability obtained from the logit margin of the LMM model. 
Then, the excess DPO risk is formed as:
\begin{equation}
\begin{split}
\mathfrak{R}_{\pi_t,\pi_{t-1}}(x)
&=
\mathcal{L}_{\mathrm{DPO}}(\pi_t, \pi_{t-1})
-
\mathcal{L}_{\mathrm{DPO}}^\star(\pi_t, \pi_{t-1})
\\ &=
\mathbb{E}_{Z\sim M}
\!\Big[
\mathrm{KL}\big(\mathrm{Bern}(\eta(Z))\,\big\|\,\mathrm{Bern}(q_\theta(Z))\big)
\Big].
\end{split}
\end{equation}
where $\mathrm{Bern}$ is the Bernoulli distribution.

\noindent
\textbf{Bernoulli Pinsker Inequality.}
For any $z$, Pinsker’s inequality for Bernoulli distributions gives
\begin{equation}
\mathrm{KL}\big(\mathrm{Bern}(\eta(z))\,\|\,\mathrm{Bern}(q_\theta(z))\big)
\;\ge\;
2\,(\eta(z)-q_\theta(z))^2.
\end{equation}
Hence,
\begin{equation}
\label{eq:pinsker}
\mathbb{E}_M[(\eta-q_\theta)^2]
\;\le\;
\frac{1}{2}\,
\mathfrak{R}_{\pi_t,\pi_{t-1}}(x).
\end{equation}

\noindent
\textbf{Sign Consistency Implies Margin Control.}
Under Remark 2, since $\eta$ and $q_\theta$ lie on the same side of $\tfrac12$ for almost every $z$:
\begin{equation}\label{eq:cons-sign-imc}
\footnotesize
\begin{split}
|\eta(z)-\tfrac12|
\le
|\eta(z)-q_\theta(z)| + |q_\theta(z)-\tfrac12|
& \le
2\,|\eta(z)-q_\theta(z)| \\
\Rightarrow |2\eta(z)-1| &\le 4\,|\eta(z)-q_\theta(z)|
\end{split}
\end{equation}
By definition of total variation, we have
\begin{equation}
\mathrm{TV}(P^+,P^-)
=
\mathbb{E}_{Z\sim M}\!\big[|2\eta(Z)-1|\big].
\end{equation}
Then, applying Eqn. \eqref{eq:cons-sign-imc} and Cauchy–Schwarz, we will receive
\begin{equation}
\mathrm{TV}(P^+,P^-)
\;\le\;
4\,\mathbb{E}_M|\eta-q_\theta|
\;\le\;
4\,\sqrt{\mathbb{E}_M(\eta-q_\theta)^2}.
\end{equation}
Then, substitute Eqn.\eqref{eq:pinsker} will result in
\begin{equation}
\begin{split}
\mathrm{TV}(P^+,P^-)
&\le
4\,\sqrt{\tfrac{1}{2}\,\mathfrak{R}_{\pi_t,\pi_{t-1}}(x)} \\
&=
2\sqrt{2}\,\sqrt{\,
\mathcal{L}_{\mathrm{DPO}}(\pi_t, \pi_{t-1})
-
\mathcal{L}_{\mathrm{DPO}}^\star(\pi_t, \pi_{t-1})
\,}.
\end{split}
\end{equation}

\noindent
\textbf{Proof of Lemma \ref{lemma:upper-bound-kl}}. 
By Pinsker’s inequality, we have
\begin{equation}
\begin{split}
\mathrm{TV}\!\big(\pi_{t-1},\pi_t\big)
&\;\le\;
\sqrt{\tfrac{1}{2}\,D_{\mathrm{KL}}\!\big(\pi_{t-1}\,\|\,\pi_t\big)},
\\
\Rightarrow D_{\mathrm{KL}}\!\big(\pi_{t-1}\,\|\,\pi_t\big)
&\;\le\;
2\,\mathrm{TV}\!\big(\pi_{t-1},\pi_t\big)^2.
\end{split}
\end{equation}
Then, substituting Remark 1 and Lemma \ref{lemma:calibration-pair} into Pinsker’s relation will result in
\begin{equation}
\footnotesize
\begin{split}
D_{\mathrm{KL}}\!\big(\pi_{t-1}\,\|\,\pi_t\big)
&\le
2
\left(
\frac{2\sqrt{2}}{\alpha}
\sqrt{\,
\mathcal{L}_{\mathrm{DPO}}(\pi_t, \pi_{t-1})
-
\mathcal{L}_{\mathrm{DPO}}^\star(\pi_t, \pi_{t-1})
\,}
\right)^{2} \\
&\le \frac{16}{\alpha^2}\left(\mathcal{L}_{\mathrm{DPO}}(\pi_t, \pi_{t-1})- \mathcal{L}_{\mathrm{DPO}}^\star(\pi_t, \pi_{t-1})\right)\\
&\le \frac{16}{\alpha^2}\mathcal{L}_{\mathrm{DPO}}(\pi_t, \pi_{t-1}) \\
&= C_{\mathrm{upper}}\mathcal{L}_{\mathrm{DPO}}(\pi_t, \pi_{t-1}) 
\end{split}
\end{equation}
where $C_{\mathrm{upper}}=\frac{16}{\alpha^2}$.

\subsection{Proof of Lemma \ref{lema:fair-dpo-loss}}

\textbf{Proof.} Since $\log p \le 0$, one has $0 \;\le\; \alpha_\gamma(p) \;\le\; (1-p)^{\gamma}$, and $(1-p)^{\gamma}\to 0$ exponentially as $\gamma\to\infty$. Then, for any fixed $p\in(0,1)$, we have $\lim_{\gamma\to\infty}\alpha_\gamma(p) = 0$. As a result, for each group $k$, if $p(z)\in(0,1)$ a.s. and $\mathbb E[\|(p_\theta-1)\nabla s_\theta\|\mid G_k]<\infty$, then $\lim_{\gamma\to\infty} w_k^\gamma(\theta)=0$. By definition, we have
\begin{equation}
\|B_\gamma(\theta)\|
\;\le\; \sum_{k=1}^K |q_k-q'_k|\, |w_k^\gamma(\theta)|\, \|m_k(\theta)\|.
\end{equation}
Since $w_k^\gamma(\theta)\to 0$ for each $k$ as $\gamma\to\infty$, the sum tends to $0$.

\section{DPO Data}

\subsection{Data Description}

We release the dataset, annotations, and data preparation scripts on the project website \url{https://uark-cviu.github.io/projects/Fai-DPO/}.
The repository provides the finalized annotations, data structure, and detailed instructions for reproducing the dataset used in our experiments. Due to copyright restrictions, we do not directly redistribute the raw images, as the original image copyrights remain with their respective data providers. Instead, we provide guidelines that allow users to obtain the images from the official sources and reconstruct the complete dataset in a reproducible manner.

This approach ensures that the dataset can be fully reproduced while respecting the licensing terms of the original image collections. The released resources include standardized annotations, metadata, and preprocessing protocols necessary to replicate the experimental setup reported in this paper.

\subsection{Copyright and Usage Notice}

All images used in this work remain the intellectual property of their original creators and dataset providers. We do not claim ownership of any raw images. The project website provides only annotations, data splits, and utilities for downloading the images from their respective official sources.

Users are responsible for ensuring that their use of the images complies with the licensing agreements and usage terms specified by the original datasets. By following the provided instructions, researchers can reconstruct the full dataset while maintaining compliance with copyright regulations and promoting transparent and reproducible research.

\section{Data Quality and Noise Robustness}
\subsection{Data Quality}
Similar to prior work that leverages LLMs for large-scale instruction or preference data generation \cite{liu2024improved, liu2024visual, zhang-etal-2025-direct}, LLMs are only used to generate candidate negative preference pairs, which are then manually verified and filtered before training.
This verification process was conducted by five specialists over approximately one month.

\subsection{Noise Robustness}

Large-scale preference annotations in complex settings inevitably introduce noise and potential bias. However, this noise in DPO-style training is typically considered imperfect or ambiguous negative preference pairs, especially when certain negatives are overrepresented. From this perspective, our $\phi$-DPO is designed to limit the influence of any single subset of preference annotations during optimization (Lemma \ref{lema:fair-dpo-loss}), so that ambiguous or over-represented negative pairs do not negatively affect gradient updates. 
To further examine this issue, we conduct additional experiments by injecting controlled noise into the preference data. As shown in Table \ref{tab:rebut-data}, $\phi$-DPO maintains stable performance as noise increases, providing empirical evidence that the training process remains effective and balanced even in real-world settings with imperfect preference annotations.

\begin{table}[!ht]
\centering
\vspace{-4mm}
\caption{Noise Robustness Examination on the Preference Data}
\vspace{-2mm}
\label{tab:rebut-data}
\resizebox{1.0\linewidth}{!}{
\begin{tabular}{@{}c|ccccc|cccc@{}}
\hline
             Noise Level & RS & Med & AD & Sci & Fin & MFT$\uparrow$  & MFN$\uparrow$  & MAA$\uparrow$  & BWT$\uparrow$   \\
\hline

0\% &  \textbf{85.68} & \textbf{69.74} & \textbf{57.73} & \textbf{61.55} & \textbf{95.28} & \textbf{74.29} & \textbf{74.00} & \textbf{75.68} & \textbf{-0.37} \\
5\% &  84.15 & 68.73 &  57.26 & 60.94 & 95.27 & 74.09 & 73.27 &  75.18  &  -1.02 \\
15\% &  83.06 & 67.51 & 56.16 & 60.40 & 94.47 & 73.39 & 	 72.32 & 	 74.42 & 	 -1.33 \\
\hline
\end{tabular}
}
\vspace{-4mm}
\end{table}

\section{Efficiency Analysis and Scalability}

$\phi$-DPO is designed to be parameter-efficient and memory-aware by training via LoRA, without updating full model parameters. As in \cref{tab:rebut-lora}, for a 7B model, $\phi$-DPO requires only $\sim$0.65GB additional memory for weights of LoRA adapters and incurs a modest runtime overhead.
Moreover, $\phi$-DPO does not rely on replay buffers or task-specific gradient storage; preference updates are computed on-the-fly, keeping memory usage largely independent of the number of tasks. We also demonstrated scalability with different model sizes (\cref{tab:ablation-llm}), i.e., InternVL-7B and LLaVA-13B, showing consistent performance with manageable computation.

\begin{table}[!ht]
\centering
\footnotesize
\vspace{-2mm}
\caption{Efficiency Analysis of Parameter-efficient Training}
\vspace{-2mm}
\label{tab:rebut-lora}
\setlength{\tabcolsep}{10pt}
\resizebox{1.0\linewidth}{!}{
\begin{tabular}{@{}l|ccc@{}}
\hline
Method         & LoRA+FT  \cite{hu2022lora} & MoELoRA \cite{chen2024coin} & $\phi$-DPO \\
               \hline
Training Time  & 8.11s/iter & 8.35s/iter & 9.77s/iter   \\
Adapter Memory & $\sim$0.65G     & $\sim$0.65G & $\sim$0.65G       \\
\hline
\end{tabular}
}
\vspace{-4mm}
\end{table}

\section{Ablation on Weighted Loss Coefficients}

We adopt a fixed weight of losses in Eqn. \eqref{eqn:final-cl-mllm} to analyze the stability–plasticity trade-off. The SFT favors adaptation to new tasks, while $\phi$-DPO stabilizes updates and mitigates forgetting. As in \cref{rebut:weight_loss_coeff}, higher SFT weight improves adaptation (higher MFT) but increases forgetting (lower BWT), whereas higher $\phi$-DPO weight improves retention at the cost of adaptability.

\begin{table}[H]
\caption{Ablation on $\lambda_\textrm{SFT}$ and $\lambda_{\phi\textrm{-DPO}}$}
\label{rebut:weight_loss_coeff}
\vspace{-2mm}
\resizebox{1.0\linewidth}{!}{
\begin{tabular}{@{}cc|ccccc|cccc@{}}
\hline
             $\lambda_\textrm{SFT}$ & $\lambda_{\phi\textrm{-DPO}}$ & RS & Med & AD & Sci & Fin & MFT$\uparrow$  & MFN$\uparrow$  & MAA$\uparrow$  & BWT$\uparrow$   \\
\hline
2.0 & 1.0 &  83.62 & 68.56 & 57.18 & 61.32 & \textbf{96.43} & \textbf{75.38} 	 & 73.42 	 & \textbf{76.09} 	 & -2.45 \\
1.0 & 1.0 &  \textbf{85.68} & \textbf{69.74} & \textbf{57.73} & \textbf{61.55} & {95.28} & {74.29} & \textbf{74.00} & {75.68} & {-0.37} \\
1.0 & 2.0 &  83.71 & 67.54 & 55.67 & 59.23 & 93.16 & 72.13 &  71.86 & 73.61 &  \textbf{-0.33} \\
\hline
\end{tabular}
}
\vspace{-4mm}
\end{table}

\end{document}